\newcommand{\mc}{\mathcal}
\newcommand{\mbb}{\mathbb}
\newcommand{\mr}{\mathrm}
\newcommand{\argmin}{\mathop{\rm argmin}\limits}
\newcommand{\veca}{\mathbf{a}}
\newcommand{\vecu}{\mathbf{u}}
\newcommand{\vecv}{\mathbf{v}}
\newcommand{\vecw}{\mathbf{w}}
\newcommand{\vecx}{\mathbf{x}}
\newcommand{\vecX}{\mathbf{X}}
\newcommand{\vecz}{\mathbf{z}}
\newcommand{\vecZ}{\mathbf{Z}}
\newcommand{\vecbeta}{\boldsymbol \beta}
\newcommand{\vectheta}{\boldsymbol \theta}
\newcommand{\vecvarrho}{\boldsymbol \varrho}
\renewcommand{\algorithmicrequire}{\textbf{Input:}}
\renewcommand{\algorithmicensure}{\textbf{Output:}}
\numberwithin{equation}{section}
\newtheorem{theorem}{Theorem}[section]
\newtheorem{proposition}{Proposition}[section]
\newtheorem{remark}{Remark}[section]
\newtheorem{lemma}{Lemma}[section]
\newtheorem{definition}{Definition}[section]
\newtheorem{assumption}{Assumption}[section]
\begin{document}

\title{Sparse Linear Regression when Noises and Covariates are Heavy-Tailed and Contaminated by Outliers}
\author{Takeyuki Sasai
\thanks{Department of Statistical Science, The Graduate University for Advanced Studies, SOKENDAI, Tokyo, Japan. Email: sasai@ism.ac.jp}
\and Hironori Fujisawa
\thanks{The Institute of Statistical Mathematics, Tokyo, Japan. 
Department of Statistical Science, The Graduate University for Advanced Studies, SOKENDAI, Tokyo, Japan. 
Center for Advanced Integrated Intelligence Research, RIKEN, Tokyo, Japan. Email:fujisawa@ism.ac.jp}
}
\maketitle
\begin{abstract}
	We investigate a problem estimating coefficients of linear regression under sparsity assumption when covariates and noises are sampled from heavy tailed distributions.
	Additionally, we consider the situation where  not only  covariates and noises are sampled from heavy tailed distributions but also  contaminated by outliers. Our estimators can be computed efficiently, and exhibit sharp error bounds.
\end{abstract}

\section{Introduction}
Sparse estimation has been studied extensively over the past 20 years to handle modern high-dimensional data with \cite{Tib1996Regression} as a starting point.
Because the advancement of computer technology has made it possible to collect very high dimensional data efficiently, sparse estimation will continue to be an important and effective method for high dimensional data analysis in the future.
In this study, we focus on the estimation of coefficients in sparse linear regression. We define sparse linear regression model as follows:
\begin{align}
	\label{model:normal}
	y_i = \vecx_i^\top\vecbeta^*+\xi_i,\quad  i=1,\cdots,n,
\end{align}
where $\left\{\vecx_i\right\}_{i=1}^n$ is a sequence of  independent and identically distributed (i.i.d.) random vectors, $\vecbeta^* \in \mbb{R}^d$ is the true coefficient vector, and $\left\{\xi_i\right\}_{i=1}^n$ is a sequence of i.i.d. random variables, and $\vecbeta^*$ is the sparse regression coefficient with $s(< d)$ non-zero elements.

In this paper, we focus on constructing tractable estimators from the observation $\{y_i\,\vecx_i\}_{i=1}^n$, and deriving non-asymptotic error bounds.
This problem has been considered in many literature.
Many studies dealt with the situation where  $\{\vecx_i\}_{i=1}^n$ and $\{\xi_i\}_{i=1}^n$ are sampled from Gaussian distribution and univariate Gaussian distribution, respectively. Some studies weakens the assumption and  deal with the situation where $\{\vecx_i\}_{i=1}^n$ and $\{\xi_i\}_{i=1}^n$ are sampled from a multivariate sub-Gaussian distribution (Chapter 3 of \cite{Ver2018High}) and univariate sub-Gaussian distribution (Chapter 2 of \cite{Ver2018High}), respectively (e.g.,\cite{Wai2009Sharp, RasWaiYu2010Restricted,BelLecTsy2018Slope}). 

Several studies investigated robust estimation methods with respect to the tail heaviness of data.
For example,  \cite{AlqCotLec2019Estimation, SunZhoFan2020Adaptive} considered the case where $\{\xi_i\}_{i=1}^n$ is sampled from a heavy tailed distribution.
However, very few studies tackled  the case where $\{\vecx_i\}_{i=1}^n$ is sampled from a heavier-tailed distribution than Gaussian and subGaussian \cite{SivBenRav2015Beyond,GenKip2022Generic,FanWanZhu2021Shrinkage,LiuLiCar2019High}.
\cite{SivBenRav2015Beyond,GenKip2022Generic} considered the case where $\{\vecx_i\}_{i=1}^n$ is sampled from a multivariate sub-exponential distribution (Chapter 3 of \cite{Ver2018High}), which is a heavier-tailed distribution than multivariate Gaussian.
\cite{FanWanZhu2021Shrinkage,LiuLiCar2019High} considered a more relaxed assumption where $\{\vecx_i\}_{i=1}^n$ is sampled from a finite kurtosis distribution:
\begin{definition}[Finite kurtosis distribution]
	\label{def:fk}
	A random vector $\vecz\in \mbb{R}^d$ is said to be sampled from a finite kurtosis distribution if for every $\vecv \in \mbb{R}^d$, 
	\begin{align}
		\label{ine:fc}
		\mbb{E}\langle \vecv,\vecz -\mbb{E}\vecz \rangle^4 \leq K^4 \left(\mbb{E}\langle \vecv, \vecz - \mbb{E}\vecz \rangle^2\right)^2\left( = K^4 \|\Sigma^\frac{1}{2}\vecv\|_2^4\right),
	\end{align}
	where $K$ is a constant and  $\mbb{E}(\vecz-\mbb{E}\vecz) (\vecz-\mbb{E}\vecz)^\top = \Sigma$.
\end{definition}
We note that the finite kurtosis distribution contains multivariate sub-Gaussian and sub-exponential distributions.
In the present paper,  we assume the finite kurtosis condition on $\{\vecx_i\}_{i=1}^n$ and we construct an estimator which has properties similar to the ones of  \cite{FanWanZhu2021Shrinkage,LiuLiCar2019High}
under different assumption on $\{\xi_i\}_{i=1}^n$ and different conditions on $\vecbeta^*,\,n,\,s,\,d$ and the covariance of $\{\vecx_i\}_{i=1}^n$.

Another aspect of robustness is robustness to outliers. Estimation  against outliers is very actively studied in recent years \cite{DiaKan2023Algorithmic}. 
To investigate the robustness against heavy tailed distributions and outliers for estimating coefficients in sparse linear regression, we consider the following model:
\begin{align}
	\label{model:adv}
	y_i = \vecX_i^\top\vecbeta^*+\xi_i+\sqrt{n}\theta_i,\quad  i=1,\cdots,n,
\end{align}
where $\vecX_i = \vecx_i+\vecvarrho_i$ for $i=1,\cdots,n$ and $\{\vecvarrho_i\}_{i=1}^n$ and $\{\theta_i\}_{i=1}^n$ are the outliers.
We allow the adversary to inject arbitrary values into arbitral $o$ samples of $\{y_i,\vecx_i\}_{i=1}^n$. Let $\mc{O}$ be the index set of the injected samples and $\mc{I} = (1,\cdots,n)\setminus \mc{O}$. Therefore, $\vecvarrho_i= (0,\cdots,0)^\top$ and $\theta_i=0$ hold for $i\in\mc{I}$. We note  that  $\{\vecvarrho_i\}_{i\in \mc{O}}$ and $\{\theta_i\}_{i\in \mc{O}}$ can be arbitral values and they are allowed to correlate freely among them and correlate with $\{\vecx_i\}_{i=1}^n$ and $\{\xi_i\}_{i=1}^n$.
Some studies considered the problem of constructing estimators of $\vecbeta^*$ from \eqref{model:adv}.
\cite{CheCarMan2013Robust,BalDuLiSin2017Computationally,LiuSheLiCar2020High,SasFuj2022Outlier} dealt with the case where $\{\vecx_i\}_{i=1}^n$ is sampled from a Gaussian or multivariate subGaussian distribution.
However, few studies considered  the case  where the covariates are sampled from a heavy tailed distribution and  contaminated by outliers.  An exception to this is \cite{MerGai2023robust}, and the methods in \cite{MerGai2023robust} assume that $\{\vecx_i\}_{i=1}^n$ sampled from a finite kurtosis distribution.
In the present paper, we consider the case where $\{\vecx_i\}_{i=1}^n$ is sampled from a finite kurtosis distribution, and our estimator attains a shaper error bound than the one of \cite{MerGai2023robust}.

In Section \ref{sec:no}, we describe a method to estimate $\vecbeta^*$ from \eqref{model:normal}, and present our result, and introduce preceding work related to our result.
In Section \ref{sec:om}, we describe a method to estimate $\vecbeta^*$ from \eqref{model:adv}, and present our result, and introduce preceding works related to our result.
Proofs of the statements in the main text are given in the appendix.

\section{Method, result and related work}
\label{sec:no}
In Section \ref{sec:no}, we consider the case where there are no outliers.
That is, consider the problem of estimating $\vecbeta^*$ in \eqref{model:normal}.
Before presenting our method and result precisely, we roughly introduce our result.
For linear regression problem \eqref{model:normal} with heavy-tailed $\{\vecx_i,\xi_i\}_{i=1}^n$, our estimator $\hat{\vecbeta}$ satisfies
\begin{align}
	\mbb{P}\left(\|\Sigma^\frac{1}{2}(\hat{\vecbeta} -\vecbeta^*)\|_2 \leq C_{\{\vecx_i,\xi_i\}_{i=1}^n}\sqrt{s\frac{\log (d/\delta)}{n}}\right) \geq 1-\delta,
\end{align}
where $C_{\{\vecx_i,\xi_i\}_{i=1}^n}$ is a constant depending on the moment properties of $\{\vecx_i,\xi_i\}_{i=1}^n$. Our method is based on simple thresholding for $\{\vecx_i\}_{i=1}^n$ and  $\ell_1$-penalized Huber loss, and our estimator is tractable. The result is similar to the one of normal lasso under Gaussian data. The difference between our results and the case of normal lasso under Gaussian data  is discussed in Section \ref{sec:no:re}.

\subsection{Method}
For the problem of estimating $\vecbeta^*$ from  \eqref{model:normal}, we propose ROBUST-SPARSE-ESTIMATION I (Algorithm \ref{nooutlier}).
\begin{algorithm}
	\caption{ROBUST-SPARSE-ESTIMATION I}
	\begin{algorithmic}[1]
			\label{nooutlier}
		\renewcommand{\algorithmicrequire}{\textbf{Input:}}
		\renewcommand{\algorithmicensure}{\textbf{Output:}}
		\REQUIRE $\left\{y_i,\vecx_i\right\}_{i=1}^n$ and the tuning parameters $\tau_\vecx,\,\lambda_o$ and $\lambda_s$
		\ENSURE  $\hat{\vecbeta}$
		\STATE $\{\tilde{\vecx}_i\}_{i=1}^n \leftarrow \text{THRESHOLDING}(\left\{\vecx_i\right\}_{i=1}^n,\tau_\vecx)$
		\STATE $\hat{\vecbeta} \leftarrow \text{PENALIZED-HUBER-REGRESSION}\left(\{y_i,\tilde{\vecx}_i\}_{i=1}^n,\,\lambda_o,\,\lambda_s\right)$
	\end{algorithmic} 
\end{algorithm}

THRESHOLDING is a procedure to make covariates bounded, which is originated from \cite{FanWanZhu2021Shrinkage,KeMinRenZhaQia2019User}. By bounding the covariates through THRESHOLDING, it becomes possible to derive concentration inequalities with sufficient sharpness to handle sparsity.
PENALIZED-HUBER-REGRESSION is a Huber regression with  $\ell_1$-norm  penalization.
From the preceding works \cite{NguTra2012Robust,DalTho2019Outlier,SunZhoFan2020Adaptive}, the $\ell_1$-penalized Huber regression has robustness against heavy-tailed noise ($\{\xi\}_{i=1}^n$) and outliers in the output, while also being capable of handling the sparsity of  the coefficient.

In Sections \ref{sec:t:no} and \ref{sec:phr}, we describe the details of THRESHOLDING and 
PENALIZED-HUBER-REGRESSION, respectively.
\subsubsection{THRESHOLDING}
\label{sec:t:no}
Define the $j$-th element of $\vecx_i$ as $\vecx_{i_j}$.  
THRESHOLDING (Algorithm \ref{alg:t})  makes the covariates bounded to obtain sharp concentration inequalities.
\begin{algorithm}[H]
	\caption{THRESHOLDING}
	\label{alg:t}
	\begin{algorithmic}
	\REQUIRE{data $\{\vecx_i \}_{i=1}^n$ and tuning parameter $\tau_\vecx$.}
	\ENSURE{thresholded data $\{\tilde{\vecx}_i \}_{i=1}^n$.}\\
	{\bf For} {$i=1:n$}\\
		\ \ \ \ \ {\bf For} {$j=1:d$}\\
		\ \ \ \ \ \ \ \ \ $\tilde{\vecx}_{i_j}$ = $\mr{sgn}(\vecx_{i_j})\times \min\left(|\vecx_{i_j}|,\tau_\vecx\right)$\\
	{\bf return}	$\{\tilde{\vecx}_i \}_{i=1}^n$.    
\end{algorithmic}
\end{algorithm}

\subsubsection{PENALIZED-HUBER-REGRESSION}
\label{sec:phr}
PENALIZED-HUBER-REGRESSION (Algorithm \ref{alg:H})  is a type of regression using the Huber loss with $\ell_1$ penalization.
Define the Huber loss function as 
\begin{align}
H(t) = \begin{cases}
|t| -1/2 & (|t| > 1) \\
t^2/2  & (|t| \leq 1)
\end{cases},
\end{align}
and let
\begin{align}
	h(t) =	\frac{d}{dt} H(t) =   \begin{cases}
	t\quad &(|t|> 1)\\
	\mr{sgn}(t)\quad &(|t| \leq 1)
\end{cases}.
\end{align}
We consider the following optimization problem.
For any vector $\vecv$, define the $\ell_1$ norm of $\vecv$ as $\|\vecv\|_1$.
\begin{algorithm}[H]
	\caption{PENALIZED-HUBER-REGRESSION}
	\label{alg:H}
	\begin{algorithmic}
	\REQUIRE{Input data $\left\{y_i,\vecx_i\right\}_{i=1}^n$ and tuning parameters $\lambda_o, \lambda_s$.}
	\ENSURE{estimator $\hat{\vecbeta}$.}\\
	Let $\hat{\vecbeta}$ be the solution to 
	\begin{align}
		\argmin_{\vecbeta  \in \mbb{R}^d} \sum_{i=1}^n \lambda_o^2 H\left(\frac{y_i-\vecx_i^\top\vecbeta}{\lambda_o\sqrt{n}}\right)+\lambda_s\|\vecbeta\|_1,
		\end{align}
	 {\bf return}	$\hat{\vecbeta}$.    
\end{algorithmic}
\end{algorithm}

\subsection{Result}
Before we state our first main result, we introduce the restricted eigenvalue condition of the covariance matrix, which is often used in the context of sparse estimation \cite{BulGee2011Statistics}.
For a vector $\vecv$ and an index set $J$, define  $\vecv_I$ as the vector such that the $i$-th element of $\vecv_I$ and $\vecv$ is equal for $i \in J$ and $i$-th element of $\vecv_J$ is zero for $i \notin J$.
\begin{definition}[Restricted eigenvalue condition of the covariance matrix]
	\label{def:resc}
	A covariance matrix $\Sigma$ is said to satisfy the restricted eigenvalue condition $\mr{RE}(s,c_{\mr{RE}},\kappa)$ with some constants $c_{\mr{RE}}, \kappa >0$ if $\|\Sigma^\frac{1}{2}\vecv\|_2 \geq \kappa \|\vecv_{J}\|_2$ for any  $\vecv\in \mbb{R}^d$ and any set $J\subset \{1,\cdots,d\}$  such that $|J|\leq s$ and $\|\vecv_{J^c}\|_1\leq c_{\mr{RE}}\|\vecv_J\|_1$.
\end{definition}

For a $\vecv \in \mbb{R}^d$, define $\|\vecv\|_0$   as the number of non-zero elements of $\vecv$. 
Then, we state our assumption.
\begin{assumption}
	\label{a:1}
	Assume that 
	\begin{itemize}
		\item[(i)] $\{\vecx_i\}_{i=1}^n$  is a sequence of i.i.d. $d\, (\geq 3)$-dimensional  random vectors with $\mbb{E}\vecx_i = 0$, finite kurtosis with $K$ and  $\max_{1\leq j\leq d} \mbb{E}\vecx_{i_j}^2 \leq 1$. Additionally, $\mbb{E}\vecx_i \vecx_i^\top = \Sigma$ satisfies $\mr{RE}(s,c_{RE},\kappa)$, and $\min_{\vecv \in \mbb{S}^{d-1}, \|\vecv\|_0\leq s} \vecv^\top \Sigma \vecv \geq \kappa^2_{\mr{l}} >0$, 
		\item[(ii)] $\{\xi_i\}_{i=1}^n$ is a sequence of i.i.d. random variables such that $\mbb{E}|\xi_i|\leq \sigma$,
		\item[(iii)] $\mbb{E}h\left(\frac{\xi_i}{\lambda_o\sqrt{n}}\right) \times \vecx_i=0$.
	\end{itemize}
\end{assumption}
Note that assumption (iii) is weaker than independence between $\{\vecx_i\}_{i=1}^n$ and $\{\xi_i\}_{i=1}^n$.
Define $j=1,\cdots,d$, and $\max_{1\leq j\leq d} |\vecbeta^*_j|=c_{\vecbeta}$.
Our first result is the following.
\begin{theorem}
	\label{t:main:no}
	Suppose that Assumption \ref{a:1} holds. 
	Suppose that the parameters $\tau_\vecx\,,\lambda_o$ and $\lambda_s$ satisfy 
	\begin{align}
		\label{ine:tp1:no}
		\tau_\vecx =\sqrt{\frac{n}{\log(d/\delta)}},\quad
		\lambda_o  \sqrt{n} \geq 18K^4(\sigma+1),\quad
		\lambda_s=  c_s \frac{c_{\mr{RE}}+1}{c_{\mr{RE}}-1}\lambda_o \sqrt{n}\sqrt{\frac{\log (d/\delta)}{n}},
	\end{align}
	where  $c_s\geq  16$, and $r_\Sigma,\,r_1$ and $r_2$ satisfy
	\begin{align}
		\label{ine:tp1:no:r}
		r_\Sigma = 12 c_{r_1} \sqrt{s} \lambda_s,\quad
		r_1 =  c_{r_1} \sqrt{s}r_\Sigma, \quad r_2 =  c_{r_2}r_\Sigma,
	\end{align} 
  where $c_{r_1} = c_r(1+c_{\mr{RE}})/\kappa$, $c_{r_2} = c_r(1+c_{\mr{RE}})/\kappa_\mr{l}$ and  $c_r\geq 6$.
	Assume that  $r_\Sigma\leq 1$ and 
	\begin{align}
		\label{ine:tp2:no}
			\max \left\{9K^2 c_{r_1}\sqrt{s\frac{\log(d/\delta)}{n}},\frac{K^4}{c_{r_1}}c_{\vecbeta}^\frac{1}{4} s^\frac{1}{4}\left(\frac{\log(d/\delta)}{n}\right)^{ \frac{7}{8}},\|\vecbeta^*\|_1  K^4 \left(\frac{\log(d/\delta)}{n}\right)^\frac{3}{2}\right\}\leq 1.
	\end{align}
	Then,	with probability at least $1-2\delta$,
	the output of ROBUST-SPARSE-ESTIMATION I $\hat{\vecbeta}$ satisfies
	\begin{align}
		\label{ine:result:no}
		\|\Sigma^\frac{1}{2}(\hat{\vecbeta} -\vecbeta^*)\|_2 & \leq r_\Sigma,\quad \|\hat{\vecbeta} -\vecbeta^*\|_1  \leq r_1,\quad \|\hat{\vecbeta} -\vecbeta^*\|_2  \leq r_2.
	\end{align}
\end{theorem}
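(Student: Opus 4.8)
The plan is to run the standard program for $\ell_1$-penalized convex $M$-estimators, adapted to the complications that the fit uses the thresholded covariates $\tilde\vecx_i$ (so every residual carries a bias $b_i:=(\vecx_i-\tilde\vecx_i)^\top\vecbeta^*$) and that only fourth-moment control on $\vecx_i$ and first-moment control on $\xi_i$ are available. Write $\hat\vecdelta:=\hat\vecbeta-\vecbeta^*$, $S:=\mr{supp}(\vecbeta^*)$ (so $|S|\le s$), $L_H(\vecbeta):=\sum_i\lambda_o^2 H(u_i(\vecbeta))$ with $u_i(\vecbeta):=(y_i-\tilde\vecx_i^\top\vecbeta)/(\lambda_o\sqrt n)$, so that $\nabla L_H(\vecbeta)=-\frac{\lambda_o}{\sqrt n}\sum_i\tilde\vecx_i h(u_i(\vecbeta))$ with $|h|\le 1$ and $u_i(\vecbeta^*)=(\xi_i+b_i)/(\lambda_o\sqrt n)$. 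First-order optimality of the convex objective gives $\hat z\in\partial\|\hat\vecbeta\|_1$ with $\nabla L_H(\hat\vecbeta)+\lambda_s\hat z=0$; pairing with $\hat\vecdelta$, using the subgradient inequality for $\|\cdot\|_1$ and monotonicity of $\nabla L_H$, one gets $0\le\langle\nabla L_H(\hat\vecbeta)-\nabla L_H(\vecbeta^*),\hat\vecdelta\rangle\le\|\nabla L_H(\vecbeta^*)\|_\infty\|\hat\vecdelta\|_1+\lambda_s(\|\hat\vecdelta_S\|_1-\|\hat\vecdelta_{S^c}\|_1)$. Provided $\|\nabla L_H(\vecbeta^*)\|_\infty\le\frac{c_{\mr{RE}}-1}{c_{\mr{RE}}+1}\lambda_s$, this forces the cone membership $\|\hat\vecdelta_{S^c}\|_1\le c_{\mr{RE}}\|\hat\vecdelta_S\|_1$ (the source of the factor $\frac{c_{\mr{RE}}+1}{c_{\mr{RE}}-1}$ in $\lambda_s$) and the basic inequality $\langle\nabla L_H(\hat\vecbeta)-\nabla L_H(\vecbeta^*),\hat\vecdelta\rangle\le 2\lambda_s\|\hat\vecdelta_S\|_1\le 2\lambda_s\sqrt s\|\hat\vecdelta_S\|_2$.

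I would then build one event of probability at least $1-2\delta$ carrying three statements. (i) The score bound $\|\nabla L_H(\vecbeta^*)\|_\infty\le c_s\lambda_o\sqrt n\sqrt{\log(d/\delta)/n}\le\frac{c_{\mr{RE}}-1}{c_{\mr{RE}}+1}\lambda_s$: after THRESHOLDING each coordinate of $\nabla L_H(\vecbeta^*)$ is an average of summands bounded by $\tau_\vecx$ with variance $\le\mbb{E}\tilde\vecx_{ij}^2\le1$, so Bernstein gives a fluctuation $\lesssim\lambda_o\sqrt{\log(d/\delta)}+\frac{\lambda_o\tau_\vecx\log(d/\delta)}{\sqrt n}\asymp\lambda_o\sqrt{\log(d/\delta)}$ by $\tau_\vecx=\sqrt{n/\log(d/\delta)}$, while its mean is controlled by Assumption \ref{a:1}(iii), the $1$-Lipschitzness of $h$ (to replace $u_i(\vecbeta^*)$ by $\xi_i/(\lambda_o\sqrt n)$, at cost $\mbb{E}|b_i|/(\lambda_o\sqrt n)$), and the finite-kurtosis tail bound $\mbb{E}(|\vecx_{ij}|-\tau_\vecx)_+\le K^4/\tau_\vecx^3$, the leftover bias terms being precisely those in \eqref{ine:tp2:no} (notably $\|\vecbeta^*\|_1 K^4(\log(d/\delta)/n)^{3/2}$ and the $c_\vecbeta$-clause). (ii) The thresholded design obeys, for all $\vecv$ in the $(s,c_{\mr{RE}})$-cone, both $\frac1n\sum_i(\tilde\vecx_i^\top\vecv)^2\gtrsim\kappa^2\|\vecv_S\|_2^2\gtrsim\|\Sigma^{1/2}\vecv\|_2^2$ and, for every $B$ with $|B|\le c'n$, $\frac1n\sum_{i\in B}(\tilde\vecx_i^\top\vecv)^2\lesssim c'\|\Sigma^{1/2}\vecv\|_2^2+(\text{l.o.t.})$ — entrywise Bernstein on $\frac1n\sum_i\tilde\vecx_i\tilde\vecx_i^\top$ (bounded summands) together with $\|\mbb{E}\tilde\vecx_i\tilde\vecx_i^\top-\Sigma\|_{\max}=O(K^4/\tau_\vecx^2)$ and the clause $9K^2c_{r_1}\sqrt{s\log(d/\delta)/n}\le1$ of \eqref{ine:tp2:no}. (iii) $\#\{i:|\xi_i+b_i|>\lambda_o\sqrt n/2\}\le c'n$ for a small $c'$ — Bernstein for the bounded indicators, whose mean is $\le\mbb{P}(|\xi_i|>\lambda_o\sqrt n/4)+(\text{tiny})\le\frac{4\sigma}{\lambda_o\sqrt n}\le\frac{4\sigma}{18K^4(\sigma+1)}$ by $\lambda_o\sqrt n\ge18K^4(\sigma+1)$.

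The core is restricted strong convexity. Since $h$ is nondecreasing, $1$-Lipschitz and the identity on $[-1,1]$, the identity $\langle\nabla L_H(\vecbeta_1)-\nabla L_H(\vecbeta_2),\vecbeta_1-\vecbeta_2\rangle=\lambda_o^2\sum_i(u_i(\vecbeta_1)-u_i(\vecbeta_2))(h(u_i(\vecbeta_1))-h(u_i(\vecbeta_2)))$ gives, for every $\vecdelta$, $\langle\nabla L_H(\vecbeta^*+\vecdelta)-\nabla L_H(\vecbeta^*),\vecdelta\rangle\ge\frac1n\sum_i(\tilde\vecx_i^\top\vecdelta)^2\mathbbm{1}[|\xi_i+b_i|\le\tfrac12\lambda_o\sqrt n]\mathbbm{1}[|\tilde\vecx_i^\top\vecdelta|\le\tfrac12\lambda_o\sqrt n]$. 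Writing the indicator product as $1$ minus its complement, the piece on $\{|\xi_i+b_i|>\tfrac12\lambda_o\sqrt n\}$ is at most a small constant times $\|\Sigma^{1/2}\vecdelta\|_2^2$ by (iii) and the sub-design bound in (ii), and the piece on $\{|\tilde\vecx_i^\top\vecdelta|>\tfrac12\lambda_o\sqrt n\}$ is $\le\frac{2}{\lambda_o\sqrt n}\cdot\frac1n\sum_i|\tilde\vecx_i^\top\vecdelta|^3\le\frac{2\tau_\vecx\|\vecdelta\|_1}{\lambda_o\sqrt n}\cdot\frac1n\sum_i(\tilde\vecx_i^\top\vecdelta)^2$ with $\|\vecdelta\|_1\le(1+c_{\mr{RE}})\sqrt s\|\vecdelta_S\|_2\lesssim\sqrt s\|\Sigma^{1/2}\vecdelta\|_2/\kappa\le\sqrt s\,r_\Sigma/\kappa$ a small constant by \eqref{ine:tp1:no:r}–\eqref{ine:tp2:no}; with $\frac1n\sum_i(\tilde\vecx_i^\top\vecdelta)^2\gtrsim\|\Sigma^{1/2}\vecdelta\|_2^2$ from (ii) this yields $\langle\nabla L_H(\vecbeta^*+\vecdelta)-\nabla L_H(\vecbeta^*),\vecdelta\rangle\ge c_0\|\Sigma^{1/2}\vecdelta\|_2^2$ for $\vecdelta$ in the cone with $\|\Sigma^{1/2}\vecdelta\|_2\le r_\Sigma$. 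That $\hat\vecdelta$ itself satisfies $\|\Sigma^{1/2}\hat\vecdelta\|_2\le r_\Sigma$ follows by a convexity/continuity argument along $[\vecbeta^*,\hat\vecbeta]$: otherwise the boundary point $\vecbeta^*+t\hat\vecdelta$ with $\|\Sigma^{1/2}t\hat\vecdelta\|_2=r_\Sigma$ still satisfies the basic inequality, so $c_0 r_\Sigma^2\le2\lambda_s\sqrt s\|t\hat\vecdelta_S\|_2\le\frac{2\lambda_s\sqrt s}{\kappa}r_\Sigma$, contradicting $r_\Sigma=12c_{r_1}\sqrt s\lambda_s$ for the stated constants.

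Applying the restricted strong convexity bound at $\vecdelta=\hat\vecdelta$ together with the basic inequality, $c_0\|\Sigma^{1/2}\hat\vecdelta\|_2^2\le2\lambda_s\sqrt s\|\hat\vecdelta_S\|_2\le\frac{2\lambda_s\sqrt s}{\kappa}\|\Sigma^{1/2}\hat\vecdelta\|_2$, hence $\|\Sigma^{1/2}\hat\vecdelta\|_2\le\frac{2\lambda_s\sqrt s}{c_0\kappa}\le r_\Sigma$; then $\|\hat\vecdelta\|_1\le(1+c_{\mr{RE}})\|\hat\vecdelta_S\|_1\le(1+c_{\mr{RE}})\sqrt s\|\hat\vecdelta_S\|_2\le\frac{(1+c_{\mr{RE}})\sqrt s}{\kappa}\|\Sigma^{1/2}\hat\vecdelta\|_2\le c_{r_1}\sqrt s\,r_\Sigma=r_1$; and the $\ell_2$ bound follows from the cone structure together with the restricted eigenvalue and restricted minimum eigenvalue conditions (via the standard decomposition of $\hat\vecdelta$ into successive size-$s$ blocks ordered by magnitude, $\|\hat\vecdelta\|_2\le\|\hat\vecdelta_{S\cup S_1}\|_2+\|\hat\vecdelta_{S^c}\|_1/\sqrt s$, and $\min_{\|\vecv\|_0\le s}\vecv^\top\Sigma\vecv\ge\kappa_{\mr{l}}^2$), giving $\|\hat\vecdelta\|_2\le c_{r_2}r_\Sigma=r_2$. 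I expect the main obstacle to be the restricted strong convexity step: proving it for the thresholded-covariate Huber loss under only fourth-moment covariates and first-moment noise, the delicate point being to keep the two truncation remainders — from $|\tilde\vecx_i^\top\vecdelta|$ large (controlled by $\|\vecdelta\|_1\lesssim\sqrt s\,r_\Sigma/\kappa$) and from $|\xi_i+b_i|$ large (controlled by $\lambda_o\sqrt n\ge18K^4(\sigma+1)$) — strictly below the leading quadratic term uniformly over the cone and the local ball, which is exactly what forces the conditions in \eqref{ine:tp2:no} and the scaling $\tau_\vecx=\sqrt{n/\log(d/\delta)}$, with all the concentration in the previous step reaching the sharp $\sqrt{\log(d/\delta)/n}$ rate only because THRESHOLDING renders the relevant summands bounded so that Bernstein's inequality applies.
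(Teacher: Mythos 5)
Your overall skeleton (subgradient optimality $\Rightarrow$ cone membership and basic inequality, a score bound via Bernstein on the thresholded coordinates, a localized restricted-strong-convexity bound, and a continuity/contradiction argument along $[\vecbeta^*,\hat\vecbeta]$ to stay inside the radii) is the same architecture as the paper (Proposition \ref{p:main} plus Propositions \ref{p:main1:no} and \ref{p:main:sc:no}). The gap is in how you execute the RSC step, and it is genuine: at the theorem's threshold $\tau_\vecx=\sqrt{n/\log(d/\delta)}$ the quantities you rely on are not small. Concretely, (a) your event (ii) asks for an \emph{empirical} restricted eigenvalue bound $\frac1n\sum_i(\tilde\vecx_i^\top\vecv)^2\gtrsim\|\Sigma^{1/2}\vecv\|_2^2$ over the cone, proved by entrywise Bernstein on $\frac1n\sum_i\tilde\vecx_i\tilde\vecx_i^\top$; but the summands $\tilde\vecx_{i_j}\tilde\vecx_{i_k}$ are only bounded by $\tau_\vecx^2$, so the Bernstein deviation is of order $K^2\sqrt{\log(d/\delta)/n}+\tau_\vecx^2\log(d/\delta)/n$, and the second term equals $1$ for this $\tau_\vecx$; multiplying by $\|\vecv\|_1^2\asymp c_{r_1}^2 s\,r_\Sigma^2$ on the cone gives a deviation of order $s\,r_\Sigma^2$, which swamps $\|\Sigma^{1/2}\vecv\|_2^2=r_\Sigma^2$. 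The same obstruction kills the uniform-over-subsets quadratic bound you invoke for the bad-residual indices. (b) Your bound for the large-increment piece, $\frac1n\sum_i(\tilde\vecx_i^\top\vecdelta)^2\mathbf{1}[|\tilde\vecx_i^\top\vecdelta|>\lambda_o\sqrt n/2]\le\frac{2\tau_\vecx\|\vecdelta\|_1}{\lambda_o\sqrt n}\cdot\frac1n\sum_i(\tilde\vecx_i^\top\vecdelta)^2$, has prefactor $\frac{2\tau_\vecx\|\vecdelta\|_1}{\lambda_o\sqrt n}=\frac{2\|\vecdelta\|_1}{\lambda_o\sqrt{\log(d/\delta)}}$, and with $\|\vecdelta\|_1\le c_{r_1}\sqrt s\,r_\Sigma$ and $r_\Sigma=12c_{r_1}\sqrt s\lambda_s=12c_sc_{r_1}\frac{c_{\mr{RE}}+1}{c_{\mr{RE}}-1}\sqrt s\,\lambda_o\sqrt{\log(d/\delta)}$ this is of order $c_{r_1}^2 s$, not a small constant; nothing in \eqref{ine:tp2:no} rescues it.

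This is exactly why the paper never lower-bounds the empirical Gram form. In Proposition \ref{p:main:sc:no} the quadratic increment is capped by the bounded functions $\varphi$ (with $\varphi\le1/4$) and $\psi$; the \emph{population} term is lower-bounded using Lemma \ref{a:l:1}, Cauchy--Schwarz and Markov at the first power of $1/(\lambda_o\sqrt n)$ (this is where $\lambda_o\sqrt n\ge18K^4(\sigma+1)$, the kurtosis bound $\mbb{E}\langle\vecx_i,\vecv\rangle^4\le K^4\|\Sigma^{1/2}\vecv\|_2^4$, and the $\|\vecbeta^*\|_1 K^4(\log(d/\delta)/n)^{3/2}$ clause of \eqref{ine:tp2:no} enter), while the \emph{empirical fluctuation} of the capped process is handled by Bousquet's inequality plus symmetrization and two contraction steps, reducing it to the Rademacher complexity of the linear class over the $\ell_1$ ball, whose summands are bounded by $\tau_\vecx$ (not $\tau_\vecx^2$), giving $\sqrt{\log d/n}+\tau_\vecx\log d/n\asymp\sqrt{\log d/n}$ and hence a term of order $\lambda_o\sqrt n\,c_{r_1}\sqrt{s\log(d/\delta)/n}\,r_\Sigma$, which is absorbed into $r_{b,\Sigma}$. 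To repair your proof you would need to replace your events (ii)--(iii) and the a.s. bound in (b) by this capped-process/population-plus-Bousquet argument (or an equivalent device); as written, the RSC step fails at the stated $\tau_\vecx$. A secondary, smaller point: your score-bias bookkeeping ("cost $\mbb{E}|b_i|/(\lambda_o\sqrt n)$") should be checked against \eqref{ine:tp2:no}; the paper instead pairs the bias with $\|\Sigma^{1/2}\vecv\|_2$ via H{\"o}lder with exponent $4/3$ and a fractional power $\upsilon=1/3$, which is what produces the $c_{\vecbeta}^{1/4}s^{1/4}(\log(d/\delta)/n)^{7/8}$ clause, and the $\|\vecbeta^*\|_1$ clause is actually consumed in the RSC part, not the score part.
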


\begin{remark}
When we set $c_s = 1$, $c_r=6$ and $\lambda_o  \sqrt{n} = 18K^4(\sigma+1)$ in Theorem \ref{t:main:no}.  Then, the error bounds become
	\begin{align}
		\label{ine:or}
		\|\Sigma^\frac{1}{2}(\hat{\vecbeta} -\vecbeta^*)\|_2 &\lesssim_{C_{\mr{RE}}} \frac{K^4(1+\sigma)}{\kappa}\sqrt{s\frac{\log (d/\delta)}{n}},	\\
		\|\hat{\vecbeta} -\vecbeta^*\|_2 &\lesssim_{C_{\mr{RE}}} \frac{K^4(1+\sigma)}{\kappa \kappa_l}\sqrt{s\frac{\log (d/\delta)}{n}},
	\end{align}
	where $\lesssim_{C_{\mr{RE}}}$ is an inequality up to numerical and $C_{\mr{RE}}$ factor.
\end{remark}
\begin{remark}
Consider the case of normal lasso estimator when  $\{\vecx_i\}_{i=1}^n$ and $\{\xi_i\}_{i=1}^n$ are sampled from Gaussian distributions with  $\mbb{E}\xi_i^2 = \sigma^2$ when $\vecbeta^*$ is estimated by normal lasso estimator.
The error bound of this case is  $ \lesssim_{C_{\mr{RE}}}  \sigma \left(\frac{1}{\kappa}\sqrt{\frac{s\log d}{n}} + \sqrt{\frac{\log(1/\delta)}{n}}\right)$. The result \eqref{ine:or} is similar to this.
 However, the error bound of our estimator does not converge to $0$ when $\sigma \to 0$, and our estimator has a cross term such that $\sqrt{s} \times \sqrt{\frac{\log (1/\delta)}{n}}$.
Whether  one can construct tractable estimators without these limitations in our situation is a future work.
\end{remark}
\subsection{Related work}
\label{sec:no:re}
In this section, we introduce related work which dealt with estimating coefficients in sparse linear regression by tractable estimator under finite kurtosis condition without outlier contamination.

One of the tractable estimation methods proposed in \cite{FanWanZhu2021Shrinkage}  can be applied to estimate $\vecbeta^*$ from the data $\{y_i,\vecx_i\}_{i=1}^n$ is derived from \eqref{model:normal}. 
Let the obtained estimator be denoted by $\hat{\vecbeta}_1$ and $\lambda_{\min}$ as smallest eigenvalue of $\Sigma$. Then, $\hat{\vecbeta}_1$ demonstrates the following error bound: for any $\gamma>0$, 
\begin{align}
\label{ine:preceed1}
	\mbb{P}\left(\|\hat{\vecbeta}_1-\vecbeta^*\|_2 \lesssim  \frac{1}{\lambda_{\min}}\sqrt{\frac{\gamma s\log d}{n}}\right)\geq 1-d^{1-\gamma},
\end{align}
when $\mbb{E}\vecx_i = 0$,  $\{\vecx_i\}_{i=1}^n$ and $\{\xi_i\}_{i=1}^n$ are independent, $\|\vecbeta^*\|_1$, $\mbb{E}(\vecx_i^\top  \vecbeta^*+\xi_i)^4$ and $s\sqrt{\frac{\log d}{n}}$ are sufficiently small.
The advantage of our results compared to \eqref{ine:preceed1} lies in more relaxed assumption on $\vecbeta^*$, the moment of the noise, the minimum eigenvalue of $\Sigma$ and sample complexity.

One of the estimation methods introduced in \cite{LiuLiCar2019High} can also  be applicable. 
The estimator $\hat{\vecbeta}_2$  demonstrates the following error bound:
\begin{align}
	\label{ine:preceed2}
	\mbb{P}\left(\|\hat{\vecbeta}_2-\vecbeta^*\|_2 \lesssim \rho ^\frac{5}{2}\sigma \sqrt{\frac{s\log d}{n}}\right)\geq 1-\frac{1}{d^2},
\end{align}
under the condition  $\mbb{E}\xi_i=0$, $\mbb{E}\xi_i^2 = \sigma^2$ and $\rho^7 s (\log d) \times \log \left( \frac{\mu_\alpha \|\vecbeta^*\|_2}{\rho^\frac{3}{2}\sigma}\sqrt{\frac{n}{ s \log d}}\right)\lesssim n$, where $\rho = \mu_L/\mu_\alpha$, and $\mu_L$ and $\mu_\alpha$ are `smoothness' and `strong-convexity' parameters, respectively.
Depending on the shape of the covariance matrix, `smoothness' and `strong-convexity' parameters can be significantly large and small, respectively, and \cite{LiuLiCar2019High} does not explicitly evaluate these parameters.
The explicit evaluation of the effect of the covariance and more relaxed moment assumption on $\{\xi_i\}_{i=1}^n$ is the advantage of our results over \eqref{ine:preceed2}. On the other hand, the error bound in  \eqref{ine:preceed2} has exact recovery when $\sigma \to 0$.

None of the methods, including ours, have been able to eliminate the dependence on $\|\vecbeta^*\|_2$. Constructing an estimator that is independent of $\|\vecbeta^*\|_2$ while preserving the exact recovery when $\sigma \to 0$ remains a challenge for future work.

\section{Case of contamination}
\label{sec:om}
Before presenting our method and result precisely, we roughly introduce our result.
For linear regression problem \eqref{model:adv} with heavy-tailed $\{\vecx_i,\xi_i\}_{i=1}^n$, our estimator  satisfies
\begin{align}
	\mbb{P}\left(\|\Sigma^\frac{1}{2}(\hat{\vecbeta} -\vecbeta^*)\|_2 \leq C_{\{\vecx_i,\xi_i\}_{i=1}^n}\left(\sqrt{s\frac{\log (d/\delta)}{n}}+\sqrt{\frac{o}{n}}\right)\right) \geq 1-\delta,
\end{align}
for sufficiently large $n$ such that $s^2 \lesssim_{\{\vecx_i,\xi_i\}_{i=1}^n, \|\vecbeta^*\|_1}n$, where $\lesssim_{\{\vecx_i,\xi_i\}_{i=1}^n, \|\vecbeta^*\|_1}$ is a inequality up to the moment properties of $\{\vecx_i,\xi_i\}_{i=1}^n$ and $\|\vecbeta^*\|_1$. Our method is based on simple thresholding for $\{\vecx_i\}_{i=1}^n$, mitigating the impact of outliers on $\{\vecx_i\}_{i=1}^n$ by robust sparse PCA and $\ell_1$-penalized Huber loss, and our estimator is tractable. 
The impact of outliers on the error bound depends solely on the proportion of outliers.
However, the requirement for the number of samples $n$ being proportional to the square of the sparsity $s^2$ is different from the usual lasso. This point is discussed in Section \ref{sec:remark:s2}.

\subsection{Method}
To estimate $\vecbeta^*$ in \eqref{model:adv}, we propose ROBUST-SPARSE-ESTIMATION II (Algorithm \ref{ourmethod}), which is an extension of ROBUST-SPARSE-ESTIMATION-I.
ROBUST-SPARSE-ESTIMATION II  is inspired by a method proposed in \cite{PenJogLoh2020robust}.
\cite{PenJogLoh2020robust} proposed some methods for estimating $\vecbeta^*$  when $\vecbeta^*$ has no sparsity, and derived sharp error bounds. One of the methods in \cite{PenJogLoh2020robust} consists of two steps: (i) pre-processing covariates, and (ii) executing the Huber regression with pre-processed covariates.
Our method is based on this one. However, we follow different pre-processings (THRESHOLDING and COMPUTE-WEIGHT) and use the $\ell_1$-penalized Huber regression to enable us to tame the sparsity of $\vecbeta^*$. 
\begin{algorithm}
	\caption{ROBUST-SPARSE-ESTIMATION II}
	\begin{algorithmic}[1]
			\label{ourmethod}
		\renewcommand{\algorithmicrequire}{\textbf{Input:}}
		\renewcommand{\algorithmicensure}{\textbf{Output:}}
		\REQUIRE $\left\{y_i,\vecX_i\right\}_{i=1}^n$ and the tuning parameters $\tau_\vecx,\,\lambda_*,\,\,\tau_{suc},\,\varepsilon,\,\lambda_o$ and $\lambda_s$
		\ENSURE  $\hat{\vecbeta}$
		\STATE $\{\tilde{\vecX}_i\}_{i=1}^n \leftarrow \text{THRESHOLDING}(\left\{\vecX_i\right\}_{i=1}^n,\tau_\vecx)$
		\STATE $\left\{ \hat{w}_i\right\}_{i=1}^n \leftarrow \text{COMPUTE-WEIGHT}(\{\tilde{\vecX}_i\}_{i=1}^n ,\lambda_*,\tau_{suc},\varepsilon )$
		\STATE $\hat{\vecbeta} \leftarrow \text{PENALIZED-HUBER-REGRESSION}\left(\{\hat{w}_iy_i,\hat{w}_i\tilde{\vecX}_i\}_{i=1}^n, \,\lambda_o,\,\lambda_s\right)$
	\end{algorithmic} 
\end{algorithm}

COMPUTE-WEIGHT relies on the semi-definite programming developed by \cite{BalDuLiSin2017Computationally}, which provides a method for  sparse PCA to be robust to outliers.
\cite{BalDuLiSin2017Computationally} considered a situation where samples  are drawn from a Gaussian distribution and contaminated by outliers. THRESHOLDING enables us to cast our heavy tailed situation into the framework of \cite{BalDuLiSin2017Computationally}.
In Section \ref{sec:cw}, we describe the details of  COMPUTE-WEIGHT.

\subsubsection{COMPUTE-WEIGHT}
\label{sec:cw}
Define the $j$-th element of $\vecX_i$ as $\vecX_{i_j}$.
For a matrix $M\in \mbb{R}^{d_1\times d_2} = \{m_{ij}\}_{1\leq i\leq d_1,1\leq j\leq d_2}$, define
\begin{align}
	\|M\|_1 = \sum_{i=1}^{d_1}\sum_{j=1}^{d_2}|m_{ij}|,\,\quad\|M\|_\infty =\max_{1\leq i\leq d_1,1\leq j\leq d_2}|m_{ij}|.
\end{align}
For a symmetric matrix $M$, we write $M\succeq 0$ if $M$ is positive semidefinite.
For a vector $\vecv$, define the $\ell_\infty$ norm of $\vecv$ as $\|\vecv\|_\infty$,  and for a matrix $M$, define  the absolute maximum element of $M$ as  $\|M\|_\infty$.
Define the following two convex sets:
\begin{align}
	\mathfrak{M}_r = \left\{M\in \mbb{R}^{d\times d} \,:\, \mr{Tr}(M) \leq  r^2,\,M\succeq 0\right\},\quad \mathfrak{U}_{\lambda} = \left\{U\in \mbb{R}^{d\times d} \,:\, \|U\|_{\infty} \leq \lambda,\, U\succeq 0\right\},
\end{align}
where $\mr{Tr}(M)$ for matrix $M$ is the trace of $M$.
To reduce the effects of outliers of covariates,
we require COMPUTE-WEIGHT (Algorithm \ref{alg:cw0}) to compute the weight vector $\hat{\vecw}= (\hat{w}_1,\cdots, \hat{w}_n)$ such that the following quantity is sufficiently small: 
\begin{align}
	\label{ine:adv-spect}
	\sup_{M\in \mathfrak{M}_{r}}\left(\sum_{i=1}^n\hat{w}_i \langle \tilde{\vecX}_i\tilde{\vecX}_i^\top M\rangle-\lambda_* \|M\|_1\right), 
\end{align}
where $\lambda_*$ is a tuning parameter.
Evaluation of  \eqref{ine:adv-spect} is required in the analysis of WEIGHTED-PENALIZED-HUBER-REGRESSION and the role of \eqref{ine:adv-spect} is revealed in the proof of Proposition \ref{p:main:out}.
For  COMPUTE-WEIGHT, we use a variant of Algorithm 4 of \cite{BalDuLiSin2017Computationally}.
define the probability simplex $\Delta^{n-1}$ as
\begin{align}
	\Delta^{n-1} = \left\{\vecw \in [0,1]^n: \sum_{i=1}^nw_i =1, \quad \|\vecw\|_\infty\leq \frac{1}{n(1-\varepsilon)}\right\}.
\end{align}
COMPUTE-WEIGHT is as follows.
\begin{algorithm}[H]
	\caption{COMPUTE-WEIGHT}
	\label{alg:cw0}
	\begin{algorithmic}
	\REQUIRE{data $\{\tilde{\vecX}_i \}_{i=1}^n$ and tuning parameters $\lambda_*,\, \tau_{suc}$ and $\varepsilon$.}
	\ENSURE{weight estimate $\hat{\vecw} = \{\hat{w}_1,\cdots,\hat{w}_n\}$.}\\
	Let  $\hat{\vecw}$ be the solution to 
	\begin{align}
		\label{cw}
		\min_{\vecw \in \Delta^{n-1}} \max_{M\in \mathfrak{M}_r}\left(\sum_{i=1}^n w_i \langle\tilde{\vecX}_i \tilde{\vecX}_i^\top,M\rangle-\lambda_*\|M\|_1\right)
	\end{align}
	{\bf if} {the optimal value of \eqref{cw} $\leq \tau_{suc}$}\\
	\ \ \ \ \ {\bf return} {$\hat{\vecw}$}\\
	{\bf else}    \\
	\ \ \ \ \ {\bf return} {$fail$}\\
\end{algorithmic}
\end{algorithm}
We note that, from the arguments of \cite{WanBerSam2016Statistical, Nem2004Prox,Nes2005Smooth}, we have 
\begin{align}
	\label{ine:opt-transform}
		\min_{\vecw \in \Delta^{n-1}}\max_{M\in \mathfrak{M}_{r}} \left( \sum_{i=1}^n w_i\langle \tilde{\vecX}_i \tilde{\vecX}_i^\top,M\rangle-\lambda_*\|M\|_1\right)= \min_{\vecw \in \Delta^{n-1}} \min_{U \in \mathfrak{U}_{\lambda_*}}\max_{M\in \mathfrak{M}_{r}}\left\langle \sum_{i=1}^n w_i \tilde{\vecX}_i \tilde{\vecX}_i^\top- U,M\right\rangle.
\end{align}
COMPUTE-WEIGHT and Algorithm 4 of \cite{BalDuLiSin2017Computationally} are very similar.
For any fixed $\vecw$, our objective function and the constraints are the same as the ones in Section 3 of \cite{WanBerSam2016Statistical} except for the values of the tuning parameters, and we can efficiently find the optimal $M \in \mathfrak{M}_{r}$.
Therefore, COMPUTE-WEIGHTS can be solved efficiently for the same reason as Algorithm 4 of \cite{BalDuLiSin2017Computationally}.

To analyze COMPUTE-WRIGHT, we introduce the following proposition.
The poof of the following proposition is provided in the appendix (Section \ref{sec:proof:sec:keypropositions}). 

\begin{proposition}
	\label{p:cwpre}
	Assume (i) of Assumption \ref{a:1} holds.
	For any matrix $M \in \mathfrak{M}_{r}$, with probability at least $1-\delta$, we have
	\begin{align}
		\label{ine:cwpre}
	\sum_{i=1}^n \frac{\left\langle\tilde{\vecx}_i \tilde{\vecx}_i^\top,M\right\rangle}{n}\leq  \left(\sqrt{2}K^2\sqrt{\frac{\log(d/\delta)}{n}}+\tau_{\vecx}^2 \frac{\log(d/\delta)}{n}+2\frac{K^4}{\tau_{\vecx}^2}\right)\|M\|_1+\|\Sigma\|_{\mr{op}}r^2,
	\end{align}
	where $\|\Sigma\|_{\mr{op}}$ is the operator norm of $\Sigma$.
\end{proposition}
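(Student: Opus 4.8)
The plan is to control the random bilinear form $\sum_{i=1}^n \langle \tilde{\vecx}_i\tilde{\vecx}_i^\top, M\rangle/n$ by first reducing to an entrywise bound on the matrix $\frac{1}{n}\sum_i \tilde{\vecx}_i\tilde{\vecx}_i^\top - \Sigma$ and a crude operator-norm bound on a truncation-bias term. Concretely, I would write
\begin{align}
\label{ine:plan:split}
\sum_{i=1}^n \frac{\langle \tilde{\vecx}_i\tilde{\vecx}_i^\top, M\rangle}{n} = \left\langle \frac{1}{n}\sum_{i=1}^n \tilde{\vecx}_i\tilde{\vecx}_i^\top - \mbb{E}\tilde{\vecx}_i\tilde{\vecx}_i^\top, M\right\rangle + \langle \mbb{E}\tilde{\vecx}_i\tilde{\vecx}_i^\top - \Sigma, M\rangle + \langle \Sigma, M\rangle.
\end{align}
The last term is at most $\|\Sigma\|_{\mr{op}}\mr{Tr}(M) \le \|\Sigma\|_{\mr{op}} r^2$ by the definition of $\mathfrak{M}_r$ (PSD with bounded trace), which is exactly the final summand in \eqref{ine:cwpre}. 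For the first two terms I would use the matrix Hölder inequality $\langle A, M\rangle \le \|A\|_\infty \|M\|_1$, valid for any $M$, so it suffices to bound $\|\frac{1}{n}\sum_i \tilde{\vecx}_i\tilde{\vecx}_i^\top - \mbb{E}\tilde{\vecx}_i\tilde{\vecx}_i^\top\|_\infty$ with high probability and $\|\mbb{E}\tilde{\vecx}_i\tilde{\vecx}_i^\top - \Sigma\|_\infty$ deterministically.

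For the deterministic bias term, fix indices $j,k$ and estimate $|\mbb{E}\tilde{\vecx}_{i_j}\tilde{\vecx}_{i_k} - \mbb{E}\vecx_{i_j}\vecx_{i_k}|$. Since $\tilde{\vecx}_{i_j} = \vecx_{i_j}$ on the event $\{|\vecx_{i_j}|\le \tau_\vecx,\ |\vecx_{i_k}|\le \tau_\vecx\}$, the difference is supported on the event where at least one coordinate exceeds $\tau_\vecx$, and Cauchy--Schwarz together with the finite-kurtosis bound \eqref{ine:fc} (applied to $\vecv = e_j$, which gives $\mbb{E}\vecx_{i_j}^4 \le K^4(\mbb{E}\vecx_{i_j}^2)^2 \le K^4$ using $\max_j \mbb{E}\vecx_{i_j}^2 \le 1$) controls this tail contribution; a Markov/Chebyshev step converts the excess probability into a factor $1/\tau_\vecx^2$, yielding $\|\mbb{E}\tilde{\vecx}_i\tilde{\vecx}_i^\top - \Sigma\|_\infty \lesssim K^4/\tau_\vecx^2$, which accounts for the $2K^4/\tau_\vecx^2$ term. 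For the stochastic term, each coordinate $\frac{1}{n}\sum_i(\tilde{\vecx}_{i_j}\tilde{\vecx}_{i_k} - \mbb{E}\tilde{\vecx}_{i_j}\tilde{\vecx}_{i_k})$ is an average of i.i.d. bounded random variables: $|\tilde{\vecx}_{i_j}\tilde{\vecx}_{i_k}| \le \tau_\vecx^2$ almost surely, and the per-variable variance is at most $\mbb{E}\tilde{\vecx}_{i_j}^2\tilde{\vecx}_{i_k}^2 \le \mbb{E}\vecx_{i_j}^2\vecx_{i_k}^2 \le \sqrt{\mbb{E}\vecx_{i_j}^4}\sqrt{\mbb{E}\vecx_{i_k}^4} \le K^4$. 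Bernstein's inequality then gives, for each pair $(j,k)$, a deviation of order $K^2\sqrt{\log(1/\delta')/n} + \tau_\vecx^2 \log(1/\delta')/n$ with probability $1-\delta'$; taking a union bound over the $\le d^2/2$ pairs with $\delta' = \delta/d^2$ (so $\log(1/\delta') \lesssim \log(d/\delta)$) reproduces the $\sqrt{2}K^2\sqrt{\log(d/\delta)/n} + \tau_\vecx^2 \log(d/\delta)/n$ terms, up to adjusting constants. Combining the three pieces via \eqref{ine:plan:split} gives \eqref{ine:cwpre}.

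I expect the main obstacle to be the bookkeeping that produces the clean constants $\sqrt{2}$ and $2$ rather than the qualitative bound: one must be careful that the Bernstein variance proxy is $K^4$ (not something larger after centering), that the union bound over ordered vs.\ unordered pairs and the symmetry of $M$ are handled consistently, and that in the bias term the two-sided truncation event is split so that each side is charged a fourth moment controlled purely by $K^4$ and $\max_j \mbb{E}\vecx_{i_j}^2 \le 1$. A secondary point worth stating explicitly: the bound must hold for every \emph{fixed} $M \in \mathfrak{M}_r$ (the proposition is not a uniform statement over $\mathfrak{M}_r$), so no covering argument over matrices is needed here — the supremum over $M$ in \eqref{ine:adv-spect} is handled later via the $\|M\|_1$ penalty, not in this proposition. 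Everything else is a routine application of Hölder, Cauchy--Schwarz, and Bernstein.
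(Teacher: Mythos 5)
Your proposal is correct and follows essentially the same route as the paper's proof: the same decomposition into an entrywise-centered empirical term (handled by Bernstein's inequality with variance proxy $K^4$, boundedness $\tau_\vecx^2$, a union bound over entries, and the Hölder pairing $\langle A,M\rangle\le\|A\|_\infty\|M\|_1$), the same truncation-bias bound of order $K^4/\tau_\vecx^2$ obtained from the fourth-moment/indicator argument (the paper's Lemma \ref{a:l:1}), and the same trace-duality step $\langle\Sigma,M\rangle\le\|\Sigma\|_{\mr{op}}\mr{Tr}(M)\le\|\Sigma\|_{\mr{op}}r^2$. Your observation that the bound is for a fixed $M$ (no covering needed) also matches how the paper uses the proposition.
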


We consider the succeeding condition of Algorithm \ref{alg:cw0}.
Define $\lambda_*$ and $\{w_i^\circ \}_{i=1}^n$ as
\begin{align}
	\label{ine:*}
	\lambda_*= c_*\left(\sqrt{2}K^2\sqrt{\frac{\log(d/\delta)}{n}}+\tau_{\vecx}^2 \frac{\log(d/\delta)}{n}+2\frac{K^4}{\tau_{\vecx}^2}\right),\,	w_i^\circ = \begin{cases}
		\frac{1}{n(1-\varepsilon)}&i\in\mc{I} \\
		0 & i\in\mc{O}
		\end{cases},
\end{align}
where $c_*\geq \frac{1}{1-\varepsilon}$.
Then, we have, with probability at least $1-\delta$, 
\begin{align}
	\label{ine:optM}
	\max_{M\in \mathfrak{M}_{r}}\left(\sum_{i =1}^n \hat{w}_i \langle \tilde{\vecX}_i\tilde{\vecX}_i^\top, M\rangle -\lambda_*\|M\|_1\right)&\stackrel{(a)}{\leq} \max_{M\in \mathfrak{M}_{r}}\left(\sum_{i \in \mc{I}}w^\circ_i \left\langle \tilde{\vecX}_i\tilde{\vecX}_i^\top ,M\right\rangle -\lambda_*\|M\|_1\right)\nonumber\\
	&=\max_{M\in \mathfrak{M}_{r}}\left(\sum_{i \in \mc{I}}w_i^\circ  \left\langle \tilde{\vecx}_i\tilde{\vecx}_i^\top, M\right\rangle -\lambda_*\|M\|_1\right)\nonumber\\
	&\stackrel{(b)}\leq\max_{M\in \mathfrak{M}_{r}}\left(\sum_{i =1}^n \frac{1}{n(1-\varepsilon)}  \left\langle \tilde{\vecx}_i\tilde{\vecx}_i^\top, M\right\rangle -\lambda_*\|M\|_1\right)\nonumber\\
	&=\max_{M\in \mathfrak{M}_{r}}\left(\sum_{i =1}^n\frac{1}{n}  \left\langle \tilde{\vecx}_i\tilde{\vecx}_i^\top, M\right\rangle -\lambda_*(1-\varepsilon)\|M\|_1  \right) \times \frac{1}{1-\varepsilon}\nonumber\\
	&\stackrel{(c)}{\leq} \frac{\|\Sigma\|_{\mr{op}}}{1-\varepsilon}r^2,
\end{align}
where (a) follows from the optimality of $\hat{w}_i$, $o/n\leq \varepsilon$ and $\{w_i^\circ\}_{i=1}^n \in \Delta^{n-1}$,  (b) follows from positive semi-definiteness of $M$, and (c) follows from Proposition \ref{p:cwpre} and the definition of $\lambda_*$. Therefore, when $\tau_{suc}\geq \frac{\|\Sigma\|_{\mr{op}}}{1-\varepsilon}r^2$, we see that COMPUTE-WEIGHT succeed and return $\hat{\vecw}$ with probability at least $1-\delta$.
We note that \eqref{ine:optM} is used in the proof of the following proposition, which is plays an important role in the proof of Theorem \ref{t:main}.

\begin{proposition}
	\label{p:main:out}
	Assume that (i) of Assumption \ref{a:1} holds and $\lambda_*$ satisfies  \eqref{ine:*}. Let $r_\Sigma$ be some positive constant, and  $r_1 =  c_{r_1} \sqrt{s}r_\Sigma, \, r=r_2 =  c_{r_2}r_\Sigma$,  where $c_{r_1} = c_r(1+c_{\mr{RE}})/\kappa$, $c_{r_2} = c_r(1+c_{\mr{RE}})/\kappa_\mr{l}$ and  $c_r\geq 6$. Further, let  $\varepsilon = c_{\varepsilon}\frac{o}{n}$ and $\tau_{suc} = \frac{\|\Sigma\|_{\mr{op}}}{1-\varepsilon}r_2^2$, 
	where $c_{\varepsilon} \geq 1$. Lastly, assume $1-\varepsilon>0$, and define $c_{*}' = \max\{\frac{1}{1-\varepsilon},c_*\} $.
	Suppose that  \eqref{ine:optM} holds and COMPUTE-WEIGHT returns $\hat{w}$.
	Then, for any  $\|\vecu\|\in \mbb{R}^n$ such that $\|\vecu\|_\infty \leq c$ for a numerical constant $c$ and for any $\vecv \in r_1 \mbb{B}^d_1 \cap r_2 \mbb{B}^d_2 \cap r_\Sigma \mbb{B}^d_\Sigma$, we have
	\begin{align}
		\left|\sum_{i \in \mc{O}}\hat{w}_iu_i \tilde{\vecX}_i^\top\vecv \right| \leq 2cc_*'\left( c_{r_2}\|\Sigma^\frac{1}{2}\|_{\mr{op}}\sqrt{\frac{o}{n}}  +  c_{r_1}\sqrt{s}\sqrt{K^2\sqrt{\frac{\log(d/\delta)}{n}}+\tau_{\vecx}^2 \frac{\log(d/\delta)}{n}+\frac{K^4}{\tau_{\vecx}^2}} \sqrt{\frac{o}{n}}\right)r_\Sigma,
	\end{align}
\end{proposition}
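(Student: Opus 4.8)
The plan is to reduce the claimed inequality to the already-established bound \eqref{ine:optM} by a single Cauchy--Schwarz step combined with the rank-one lift $M=\vecv\vecv^\top$; since the proposition is stated conditionally on \eqref{ine:optM} holding and COMPUTE-WEIGHT returning $\hat{\vecw}$, the whole argument is deterministic on that event. First I would split the sum, using $\hat{w}_i\ge 0$:
\begin{align}
\left|\sum_{i\in\mc{O}}\hat{w}_i u_i\tilde{\vecX}_i^\top\vecv\right|\le\left(\sum_{i\in\mc{O}}\hat{w}_i u_i^2\right)^{\!1/2}\left(\sum_{i\in\mc{O}}\hat{w}_i(\tilde{\vecX}_i^\top\vecv)^2\right)^{\!1/2},
\end{align}
and then bound the two factors separately. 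For the first factor I would use $\|\vecu\|_\infty\le c$ together with the simplex constraint $\hat{\vecw}\in\Delta^{n-1}$, i.e.\ $\|\hat{\vecw}\|_\infty\le\frac{1}{n(1-\varepsilon)}$, and $|\mc{O}|=o$, to get $\sum_{i\in\mc{O}}\hat{w}_i u_i^2\le c^2\sum_{i\in\mc{O}}\hat{w}_i\le \frac{c^2 o}{n(1-\varepsilon)}$, so this factor is at most $c\sqrt{o/n}\,(1-\varepsilon)^{-1/2}$.

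For the second factor the key observation is that $\sum_{i\in\mc{O}}\hat{w}_i(\tilde{\vecX}_i^\top\vecv)^2=\sum_{i\in\mc{O}}\hat{w}_i\langle\tilde{\vecX}_i\tilde{\vecX}_i^\top,\vecv\vecv^\top\rangle\le\sum_{i=1}^n\hat{w}_i\langle\tilde{\vecX}_i\tilde{\vecX}_i^\top,\vecv\vecv^\top\rangle$, the last inequality because each dropped summand is nonnegative ($\tilde{\vecX}_i\tilde{\vecX}_i^\top\succeq 0$ and $\vecv\vecv^\top\succeq 0$). Since $\vecv\in r_2\mbb{B}_2^d$ one has $\mr{Tr}(\vecv\vecv^\top)=\|\vecv\|_2^2\le r_2^2=r^2$, so $\vecv\vecv^\top\in\mathfrak{M}_r$, and \eqref{ine:optM} may be invoked at this single test matrix, yielding $\sum_{i=1}^n\hat{w}_i\langle\tilde{\vecX}_i\tilde{\vecX}_i^\top,\vecv\vecv^\top\rangle\le\lambda_*\|\vecv\vecv^\top\|_1+\frac{\|\Sigma\|_{\mr{op}}}{1-\varepsilon}r_2^2$. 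Finally $\|\vecv\vecv^\top\|_1=\sum_{j,k}|v_jv_k|=\|\vecv\|_1^2\le r_1^2$ because $\vecv\in r_1\mbb{B}_1^d$. Using $\sqrt{a+b}\le\sqrt{a}+\sqrt{b}$ and $\|\Sigma\|_{\mr{op}}=\|\Sigma^{1/2}\|_{\mr{op}}^2$, the second factor is at most $\sqrt{\lambda_*}\,r_1+\|\Sigma^{1/2}\|_{\mr{op}}(1-\varepsilon)^{-1/2}r_2$.

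It then remains to multiply the two factors and carry out the constant bookkeeping. I would substitute $r_1=c_{r_1}\sqrt{s}\,r_\Sigma$ and $r_2=c_{r_2}r_\Sigma$, bound $\lambda_*\le 2c_*\big(K^2\sqrt{\log(d/\delta)/n}+\tau_\vecx^2\log(d/\delta)/n+K^4/\tau_\vecx^2\big)$ from \eqref{ine:*} (legitimate since $\sqrt{2},1,2\le 2$), so that $\sqrt{\lambda_*}\le\sqrt{2c_*}\sqrt{K^2\sqrt{\log(d/\delta)/n}+\tau_\vecx^2\log(d/\delta)/n+K^4/\tau_\vecx^2}$, and then collapse the surviving $(1-\varepsilon)^{-1}$ and $c_*$ into $c_*'=\max\{(1-\varepsilon)^{-1},c_*\}$ using $c_*/(1-\varepsilon)\le(c_*')^2$ and $(1-\varepsilon)^{-1}\le c_*'$, with the leading factor $2$ absorbing the residual $\sqrt{2}$'s. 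Multiplying out, the $\|\Sigma^{1/2}\|_{\mr{op}}$-term contributes $c\sqrt{o/n}(1-\varepsilon)^{-1}\|\Sigma^{1/2}\|_{\mr{op}}c_{r_2}r_\Sigma\le 2cc_*'c_{r_2}\|\Sigma^{1/2}\|_{\mr{op}}\sqrt{o/n}\,r_\Sigma$ and the other term contributes $c\sqrt{o/n}\,\sqrt{2c_*/(1-\varepsilon)}\,c_{r_1}\sqrt{s}\,r_\Sigma\sqrt{K^2\sqrt{\log(d/\delta)/n}+\cdots}\le 2cc_*'c_{r_1}\sqrt{s}\sqrt{K^2\sqrt{\log(d/\delta)/n}+\tau_\vecx^2\log(d/\delta)/n+K^4/\tau_\vecx^2}\sqrt{o/n}\,r_\Sigma$, which is exactly the stated bound.

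The only genuinely substantive step is the one in the second paragraph: recognizing that the weighted second moment $\sum_i\hat{w}_i(\tilde{\vecX}_i^\top\vecv)^2$ equals the trace pairing $\langle\sum_i\hat{w}_i\tilde{\vecX}_i\tilde{\vecX}_i^\top,\vecv\vecv^\top\rangle$ against a matrix that is feasible for $\mathfrak{M}_r$ — this is precisely why the radius $r$ is set equal to $r_2$ and why COMPUTE-WEIGHT's objective is penalized by $\lambda_*\|\cdot\|_1$ scaled to the $\ell_1$-ball radius $r_1$ — so that the SDP guarantee \eqref{ine:optM} transfers to this one well-chosen test matrix without being re-run. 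Everything else (Cauchy--Schwarz, $\sqrt{a+b}\le\sqrt{a}+\sqrt{b}$, the simplex bound, and the constant juggling) is routine, so I do not anticipate any real obstacle beyond keeping the numerical factors organized.
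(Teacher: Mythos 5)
Your proposal is correct and follows essentially the same route as the paper's own proof: a Cauchy--Schwarz split of $\bigl|\sum_{i\in\mc{O}}\hat{w}_iu_i\tilde{\vecX}_i^\top\vecv\bigr|$, bounding the weighted quadratic form $\sum_i\hat{w}_i(\tilde{\vecX}_i^\top\vecv)^2$ by testing \eqref{ine:optM} at the rank-one matrix $\vecv\vecv^\top\in\mathfrak{M}_{r}$ (so that the bound becomes $\tau_{suc}+\lambda_*\|\vecv\|_1^2$), then $\sqrt{a+b}\le\sqrt{a}+\sqrt{b}$ and the same constant bookkeeping via $c_*'$. If anything, your explicit tracking of the $(1-\varepsilon)^{-1/2}$ factor coming from $\|\hat{\vecw}\|_\infty\le\frac{1}{n(1-\varepsilon)}$ is slightly more careful than the paper's corresponding step.
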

Proof of the proposition above in the appendix (Section \ref{sec:proof:sec:keypropositions}).

\subsection{Result}
\label{sec:result}
Under Assumption \ref{a:1}, we have the following theorem.
\begin{theorem}
	\label{t:main}
	Suppose that Assumption \ref{a:1}, and assumptions in Proposition \ref{p:main:out}  holds. As Proposition \ref{p:main:out},  let $\varepsilon = c_{\varepsilon}\frac{o}{n}$, where $c_{\varepsilon} \geq 1$ and  $c_{*}' = \max\{\frac{1}{1-\varepsilon},c_*\} $.
	Suppose that the parameters $\lambda_o,\,\tau_\vecx$ and $\lambda_s$ satisfy $\lambda_o  \sqrt{n} \geq 18K^4(\sigma+1)$, 
	\begin{align}
		\label{ine:tp1}
		\tau_\vecx&=\left(\frac{n}{\log(d/\delta)}\right)^\frac{1}{4},\, \lambda_s=  c_s \frac{c_{\mr{RE}}+1}{c_{\mr{RE}}-1}\lambda_o\sqrt{n}\left(\sqrt{\frac{\log(d/\delta)}{n}}+ c_*' \|\Sigma^\frac{1}{2}\|_{\mr{op}} \frac{c_{r_2}}{c_{r_1}}\sqrt{\frac{o}{sn}}\right),
	\end{align}
	where  $c_s\geq 16$, and $r_\Sigma$, $r_1$, $r$ and $r_2$ satisfies
	\begin{align}
		\label{ine:tp1:r}
		r_\Sigma =  12 c_{r_1}\sqrt{s}\lambda_s,\quad r_1 =  c_{r_1} \sqrt{s}r_\Sigma,\quad r=r_2 =  c_{r_2}r_\Sigma,
	\end{align}
	where  $c_{r_1} = c_r(1+c_{\mr{RE}})/\kappa$, $c_{r_2} = c_r(1+c_{\mr{RE}})/\kappa_\mr{l}$ and  $c_r\geq 6$.
	and $\tau_{suc} =  \frac{\|\Sigma\|_{\mr{op}}}{1-\varepsilon}r_2^2$.
	Assume that  $r_\Sigma\leq 1$ and 
	\begin{align}
		\label{ine:tp2}
		\max \left\{ K^4 \left(\frac{\log(d/\delta)}{n}\right)^\frac{1}{4}, \frac{K^4 }{c_{r_1}}c_{\vecbeta}^\frac{1}{4}s^\frac{1}{4}\left(\frac{\log(d/\delta)}{n}\right)^{ \frac{3}{16}},		\frac{3K^4c_{r_1}^2}{c_{r_2}^2\|\Sigma^\frac{1}{2}\|_{\mr{op}}} s \sqrt{\frac{\log(d/\delta)}{n}} \right\}\leq 1.
	\end{align}
	and
	\begin{align}
		\label{ine:tp2-2}
		\max \left\{K^4  \|\vecbeta^*\|_1 \left(\frac{\log(d/\delta)}{n}\right)^\frac{3}{4},72 K^4c_{r_1}^2 s \sqrt{\frac{\log(d/\delta)}{n}}\right\}\leq 1.
	\end{align}
	Then,	with probability at least $1-3\delta$,
	the output of ROBUST-SPARSE-ESTIMATION II $\hat{\vecbeta}$ satisfies
	\begin{align}
		\label{ine:result}
		\|\Sigma^\frac{1}{2}(\hat{\vecbeta} -\vecbeta^*)\|_2 & \leq r_\Sigma,\quad \|\hat{\vecbeta} -\vecbeta^*\|_1  \leq r_1,\quad \|\hat{\vecbeta} -\vecbeta^*\|_2  \leq r_2.
	\end{align}
\end{theorem}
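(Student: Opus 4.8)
The plan is to run the standard localized-analysis argument for $\ell_1$-penalized $M$-estimators, but to carry the outlier contamination through by means of Proposition \ref{p:main:out}. Write $\widehat{\vecbeta} = \vecbeta^* + \vecv$ with the goal of showing $\vecv \in r_1\mbb{B}_1^d \cap r_2\mbb{B}_2^d \cap r_\Sigma \mbb{B}_\Sigma^d$ (here $\mbb{B}_\Sigma^d$ denotes the ball in the $\|\Sigma^{1/2}\cdot\|_2$ norm). Since the objective is convex, it suffices to show that on the boundary of this set the directional derivative of the weighted penalized Huber objective points outward, which — as in Theorem \ref{t:main:no} — reduces to establishing a lower bound on $\langle \nabla L_n(\widehat{\vecbeta}) - \nabla L_n(\vecbeta^*), \vecv\rangle$ (a restricted-strong-convexity / curvature statement for the weighted Huber loss on thresholded covariates) together with an upper bound on $\|\nabla L_n(\vecbeta^*)\|_\infty$ (the ``deterministic part'' of the gradient), matched against $\lambda_s$.

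First I would expand the gradient of the weighted Huber loss $L_n(\vecbeta) = \sum_i \lambda_o^2 H\big((\widehat w_i y_i - \widehat w_i \tilde{\vecX}_i^\top\vecbeta)/(\lambda_o\sqrt n)\big)$ at $\vecbeta^*$. Because $y_i = \vecX_i^\top\vecbeta^* + \xi_i + \sqrt n\,\theta_i$ and $\vecX_i = \vecx_i$ for $i\in\mc I$, the residual at $\vecbeta^*$ splits into a clean part involving $\xi_i$ and $\tilde{\vecx}_i - \vecx_i$ (the thresholding bias) and an outlier part supported on $\mc O$. The score term $\sum_{i\in\mc I}\widehat w_i h(\cdot)\tilde{\vecx}_i$ is controlled exactly as in the no-outlier case: Assumption \ref{a:1}(iii) kills its mean, boundedness from THRESHOLDING gives a Bernstein/bounded-differences concentration at rate $\sqrt{\log(d/\delta)/n}$, and the thresholding-bias term is handled by the fourth-moment (finite-kurtosis) bound, producing the $\tau_\vecx = (n/\log(d/\delta))^{1/4}$ choice and the side conditions \eqref{ine:tp2}, \eqref{ine:tp2-2}. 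The genuinely new term is $\big|\sum_{i\in\mc O}\widehat w_i\, h(\text{residual}_i/(\lambda_o\sqrt n))\,\tilde{\vecX}_i^\top\vecv\big|$; since $|h|\le 1$ pointwise, this is exactly of the form bounded by Proposition \ref{p:main:out} with $u_i = h(\cdot)$, $c=1$, which yields a contribution of order $c_*'\big(c_{r_2}\|\Sigma^{1/2}\|_{\mr{op}}\sqrt{o/n} + c_{r_1}\sqrt s\,\sqrt{\lambda_*/c_*}\,\sqrt{o/n}\big)r_\Sigma$ — precisely the extra $\sqrt{o/(sn)}$ piece that has been absorbed into $\lambda_s$ in \eqref{ine:tp1}. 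Here one must first verify that COMPUTE-WEIGHT succeeds (so that $\widehat w$ exists and \eqref{ine:optM} holds), which follows from $\tau_{suc} = \|\Sigma\|_{\mr{op}}r_2^2/(1-\varepsilon)$ and the displayed bound \eqref{ine:optM}, on an event of probability $1-\delta$; this accounts for one of the three $\delta$'s.

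For the curvature lower bound I would restrict attention to $\vecv$ in the cone $\{\|\vecv_{J^c}\|_1 \le c_{\mr{RE}}\|\vecv_J\|_1\}$ coming from the $\ell_1$ penalty, use the restricted-eigenvalue condition $\mr{RE}(s,c_{\mr{RE}},\kappa)$ and $\min_{\|\vecv\|_0\le s}\vecv^\top\Sigma\vecv \ge \kappa_l^2$ together with the fact that, on the localized ball where $|\tilde{\vecX}_i^\top\vecv|$ is small relative to $\lambda_o\sqrt n$, the Huber loss behaves quadratically, so $\langle\nabla L_n(\widehat{\vecbeta})-\nabla L_n(\vecbeta^*),\vecv\rangle \gtrsim \sum_i\widehat w_i(\tilde{\vecX}_i^\top\vecv)^2$ up to the event where the residuals are large. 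The weighted second-moment matrix $\sum_i\widehat w_i\tilde{\vecX}_i\tilde{\vecX}_i^\top$ must be shown to be close to $\Sigma$ in the relevant restricted sense; this is where \eqref{ine:adv-spect} and the convex sets $\mathfrak M_r,\mathfrak U_{\lambda_*}$ enter — the sup over $\mathfrak M_r$ of $\sum_i\widehat w_i\langle\tilde{\vecX}_i\tilde{\vecX}_i^\top,M\rangle - \lambda_*\|M\|_1$ being $O(\|\Sigma\|_{\mr{op}}r^2)$ (from \eqref{ine:optM}) is exactly a bound on how much the weighted Gram matrix can inflate a trace-bounded PSD test matrix, and restricting $M$ to rank-one directions $\vecv\vecv^\top/\|\vecv\|_2^2$ with $\vecv$ in the cone converts this into the needed lower bound $\sum_i\widehat w_i(\tilde{\vecX}_i^\top\vecv)^2 \gtrsim \kappa^2\|\vecv_J\|_2^2$ minus an $O(s\sqrt{\log(d/\delta)/n})$ slack — which is why \eqref{ine:tp2}, \eqref{ine:tp2-2} demand $s\sqrt{\log(d/\delta)/n}$ small (the $s^2 \lesssim n$ requirement noted in the introduction). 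Combining: on a $1-3\delta$ event, the curvature term dominates the (score $+$ thresholding-bias $+$ outlier) terms whenever $\lambda_s$ is at least a constant times their $\ell_\infty$-scale, which is exactly the choice \eqref{ine:tp1}; then the usual lasso cone/oracle bookkeeping turns the first-order inequality into $\|\Sigma^{1/2}\vecv\|_2 \le r_\Sigma$, and $r_1, r_2$ follow from the cone inequality and $\mr{RE}$.

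The main obstacle I expect is the interaction between the random weights $\widehat w_i$ and the localization: $\widehat w$ depends on the thresholded covariates (hence on randomness), so neither the concentration of $\sum_{i\in\mc I}\widehat w_i h(\cdot)\tilde{\vecx}_i$ nor the weighted restricted-strong-convexity bound can be proved by a naive union bound at fixed weights. The paper's device for sidestepping this is precisely the variational quantity \eqref{ine:adv-spect}: everything that needs to hold for the data-dependent $\widehat w$ is first shown to hold for all weights in $\Delta^{n-1}$ simultaneously via the $\mathfrak M_r$/$\mathfrak U_{\lambda_*}$ SDP duality \eqref{ine:opt-transform}, then specialized — so the real work is making sure that the score and curvature terms I need can both be written as linear functionals of $\sum_i w_i\tilde{\vecX}_i\tilde{\vecX}_i^\top$ tested against PSD matrices in $\mathfrak M_r$, which for the curvature term is clean (rank-one $M$) but for the score term requires the extra step of reducing $|\sum_{i\in\mc O}\widehat w_i u_i\tilde{\vecX}_i^\top\vecv|$ to that form, i.e.\ invoking Proposition \ref{p:main:out} rather than re-deriving it. Assuming Proposition \ref{p:main:out} and \eqref{ine:optM} as given, the remainder is the same localized convexity argument as Theorem \ref{t:main:no} with two bookkeeping changes — the extra $\sqrt{o/(sn)}$ term in $\lambda_s$ and the altered $\tau_\vecx$ exponent (here $1/4$ rather than $1/2$, because the Gram-matrix control in \eqref{ine:cwpre} needs $\tau_\vecx^2\log(d/\delta)/n$ and $K^4/\tau_\vecx^2$ simultaneously small) — so I would organize the write-up as: (1) COMPUTE-WEIGHT succeeds; (2) gradient expansion and the four-way split; (3) bound each non-outlier term as in Theorem \ref{t:main:no}; (4) bound the outlier term by Proposition \ref{p:main:out}; (5) curvature lower bound via restricted eigenvalue plus \eqref{ine:optM}; (6) assemble the first-order inequality and run the cone argument to get \eqref{ine:result}.
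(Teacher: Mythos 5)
Your overall architecture is the same as the paper's: verify that COMPUTE-WEIGHT succeeds (one of the three $\delta$'s), split the gradient at $\vecbeta^*$ into a clean part handled exactly as in Theorem \ref{t:main:no} (Proposition \ref{p:main1:no}) and an outlier part bounded by Proposition \ref{p:main:out} with $u_i=h(\cdot)$, $c=1$, absorb the resulting $\sqrt{o/(sn)}$ term into $\lambda_s$, and close with the localized cone argument, which the paper packages as the deterministic Proposition \ref{p:main} whose hypotheses \eqref{ine:det:main:0-1}--\eqref{ine:det:main:0-4} are then checked one by one.

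The genuine gap is in your curvature step. You propose to get the restricted-strong-convexity lower bound $\sum_i\hat{w}_i(\tilde{\vecX}_i^\top\vecv)^2\gtrsim \kappa^2\|\vecv_J\|_2^2$ by restricting \eqref{ine:optM} to rank-one $M=\vecv\vecv^\top$. But \eqref{ine:optM} is a one-sided bound: for rank-one $M$ it only says $\sum_i\hat{w}_i(\tilde{\vecX}_i^\top\vecv)^2\le \lambda_*\|\vecv\|_1^2+\|\Sigma\|_{\mr{op}}r^2/(1-\varepsilon)$, i.e.\ it controls how much the weighted Gram matrix can inflate a test direction, and by itself it cannot exclude that the weights have removed inlier mass in the direction $\vecv$; the needed floor comes from the weight constraint, not from the SDP value. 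The paper never lower-bounds the weighted quadratic form: it imports curvature from the clean, unweighted sample via Proposition \ref{p:main:sc:no}, and treats the discrepancy --- the terms indexed by $\mc{O}$ and by the set $I_<$ of inliers whose weight drops below $1/(2n)$, with $|I_<|\le 2n\varepsilon$ by Proposition \ref{l:w2} --- as a perturbation bounded in absolute value using $|h(a)-h(b)|\le 2$ together with Proposition \ref{p:main:out} (weighted, over $\mc{O}$) and Proposition \ref{p:main:out2} with $m=3o$, $c=2$ (unweighted, over $\mc{O}\cup(\mc{I}\cap I_<)$). The same decomposition is also what makes the score bound legitimate: you cannot apply Bernstein ``exactly as in the no-outlier case'' to $\sum_{i\in\mc{I}}\hat{w}_i h(\cdot)\,\tilde{\vecx}_i^\top\vecv$, since $\hat{w}$ is data-dependent; the paper compares the weighted sum to the full clean average $\frac1n\sum_i h(\cdot)\,\tilde{\vecx}_i^\top\vecv$ and charges the difference to the small index sets above via Propositions \ref{p:main:out} and \ref{p:main:out2}. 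Your final paragraph half-identifies this obstacle, but your resolution only covers quantities expressible through the SDP functional, which the curvature lower bound is not. Replacing your step (5) by ``clean-data curvature from Proposition \ref{p:main:sc:no} minus small-set corrections via Propositions \ref{p:main:out}, \ref{p:main:out2} and \ref{l:w2}'' and routing the bookkeeping through Proposition \ref{p:main} turns your sketch into the paper's proof.
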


\begin{remark}
When we set $c_s = 16$, $c_r=6$, $\lambda_o  \sqrt{n} = 18K^4(\sigma+1)$ and $c_*' = 2$. Then, the error bounds become
	\begin{align}
		\|\Sigma^\frac{1}{2}(\hat{\vecbeta} -\vecbeta^*)\|_2& \lesssim_{C_{\mr{RE}}}  K^4(1+\sigma)\left(\frac{1}{\kappa} \sqrt{\frac{s\log (d/\delta)}{n}}+ \frac{\|\Sigma^\frac{1}{2}\|_{\mr{op}}}{\kappa_{l}}\sqrt{\frac{o}{n}}\right),\\
		\|\hat{\vecbeta} -\vecbeta^*\|_2& \lesssim_{C_{\mr{RE}}}  K^4(1+\sigma)\left(\frac{1}{\kappa \kappa_l} \sqrt{\frac{s\log (d/\delta)}{n}}+ \frac{\|\Sigma^\frac{1}{2}\|_{\mr{op}}}{\kappa_{l}^2}\sqrt{\frac{o}{n}}\right),
	\end{align}
	for sufficiently large $n$ such that $s^2 \lesssim_{\{\vecx_i,\xi_i\}_{i=1}^n, \|\vecbeta^*\|_1}n$.
 When $d=1$, this result is nearly optimal, however not optimal because the term containing $o$ in optimal convergence rate is $(o/n)^\frac{3}{4}$ \cite{BakPra2021Robust}.
 Whether  one can construct optimal and tractable estimator in our situation is a future work.
\end{remark}
\begin{remark}
\label{sec:remark:s2}
	In Theorem \ref{t:main}, we require $n$ is sufficiently larger than $s^2\log(d/s)$, and this is a stronger condition than the ones of Theorem \ref{t:main:no}. This phenomenon is due to sparse PCA in COMPUTE-WEIGHT.
	\cite{WanBerSam2016Statistical} revealed that there is no randomized polynomial time algorithm to estimate the top eigenvector in a scheme where $n$ is proportional to $s$ (in \cite{WanBerSam2016Statistical}, $s$ is the number of non-zero elements of the top eigenvector of covariance matrices) under the assumptions of intractability of a variant of Planted Clique Problem.  In addition to this research, other studies such as  \cite{DinKunWeiBan2023Subexponential,BreBreHopLiSch2020Statistical} suggested that a sparse PCA may inevitably have a dependence on $n$ of $s^2$ even in the absence of outliers when using polynomial-time algorithms.
	Therefore, with our framework, it would be difficult to avoid the dependence on $s^2$ of $n$, even with a different analysis from ours.
	Recently, various studies on sparse estimation problems  have emerged, suggesting that the unnatural dependence on 
 $s^2$ might be unavoidable for polynomial-time algorithms, using frameworks such as SQ lower bounds and low-degree tests (i.e., \cite{LofWeiBan2022Computationally,SchWei2022Computational,ArpVen2023Statistical,FanLiuWanYan2018Curse,DiaKanSte2017Statistical}). These works considered sparse clustering, mixed sparse regression, robust sparse mean estimation, tensor pca. Deriving tradeoffs between statistical   and computational tradeoffs   for problem similar to ours remains a future challenge.
\end{remark}
\subsection{Related work}
\label{sec:re}
There are not many papers dealing with robust sparse estimation problem from data following heavy-tailed distributions with outliers. For instance, \cite{DiaKanKarPenPit2022Robust, DiaKanLeePen2022Outlier} addressed  estimation problem of  sparse mean vector, while \cite{MerGai2023robust} dealt with sparse linear regression. Here, we will provide a detailed introduction to \cite{MerGai2023robust}, which shares a common problem setting with our paper.
\cite{MerGai2023robust} proposed some robust gradient methods for some objective functions and they are applicable to estimating  $\vecbeta^*$ from the data $\{y_i,\vecx_i\}_{i=1}^n$  from \eqref{model:adv} with slightly strong assumption on $\{\vecvarrho_i,\theta_i\}$. That is to say, the assumption is $\{\vecx_i,\xi_i\}_{i \in \mc{I}}$ remains i.i.d..
Applying the method in \cite{MerGai2023robust} for squared loss as the objective function, after $T$ step starting from $\vecbeta^0$ with $\|\vecbeta^0-\vecbeta^*\|_1\leq R$, we have an estimator $\hat{\vecbeta}_3$  such that, 
\begin{align}
	\mbb{P}\left(\|\hat{\vecbeta}_3-\vecbeta^*\|_2 \lesssim \frac{R}{2^{T/2}\sqrt{s}}+ \frac{(1+\sigma^2)\sqrt{s}}{\min_{\vecv \in \mbb{S}^{d-1}, \|\vecv\|_0\leq 2s} \vecv^\top \Sigma \vecv}\left(\sqrt{\frac{ \log (d/\delta')}{n'}}+\sqrt{\frac{o}{n'}}\right)\right)\geq 1-\delta,
\end{align}
where $n' = n/T$ and $\delta'= \delta/T$, under the condition that finite kurtosis for $\{\vecx_i\}_{i=1}^n$, $\mbb{E}\|y_i \vecx_i\|_2^2$ and $\mbb{E}y_i^2$ are infinite.
Note that, their gradient method is applicable for different objective functions and this condition can be made weaker.
Our error bound is  sharper from the perspective of convergence rate because  our result avoid the coss term such that $\sqrt{so/n}$.
However, the method in \cite{MerGai2023robust} has some merits than ours. For example, the sample complexity is smaller than ours and their method does not require the assumption such that $s^2 \lesssim n$.  
The computational cost required in \cite{MerGai2023robust} is significantly lighter than ours, and the extensive numerical experiments in  \cite{MerGai2023robust} demonstrate the practical effectiveness of their method.

Constructing an estimator with sharp error bounds that can be computed with light computational burden is considered a very interesting challenge.

\bibliographystyle{plain}
\bibliography{SELRCNCHTCO}

\appendix
\section{Key propositions for Theorems \ref{t:main:no} and Theorems \ref{t:main} }
\label{sec:keypropositions:no}
In this section, we provide four propositions needed to prove  Theorems \ref{t:main:no} and  \ref{t:main}.
First, we introduce Proposition \ref{p:main}, which is used in the proof of both Theorems \ref{t:main:no} and  \ref{t:main}.
Proposition \ref{p:main} pertains to the $\ell_1$-penalized Huber loss and is stated without considering the randomness of the input and output, i.e., it does not take Assumption \ref{a:1} into account. The applicability of Proposition \ref{p:main} under Assumption \ref{a:1} is ensured by the subsequent propositions.

\begin{proposition}
	\label{p:main}
		Define the input data for PENALIZED-HUBER-REGRESSION  as $\{\vecZ_i,Y_i\}_{i=1}^n$.
		Suppose that, for any $\vecv \in r_1 \mbb{B}^d_1 \cap r_2 \mbb{B}^d_2 \cap r_\Sigma \mbb{B}^d_\Sigma$, 
		\begin{align}
		\label{ine:det:main:0-1}
		&\left| \lambda_o\sqrt{n}\sum_{i=1}^n \frac{1}{n}h\left(\frac{Y_i-\langle \vecZ_i, \vecbeta^*\rangle}{\lambda_o \sqrt{n}}\right) \vecZ_i^\top \vecv \right|\leq r_{a,\Sigma} r_\Sigma,
		\end{align}
		and for any $\vecv \in r_1 \mbb{B}^d_1 \cap r_2 \mbb{B}^d_2$ such that $\|\Sigma^\frac{1}{2}\vecv \|_2 = r_\Sigma$,
		\begin{align}
			\label{ine:det:main:0-2}
			&b \|\Sigma^\frac{1}{2}\vecv\|_2^2- r_{b,\Sigma }r_\Sigma \leq \lambda_o\sqrt{n} \sum_{i=1}^n \frac{1}{n}\left(-h\left(\frac{Y_i-\langle \vecZ_i,\vecbeta^*+\vecv \rangle}{\lambda_o \sqrt{n}}\right) +h\left(\frac{Y_i-\langle\vecZ_i, \vecbeta^*\rangle}{\lambda_o \sqrt{n}}\right)\right) \vecZ_i^\top \vecv,
		\end{align}
		where $r_{a,\Sigma}, r_{b,\Sigma}\geq 0$ and $b>0$.
		Suppose that $\mbb{E}\vecx_i\vecx_i^\top=\Sigma$ satisfies $\mr{RE}(s,c_{\mr RE},\kappa)$ with $\kappa_{l}>0$, and 
		\begin{align}
		\label{ine:det:main:0-3}
			&\lambda_s-\frac{r_{a,\Sigma}}{c_{r_1}\sqrt{s}} >0,\quad\frac{\lambda_s +  \frac{ r_{a,\Sigma}}{c_{r_1}\sqrt{s}}}{\lambda_s - \frac{ r_{a,\Sigma}}{c_{r_1}\sqrt{s}} }\leq c_{\mr{RE}},\\
		\label{ine:det:main:0-4}
			&r_\Sigma \geq \frac{2}{b} \left( r_{a,\Sigma}+r_{b,\Sigma} + c_{r_1}\sqrt{s}\lambda_s\right),\quad r_1 =  c_{r_1}\sqrt{s}r_\Sigma\quad r_2 = c_{r_2} r_\Sigma
		\end{align}
		hold, where $c_{r_1} = c_r(1+c_{\mr{RE}})/\kappa$, $c_{r_2} = c_r(1+c_{\mr{RE}})/\kappa_{l}$ and $c_r\geq 6$.
		Then, we have the following:
		\begin{align}
			\|\vecbeta^*-\hat{\vecbeta}\|_1\leq r_1,\quad \|\vecbeta^*-\hat{\vecbeta}\|_2\leq r_2,\quad \|\Sigma^\frac{1}{2}(\vecbeta^*-\hat{\vecbeta})\|_2\leq r_\Sigma.
		\end{align}
	\end{proposition}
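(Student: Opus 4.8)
The plan is to establish Proposition \ref{p:main} by a deterministic convex-analysis argument around the optimality of $\hat{\vecbeta}$, following the standard template for $\ell_1$-penalized $M$-estimation but carefully tracking the Huber-specific quantities. Write $\vecdelta = \hat{\vecbeta}-\vecbeta^*$. By optimality of $\hat{\vecbeta}$ for the objective $F(\vecbeta) = \lambda_o^2 \sum_i H((Y_i-\vecZ_i^\top\vecbeta)/(\lambda_o\sqrt{n})) + \lambda_s\|\vecbeta\|_1$, convexity gives $0 \in \partial F(\hat{\vecbeta})$, so there is a subgradient $\hat{\vecz}\in\partial\|\hat{\vecbeta}\|_1$ with $-\lambda_o\sqrt{n}\sum_i \frac{1}{n} h\big(\frac{Y_i-\vecZ_i^\top\hat{\vecbeta}}{\lambda_o\sqrt n}\big)\vecZ_i + \lambda_s\hat{\vecz} = 0$. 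First I would rewrite the first-order/convexity inequality $F(\vecbeta^*) \ge F(\hat{\vecbeta})$ in the form
\begin{align*}
\lambda_o\sqrt{n}\sum_{i=1}^n\frac1n\Big(-h\big(\tfrac{Y_i-\vecZ_i^\top(\vecbeta^*+\vecdelta)}{\lambda_o\sqrt n}\big)+h\big(\tfrac{Y_i-\vecZ_i^\top\vecbeta^*}{\lambda_o\sqrt n}\big)\Big)\vecZ_i^\top\vecdelta \le \lambda_o\sqrt{n}\sum_{i=1}^n\frac1n h\big(\tfrac{Y_i-\vecZ_i^\top\vecbeta^*}{\lambda_o\sqrt n}\big)\vecZ_i^\top\vecdelta + \lambda_s(\|\vecbeta^*\|_1-\|\hat{\vecbeta}\|_1),
\end{align*}
using monotonicity of $h$ to control the left side and the triangle inequality $\|\vecbeta^*\|_1-\|\hat{\vecbeta}\|_1 \le \|\vecdelta_S\|_1-\|\vecdelta_{S^c}\|_1$ on the right, where $S=\mathrm{supp}(\vecbeta^*)$.

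The second step is the usual cone argument. Applying \eqref{ine:det:main:0-1} with the (as yet unverified) assumption $\vecdelta\in r_1\mbb{B}_1^d\cap r_2\mbb{B}_2^d\cap r_\Sigma\mbb{B}_\Sigma^d$, bounding $|\sum_i \frac1n h(\cdot)\vecZ_i^\top\vecdelta| \le \frac{r_{a,\Sigma}}{\lambda_o\sqrt n}r_\Sigma \le \frac{r_{a,\Sigma}}{\lambda_o\sqrt n c_{r_1}\sqrt s}\|\vecdelta\|_1$ (using $r_\Sigma \le r_1/(c_{r_1}\sqrt s)$ and $\|\vecdelta\|_1\ge$ comparison — actually one wants $r_\Sigma\,r_1 = c_{r_1}\sqrt s\, r_\Sigma^2$, so I would instead bound via $r_\Sigma\le \|\vecdelta\|_1/(c_{r_1}\sqrt s)$ when $\|\Sigma^{1/2}\vecdelta\|_2 = r_\Sigma$, handled by a rescaling/boundary reduction), and combining with nonnegativity of the left side, one derives $\|\vecdelta_{S^c}\|_1 \le c_{\mr{RE}}\|\vecdelta_S\|_1$ from \eqref{ine:det:main:0-3}. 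This puts $\vecdelta$ in the restricted cone, so $\mr{RE}(s,c_{\mr{RE}},\kappa)$ applies: $\|\Sigma^{1/2}\vecdelta\|_2 \ge \kappa\|\vecdelta_S\|_2$, hence $\|\vecdelta\|_1 \le (1+c_{\mr{RE}})\|\vecdelta_S\|_1 \le (1+c_{\mr{RE}})\sqrt s\|\vecdelta_S\|_2 \le \frac{(1+c_{\mr{RE}})\sqrt s}{\kappa}\|\Sigma^{1/2}\vecdelta\|_2$, and similarly $\|\vecdelta\|_2 \le \frac{1}{\kappa_l}\|\Sigma^{1/2}\vecdelta\|_2$ (using the sparse-minimum-eigenvalue bound and the cone membership, via $\|\vecdelta\|_2^2 \le \|\vecdelta_S\|_2^2 + \|\vecdelta_{S^c}\|_2\|\vecdelta_{S^c}\|_1/\dots$ — the standard two-step bound). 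These are exactly $\|\vecdelta\|_1 \le c_{r_1}\sqrt s\,\|\Sigma^{1/2}\vecdelta\|_2/c_r(1+c_{\mr{RE}})\cdot\dots$; matching constants $c_{r_1},c_{r_2}$ with the $c_r\ge 6$ slack.

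The third step turns the basic inequality into a bound on $\|\Sigma^{1/2}\vecdelta\|_2$ itself. On the event $\|\Sigma^{1/2}\vecdelta\|_2 = r_\Sigma$ (the boundary case to which the general case reduces by convexity of the sublevel set — this is the standard "peeling"/localization trick: if $\hat{\vecbeta}$ were outside the ball, the segment from $\vecbeta^*$ would cross the sphere at a point with strictly smaller objective, contradiction), apply \eqref{ine:det:main:0-2} to lower-bound the left side by $b\|\Sigma^{1/2}\vecdelta\|_2^2 - r_{b,\Sigma}r_\Sigma = (b r_\Sigma - r_{b,\Sigma})r_\Sigma$, and upper-bound the right side by $(r_{a,\Sigma} + c_{r_1}\sqrt s\lambda_s\cdot\text{(const)})r_\Sigma$ using the cone bound $\|\vecdelta\|_1\le c_{r_1}\sqrt s r_\Sigma$ on the $\lambda_s$ term. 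Rearranging gives $b r_\Sigma \le r_{a,\Sigma} + r_{b,\Sigma} + 2 c_{r_1}\sqrt s\lambda_s$ or so, which contradicts \eqref{ine:det:main:0-4} unless $\|\Sigma^{1/2}\vecdelta\|_2 < r_\Sigma$; so the boundary is never attained and the bounds in \eqref{ine:result} follow, with $\|\vecdelta\|_1\le r_1$, $\|\vecdelta\|_2\le r_2$ coming from the cone/RE inequalities derived in step two.

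The main obstacle I anticipate is the localization argument making the "assume $\vecdelta$ is in the three balls" step rigorous: the hypotheses \eqref{ine:det:main:0-1}–\eqref{ine:det:main:0-2} are only assumed for $\vecv$ in $r_1\mbb{B}_1^d\cap r_2\mbb{B}_2^d\cap r_\Sigma\mbb{B}_\Sigma^d$, so one cannot directly plug in $\vecdelta$ without first knowing it lies there. The clean way is to consider $\vecdelta_t = t\vecdelta$ for the largest $t\in[0,1]$ with $\vecdelta_t$ in the intersection, show $\hat{\vecbeta}_t = \vecbeta^*+\vecdelta_t$ satisfies the same optimality-derived inequality (by convexity of $F$ along the segment), run steps one–three to conclude $\|\Sigma^{1/2}\vecdelta_t\|_2$ is strictly below $r_\Sigma$ and likewise for the other two norms, forcing $t=1$. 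Getting the constants in \eqref{ine:det:main:0-3}–\eqref{ine:det:main:0-4} to line up exactly (the factors $2$, $c_r\ge 6$, the $\frac{c_{\mr{RE}}+1}{c_{\mr{RE}}-1}$ structure) with the cone and RE manipulations is the fiddly bookkeeping, but it is routine once the skeleton above is in place.
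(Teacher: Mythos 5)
Your proposal follows essentially the same route as the paper's own proof: the optimality/subgradient basic inequality transferred by convexity to a rescaled error $t(\hat{\vecbeta}-\vecbeta^*)$ lying on the boundary of the three-ball intersection, the cone argument from \eqref{ine:det:main:0-1} and \eqref{ine:det:main:0-3} combined with the RE and sparse-eigenvalue conditions to compare the $\ell_1$ and $\ell_2$ norms with $\|\Sigma^\frac{1}{2}\cdot\|_2$, and a contradiction between \eqref{ine:det:main:0-2} and \eqref{ine:det:main:0-4} when the $\Sigma^\frac{1}{2}$-boundary is active. The paper simply packages your ``largest $t$'' localization as an explicit scaling-plus-case analysis (Lemmas \ref{l:mainpre1-1}--\ref{l:mainpre2-1} and the case split in Section \ref{sec:mainproof}), so the two arguments coincide in substance.
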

	In \cite{SasFuj2023Robust}, a proposition almost identical to Proposition \ref{p:main} was introduced, and for completeness, we provide a proof.
We note that Proposition \ref{p:main} is a modification of the claim found in \cite{LugMen2019Risk,AlqCotLec2019Estimation} to deal with the case where the covariance of the covariate has a general form (not identity).
The proof of Proposition \ref{p:main} is provided in Sections \ref{sec:mainproposition-pre} and \ref{sec:mainproof}.

Second, we introduce Propositions \ref{p:main1:no} and \ref{p:main:sc:no}.
By these propositions, we see that   \eqref{ine:det:main:0-1} and \eqref{ine:det:main:0-2}
are satisfied with high probability for appropriate values of $b>0, r_{a,\Sigma},r_{b,\Sigma}, r_1, r_2, r_\Sigma \geq 0$ under the assumptions in Theorem \ref{t:main:no}.
\begin{proposition}
	\label{p:main1:no}
	Suppose that the Assumption \ref{a:1} holds and $r_1 =  c_{r_1} \sqrt{s}r_\Sigma, \, r_2 =  c_{r_2}r_\Sigma$, where $c_{r_1},\,c_{r_2}$ are some positive constants.
	Then, for any $\vecv \in r_1 \mbb{B}^d_1 \cap r_2 \mbb{B}^d_2 \cap r_\Sigma \mbb{B}^d_\Sigma$, with probability at least $1-\delta$, we have  
	\begin{align}
		\label{ine:main1:no}
		&\left|\sum_{i=1}^n  \frac{1}{n}h\left(\frac{\langle \vecx_i-\tilde{\vecx}_i, \vecbeta^*\rangle+\xi_i}{\lambda_o \sqrt{n}}\right)\langle \tilde{\vecx}_i,\vecv\rangle \right|\nonumber\\
		&\leq 4 \left(c_{r_1}\sqrt{s\frac{\log (d/\delta)}{n}}+c_{r_1}\tau_\vecx\sqrt{s}\frac{\log (d/\delta)}{n}+K^4c_{r_1}\frac{\sqrt{s}}{\tau_{\vecx}^3}+  K^4 c_{\vecbeta}^\frac{1}{4}s^\frac{3}{4}\left(\frac{1}{\tau_{\vecx}}\right)^{ 3-\frac{1}{4}}\right)r_\Sigma.
	\end{align}
\end{proposition}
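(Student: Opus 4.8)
The plan is to shift the reference point inside the Huber derivative from the thresholded residual to the pure noise, so that Assumption \ref{a:1}(iii) becomes usable, and to treat the resulting discrepancy as a small truncation effect controlled by the finite-kurtosis (fourth-moment) bound. Set $\zeta_i := \langle \vecx_i - \tilde{\vecx}_i, \vecbeta^*\rangle$, so that the argument of $h$ in \eqref{ine:main1:no} is $(\zeta_i+\xi_i)/(\lambda_o\sqrt n)$. Since $h$ is $1$-Lipschitz with $\|h\|_\infty=1$, write $h((\zeta_i+\xi_i)/(\lambda_o\sqrt n)) = h(\xi_i/(\lambda_o\sqrt n)) + R_i$ with $|R_i|\le \min\{2,|\zeta_i|/(\lambda_o\sqrt n)\}$, and bound the left-hand side of \eqref{ine:main1:no} by $|B|+|A|$, where $B:=\frac1n\sum_i h(\xi_i/(\lambda_o\sqrt n))\langle\tilde{\vecx}_i,\vecv\rangle$ and $A:=\frac1n\sum_i R_i\langle\tilde{\vecx}_i,\vecv\rangle$.

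For the main term $B$, use H\"older's inequality to extract $\|\vecv\|_1\le r_1=c_{r_1}\sqrt s\,r_\Sigma$, so that $|B|\le r_1\max_{1\le j\le d}|\frac1n\sum_i h(\xi_i/(\lambda_o\sqrt n))\tilde{\vecx}_{i_j}|$. For each fixed $j$ the summands are i.i.d., bounded in modulus by $\tau_\vecx$, with variance at most $\mbb{E}\vecx_{i_j}^2\le1$; by (iii) their expectation equals $-\mbb{E}h(\xi_i/(\lambda_o\sqrt n))(\vecx_{i_j}-\tilde{\vecx}_{i_j})$, which is at most $\mbb{E}|\vecx_{i_j}|\mathbbm{1}\{|\vecx_{i_j}|>\tau_\vecx\}\le \mbb{E}\vecx_{i_j}^4/\tau_\vecx^3\le K^4/\tau_\vecx^3$ in modulus by finite kurtosis and $\mbb{E}\vecx_{i_j}^2\le1$. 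Bernstein's inequality together with a union bound over $j$ give, with probability at least $1-\delta$, $\max_j|\cdots|\lesssim \sqrt{\log(d/\delta)/n}+\tau_\vecx\log(d/\delta)/n+K^4/\tau_\vecx^3$; multiplying by $r_1$ produces exactly the first three terms of \eqref{ine:main1:no}.

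For the remainder $A$, note $R_i=0$ unless some coordinate of $\vecx_i$ on $\mr{supp}(\vecbeta^*)$ exceeds $\tau_\vecx$, so $A=\frac1n\sum_i\mathbbm{1}\{\zeta_i\neq0\}R_i\langle\tilde{\vecx}_i,\vecv\rangle$, and I apply H\"older with exponents $(2,4,4)$:
\[
|A|\le\Big(\tfrac1n\sum_i\mathbbm{1}\{\zeta_i\neq0\}\Big)^{1/2}\Big(\tfrac1n\sum_i|R_i|^4\Big)^{1/4}\Big(\tfrac1n\sum_i\langle\tilde{\vecx}_i,\vecv\rangle^4\Big)^{1/4}.
\]
The first factor is, with high probability, $\lesssim(sK^4/\tau_\vecx^4)^{1/2}$ (Bernstein on the bounded indicators, summed over the $s$ active coordinates via $\mbb{P}(|\vecx_{i_j}|>\tau_\vecx)\le K^4/\tau_\vecx^4$). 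For the second, $|R_i|^4\le\min\{16,(|\zeta_i|/(\lambda_o\sqrt n))^4\}\le 8|\zeta_i|/(\lambda_o\sqrt n)\le 8|\zeta_i|$ (as $\lambda_o\sqrt n\ge1$), with $\mbb{E}|\zeta_i|\le c_{\vecbeta}\sum_{j\in\mr{supp}(\vecbeta^*)}\mbb{E}|\vecx_{i_j}|\mathbbm{1}\{|\vecx_{i_j}|>\tau_\vecx\}\le c_{\vecbeta}sK^4/\tau_\vecx^3$, so (Bernstein again) $(\frac1n\sum_i|R_i|^4)^{1/4}\lesssim(c_{\vecbeta}sK^4/\tau_\vecx^3)^{1/4}$. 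The third factor is handled by the sharp fourth-moment control of the thresholded design, $\frac1n\sum_i\langle\tilde{\vecx}_i,\vecv\rangle^4\lesssim K^4\|\Sigma^{1/2}\vecv\|_2^4=K^4r_\Sigma^4$, valid uniformly on $r_1\mbb{B}_1^d\cap r_2\mbb{B}_2^d\cap r_\Sigma\mbb{B}_\Sigma^d$ under the hypotheses of Theorem \ref{t:main:no}: one starts from the finite-kurtosis inequality $\mbb{E}\langle\vecx_i,\vecv\rangle^4\le K^4\|\Sigma^{1/2}\vecv\|_2^4$, adds the truncation bias, and passes to the empirical average uniformly over the constraint set. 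Multiplying the three factors yields $|A|\lesssim K^4c_{\vecbeta}^{1/4}s^{3/4}\tau_\vecx^{-(3-1/4)}r_\Sigma$, the fourth term.

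The main obstacle is precisely the third H\"older factor: the naive bound $\langle\tilde{\vecx}_i,\vecv\rangle^2\le\tau_\vecx^2\|\vecv\|_1^2$ injects spurious powers of $\tau_\vecx$ and $c_{r_1}$, so one must instead establish a genuine restricted-isometry-type estimate $\frac1n\sum_i\langle\tilde{\vecx}_i,\vecv\rangle^4\lesssim r_\Sigma^4$ (up to $K^4$) uniformly over the constraint set — this is where the restricted-eigenvalue condition, the fourth-moment hypothesis, and a covering/peeling argument (or an auxiliary concentration lemma) are needed, and where the sample-size conditions \eqref{ine:tp2:no} enter. The remaining work — tracking the lower-order terms generated by each Bernstein application and checking they are dominated under \eqref{ine:tp2:no} — is routine.
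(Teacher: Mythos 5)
Your treatment of the main term $B$ is essentially sound and parallels the paper's argument (coordinatewise Bernstein for the bounded summands $h(\cdot)\tilde{\vecx}_{i_j}$, a union bound over $j$, H\"older against $\|\vecv\|_1\le c_{r_1}\sqrt{s}\,r_\Sigma$, and the $K^4/\tau_\vecx^3$ bias from thresholding), but the remainder term $A$ is where the proof has a genuine gap. Your empirical three-factor H\"older bound requires high-probability control of three \emph{random} averages, and none of the three is actually established. The decisive one is the third factor: a uniform bound $\frac1n\sum_i\langle\tilde{\vecx}_i,\vecv\rangle^4\lesssim K^4 r_\Sigma^4$ over $r_1\mbb{B}^d_1\cap r_2\mbb{B}^d_2\cap r_\Sigma\mbb{B}^d_\Sigma$, which you yourself flag as needing an unproven covering/peeling argument. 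Under a finite-kurtosis assumption alone this is not routine: concentration of empirical fourth moments would involve eighth-moment (or at least variance-of-fourth-power) control that is not assumed, and the crude bound $\langle\tilde{\vecx}_i,\vecv\rangle^2\le\tau_\vecx^2\|\vecv\|_1^2$ only gives Bernstein deviation terms polluted by $\tau_\vecx$ and $r_1$. The paper avoids ever needing such an empirical restricted fourth-moment bound: its decomposition first removes the stochastic fluctuation in a single step ($T_1$, empirical sum minus expectation, handled exactly like your $B$), and then the analogue of your $A$ appears only as a \emph{deterministic} expectation ($T_3$), where H\"older is applied at the population level so that the finite-kurtosis inequality $\mbb{E}\langle\vecx_i,\vecv\rangle^4\le K^4\|\Sigma^{1/2}\vecv\|_2^4$ can be used directly, together with Lipschitzness of $h$ and the exponent trick $\upsilon=1/3$.

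The other two factors are also not correctly controlled. For the first, Bernstein for the exceedance indicators gives $\frac1n\sum_i\mathbbm{1}\{\zeta_i\neq0\}\lesssim p+\sqrt{p\log(1/\delta)/n}+\log(1/\delta)/n$ with $p\le sK^4/\tau_\vecx^4$; under the parameter choice $\tau_\vecx=\sqrt{n/\log(d/\delta)}$ the additive term $\log(1/\delta)/n$ dominates $p$, so your claimed bound $(sK^4/\tau_\vecx^4)^{1/2}$ for the square root is false, and the correct high-probability bound is of order $\sqrt{\log(1/\delta)/n}$, larger by a factor of roughly $\tau_\vecx/(\sqrt{s}K^2)$. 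For the second, the summands $|\zeta_i|$ are unbounded with only low-order moments available, so ``Bernstein again'' does not apply; a Markov-type bound would cost a factor $1/\delta$ rather than $\log(1/\delta)$. As a consequence, even a repaired version of your empirical H\"older route yields a remainder strictly larger than the fourth term $4K^4c_{\vecbeta}^{1/4}s^{3/4}\tau_\vecx^{-(3-1/4)}r_\Sigma$ of \eqref{ine:main1:no}, so the proposition as stated would not follow. To close the gap you should move the comparison between $h((\zeta_i+\xi_i)/(\lambda_o\sqrt n))$ and $h(\xi_i/(\lambda_o\sqrt n))$ (and the $\tilde{\vecx}_i$ versus $\vecx_i$ replacement) inside the expectation, as in the paper's $T_2$ and $T_3$, rather than trying to control the corresponding empirical averages.
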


\begin{proposition}
	\label{p:main:sc:no}
	Suppose that the Assumption \ref{a:1} holds and $r_1 =  c_{r_1} \sqrt{s}r_\Sigma, \, r_2 =  c_{r_2}r_\Sigma$, where $c_{r_1},\,c_{r_2}$  are some positive constants. Assume that 	$\lambda_o  \sqrt{n} \geq 18K^4(\sigma+1)$.
	Then, for any $\vecv \in r_1 \mbb{B}^d_1 \cap r_2 \mbb{B}^d_2 $ such that $\|\Sigma^\frac{1}{2}\vecv\|_2 = r_\Sigma$, where $r_\Sigma\leq 1$, with probability at least $1-\delta$,  we have
	\begin{align}
		\label{ine:sc:no}
		&\lambda_o \sqrt{n}\sum_{i=1}^n \frac{1}{n} \left(-h\left(\frac{\langle \vecx_i-\tilde{\vecx}_i, \vecbeta^*\rangle+\xi_i}{\lambda_o \sqrt{n}}-\frac{\tilde{\vecx}_i^\top \vecv}{\lambda_o\sqrt{n}}\right)+h\left(\frac{\langle \vecx_i-\tilde{\vecx}_i, \vecbeta^*\rangle+\xi_i}{\lambda_o \sqrt{n}}\right)  \right)\tilde{\vecx}_i^\top \vecv\nonumber \\
		&\quad \geq  \|\Sigma^\frac{1}{2}\vecv\|_2^2\left( 1-K^2\sqrt{\frac{2}{\lambda_o\sqrt{n}}} \left(\sqrt{\sigma+\|\vecbeta^*\|_1\frac{K^4}{\tau_{\vecx}^3}}+\sqrt{1+c_{r_1}\sqrt{s}\frac{K^4}{\tau_{\vecx}^3}}\right)-12K^4 c_{r_1}^2\frac{s} {\tau_{\vecx}^2}\right)\nonumber\\
		&\quad \quad -3 \lambda_o\sqrt{n} c_{r_1} \left(1+\frac{K^2}{\tau_\vecx}+\tau_\vecx\sqrt{\frac{\log (d/\delta)}{n}}\right) \sqrt{\frac{s\log(d/\delta)}{n}}\|\Sigma^\frac{1}{2}\vecv\|_2.
	\end{align}
\end{proposition}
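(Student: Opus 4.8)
\emph{Proof plan.} The plan is to exploit the piecewise-linear structure of the Huber derivative $h$. Write $\rho_i := \frac{\langle\vecx_i-\tilde{\vecx}_i,\vecbeta^*\rangle+\xi_i}{\lambda_o\sqrt{n}}$ and $t_i := \frac{\tilde{\vecx}_i^\top\vecv}{\lambda_o\sqrt{n}}$, so that the left-hand side of \eqref{ine:sc:no} equals $(\lambda_o\sqrt{n})^2\sum_{i=1}^n\frac1n\bigl(h(\rho_i)-h(\rho_i-t_i)\bigr)t_i$. Since $h$ is nondecreasing with $h'(u)=\mathbbm{1}[|u|<1]$ a.e., one has $\bigl(h(\rho_i)-h(\rho_i-t_i)\bigr)t_i = t_i\int_0^{t_i}h'(\rho_i-t_i+v)\,dv \ge t_i^2-t_i^2\mathbbm{1}\bigl[|\rho_i|+|t_i|\ge1\bigr]\ge t_i^2-t_i^2\mathbbm{1}\bigl[|\rho_i|\ge\tfrac12\bigr]-t_i^2\mathbbm{1}\bigl[|t_i|\ge\tfrac12\bigr]$. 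Summing over $i$, dividing by $n$ and restoring the factor $(\lambda_o\sqrt{n})^2$, it then suffices to prove three estimates: (a) a lower bound $\sum_i\frac1n(\tilde{\vecx}_i^\top\vecv)^2\gtrsim\|\Sigma^{1/2}\vecv\|_2^2$; (b) an upper bound on the residual spillover $\sum_i\frac1n(\tilde{\vecx}_i^\top\vecv)^2\mathbbm{1}[|\rho_i|\ge\tfrac12]$; and (c) an upper bound on the perturbation spillover $\sum_i\frac1n(\tilde{\vecx}_i^\top\vecv)^2\mathbbm{1}[|\tilde{\vecx}_i^\top\vecv|\ge\tfrac12\lambda_o\sqrt{n}]$.

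For (a) I would use $\sum_i\frac1n(\tilde{\vecx}_i^\top\vecv)^2=\vecv^\top\bigl(\frac1n\sum_i\tilde{\vecx}_i\tilde{\vecx}_i^\top\bigr)\vecv\ge\|\Sigma^{1/2}\vecv\|_2^2-\|\vecv\|_1^2\,\bigl\|\frac1n\sum_i\tilde{\vecx}_i\tilde{\vecx}_i^\top-\Sigma\bigr\|_\infty$ (Hölder), where the $\ell_\infty$ deviation splits into a truncation bias, which \eqref{ine:fc} and $\mbb{E}\vecx_{i_j}^2\le1$ bound by $\lesssim K^4/\tau_\vecx^2$ (via $\mbb{E}|\vecx_{i_j}-\tilde{\vecx}_{i_j}|\le K^4/\tau_\vecx^3$ and $\mbb{E}(\vecx_{i_j}-\tilde{\vecx}_{i_j})^2\le K^4/\tau_\vecx^2$), and a fluctuation, which Bernstein over the $\le d^2$ entries (summands bounded by $\tau_\vecx^2$, coordinatewise variance $\lesssim K^4$) bounds by $\lesssim K^2\sqrt{\log(d/\delta)/n}+\tau_\vecx^2\log(d/\delta)/n$ on an event of probability $\ge1-\delta$, exactly as in the proof of Proposition \ref{p:cwpre}. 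Invoking the standing constraint $\|\vecv\|_1\le c_{r_1}\sqrt{s}\,r_\Sigma=c_{r_1}\sqrt{s}\,\|\Sigma^{1/2}\vecv\|_2$ and $r_\Sigma\le1$ turns the bias into the $-12K^4c_{r_1}^2(s/\tau_\vecx^2)\|\Sigma^{1/2}\vecv\|_2^2$ correction, while the fluctuation (together with the cross term $K^2/\tau_\vecx$ coming from $\sqrt{\mbb{E}(\vecx_{i_j}-\tilde{\vecx}_{i_j})^2}$) feeds the final remainder $-3\lambda_o\sqrt{n}\,c_{r_1}\bigl(1+K^2/\tau_\vecx+\tau_\vecx\sqrt{\log(d/\delta)/n}\bigr)\sqrt{s\log(d/\delta)/n}\,\|\Sigma^{1/2}\vecv\|_2$ after collecting constants. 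For (b) and (c) I would linearize the indicator by $\mathbbm{1}[|z|\ge\tfrac12]\le\sqrt{2|z|}$ and apply Cauchy--Schwarz at the level of expectations, $\mbb{E}\bigl[(\tilde{\vecx}_i^\top\vecv)^2\sqrt{2|\rho_i|}\bigr]\le\sqrt{\mbb{E}(\tilde{\vecx}_i^\top\vecv)^4}\,\sqrt{2\,\mbb{E}|\rho_i|}$, with $\mbb{E}|\rho_i|\le\frac{\sigma+\|\vecbeta^*\|_1K^4/\tau_\vecx^3}{\lambda_o\sqrt{n}}$ (from $\mbb{E}|\xi_i|\le\sigma$ and the coordinatewise bias) and $\mbb{E}(\tilde{\vecx}_i^\top\vecv)^4\lesssim K^4\|\Sigma^{1/2}\vecv\|_2^4$ (a finite-kurtosis-type fourth-moment bound for the thresholded covariates together with the second-moment estimate used in (a)); this produces the term $K^2\sqrt{2/(\lambda_o\sqrt{n})}\,\sqrt{\sigma+\|\vecbeta^*\|_1K^4/\tau_\vecx^3}\,\|\Sigma^{1/2}\vecv\|_2^2$, and the identical computation for (c) — with $\mbb{E}|\tilde{\vecx}_i^\top\vecv|\le\|\Sigma^{1/2}\vecv\|_2(1+c_{r_1}\sqrt{s}\,K^4/\tau_\vecx^3)$ in place of $\mbb{E}|\rho_i|$ and $r_\Sigma\le1$ reducing $\|\Sigma^{1/2}\vecv\|_2^{5/2}$ to $\|\Sigma^{1/2}\vecv\|_2^2$ — produces the second square-root term. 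A last Bernstein bound controls the fluctuations of the two spillover averages (bounded summands), and a union bound over the $O(1)$ events gives probability at least $1-\delta$; the hypothesis $\lambda_o\sqrt{n}\ge18K^4(\sigma+1)$ is what keeps every square-root correction at most $1$.

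The main obstacle is item (b): only a first absolute moment $\mbb{E}|\xi_i|\le\sigma$ of the noise is available, so the residual spillover cannot be controlled by a direct variance computation. This is exactly why the indicator must be linearized by $\sqrt{2|\cdot|}$ before Cauchy--Schwarz — with the conjugate exponents forced to be $(2,2)$ so that only $\mbb{E}|\rho_i|$ enters — and consequently why one needs the fourth-moment estimate $\mbb{E}(\tilde{\vecx}_i^\top\vecv)^4\lesssim K^4\|\Sigma^{1/2}\vecv\|_2^4$ for the thresholded covariates, whose proof (showing coordinatewise clipping does not inflate the fourth-to-second moment ratio, combined with the $\ell_1$-localized second-moment bound) is the second delicate ingredient; the bound $\lambda_o\sqrt{n}\gtrsim K^4(\sigma+1)$ is also consumed here. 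A secondary, purely bookkeeping point is to arrange the Gram-matrix fluctuation in (a) as a remainder that is \emph{linear} rather than quadratic in $\|\Sigma^{1/2}\vecv\|_2$ — spending one factor $\|\vecv\|_1\le c_{r_1}\sqrt{s}$ as a constant bound — so that it is lower order and gets absorbed under the hypotheses of Theorems \ref{t:main:no} and \ref{t:main}. Finally, should a bound uniform over $\vecv$ be needed (as Proposition \ref{p:main} requires), the pointwise Bernstein steps are replaced by the $\ell_\infty$-duality estimate $|\vecv^\top M\vecv|\le\|\vecv\|_1^2\|M\|_\infty$, which is already uniform over the $\ell_1$-ball.
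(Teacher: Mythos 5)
Your pointwise reduction and your treatment of the \emph{population} terms track the paper closely: the paper also lower bounds the increment by a truncated quadratic minus the two spillovers (via the functions $\varphi,\psi$ in \eqref{def:phipsi}), bounds $\mbb{E}(\tilde{\vecx}_i^\top\vecv)^2$ from below through the thresholding bias $4K^4\|\vecv\|_1^2/\tau_\vecx^2$ (Lemma \ref{a:l:1}), and handles both spillover expectations exactly by Cauchy--Schwarz plus Markov with $\mbb{E}|\xi_i|\le\sigma$ and Lemma \ref{a:l:2}, as in \eqref{ine:q2}--\eqref{ine:q3}. One local problem in your version: you invoke $\mbb{E}(\tilde{\vecx}_i^\top\vecv)^4\lesssim K^4\|\Sigma^{\frac{1}{2}}\vecv\|_2^4$ for the \emph{thresholded} covariates. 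This is not established and is doubtful: clipping only yields $\mbb{E}(\tilde{\vecx}_i^\top\vecv)^4\lesssim K^4(\|\Sigma^{\frac{1}{2}}\vecv\|_2^4+\|\vecv\|_1^4)$ with no $\tau_\vecx$-decay on the $\|\vecv\|_1^4\sim s^2r_\Sigma^4$ part, which is far too large on the localization set. The paper avoids ever needing a fourth moment of $\tilde{\vecx}_i$ by first swapping $\tilde{\vecx}_i\tilde{\vecx}_i^\top$ for $\vecx_i\vecx_i^\top$ inside the spillover at the price of the $4K^4\|\vecv\|_1^2/\tau_\vecx^2$ bias, and then applying the kurtosis assumption to the untruncated $\vecx_i$ only.

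The genuine gap is the control of the empirical fluctuation, uniformly in $\vecv$ (uniformity is needed, since Proposition \ref{p:main} applies the bound at the random point $\vectheta_\eta$). Your route -- entrywise Bernstein for $\frac{1}{n}\sum_i\tilde{\vecx}_i\tilde{\vecx}_i^\top$ plus the duality $|\vecv^\top M\vecv|\le\|\vecv\|_1^2\|M\|_\infty$ -- pays $\|\vecv\|_1^2\left(K^2\sqrt{\log(d/\delta)/n}+\tau_\vecx^2\log(d/\delta)/n\right)$. With the choice $\tau_\vecx=\sqrt{n/\log(d/\delta)}$ used when this proposition feeds Theorem \ref{t:main:no}, the range term $\tau_\vecx^2\log(d/\delta)/n$ equals $1$, so this error is of order $c_{r_1}^2 s\,r_\Sigma^2$ and swamps the main term $\|\Sigma^{\frac{1}{2}}\vecv\|_2^2=r_\Sigma^2$; even after spending one factor $\|\vecv\|_1\le c_{r_1}\sqrt{s}$ to make it linear in $\|\Sigma^{\frac{1}{2}}\vecv\|_2$, you are left with a remainder of order $c_{r_1}^2 s$ (or, for smaller $\tau_\vecx$, $c_{r_1}^2 sK^2\sqrt{\log(d/\delta)/n}$), which cannot be fitted under the stated remainder $3\lambda_o\sqrt{n}c_{r_1}(1+K^2/\tau_\vecx+\tau_\vecx\sqrt{\log(d/\delta)/n})\sqrt{s\log(d/\delta)/n}\,\|\Sigma^{\frac{1}{2}}\vecv\|_2$ without an extra assumption like $c_{r_1}\sqrt{s}K^2\lesssim\lambda_o\sqrt{n}$, i.e.\ you lose a $\sqrt{s}$ and in the large-$\tau_\vecx$ regime the argument fails outright. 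The same range problem ($\tau_\vecx^2r_1^2$) afflicts your ``last Bernstein bound'' for the spillover averages, and the $\vecv$-dependent indicator in your item (c) is not a quadratic form, so the $\ell_\infty$-duality fix does not cover it. The paper's device, which is the idea missing from your plan, is to lower bound by the \emph{bounded} smoothed process $f_i(\vecv)=\varphi(\tilde{\vecx}_i^\top\vecv/(\lambda_o\sqrt{n}))\psi(\cdot)\le 1/4$, apply Bousquet's concentration inequality once to $\Delta=\sup_{\vecv}|\sum_i f_i-\mbb{E}\sum_i f_i|$ (with variance proxy $\mbb{E}f_i\lesssim(\cdot)/(\lambda_o^2 n)$), and then reduce $\mbb{E}\Delta$ by symmetrization and two contraction steps to the \emph{linear} Rademacher process $\sup_{\vecv}\frac{1}{n}\sum_i b_i\tilde{\vecx}_i^\top\vecv$ (Lemma \ref{l:1e}), whose deviation involves $\tau_\vecx\log d/n$ -- linear rather than quadratic in $\tau_\vecx$. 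This is precisely what produces the $(1+K^2/\tau_\vecx+\tau_\vecx\sqrt{\log(d/\delta)/n})\sqrt{s\log(d/\delta)/n}$ remainder, gives uniformity for free, and uses only one probabilistic event, matching the $1-\delta$ in the statement.
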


To prove Theorem \ref{t:main}, we need the following propositions   (Propositions \ref{p:main:out2} and \ref{l:w2}), in addition to the propositions above.
The poof of the propositions are provided in Section \ref{sec:proof:sec:keypropositions}.
Define $I_m$ as the index set such that $|I_m|$ = $m$.
\begin{proposition}
	\label{p:main:out2}
	Suppose that \eqref{ine:cwpre} holds.
	For any  $\vecu \in \mbb{R}^n$ such that $\|\vecu\|_\infty \leq c$ for a numerical constant $c$ and for any $\vecv  \in r_1 \mbb{B}^d_1 \cap r_2 \mbb{B}^d_2 \cap r_\Sigma \mbb{B}^d_\Sigma$, we have
	\begin{align}
		\left|\sum_{i \in I_m}\frac{1}{n}u_i \tilde{\vecx}_i^\top\vecv \right|\leq  2c\left(c_{r_2}\|\Sigma^\frac{1}{2}\|_{\mr{op}}\sqrt{\frac{m}{n}} + c_{r_1}\sqrt{s}\sqrt{K^2\sqrt{\frac{\log(d/\delta)}{n}}+\tau_{\vecx}^2 \frac{\log(d/\delta)}{n}+\frac{K^4}{\tau_{\vecx}^2}}\sqrt{\frac{m}{n}}\right)r_\Sigma.
	\end{align}
\end{proposition}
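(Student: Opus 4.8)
The strategy is to strip off the weights $u_i$ using their uniform bound, convert the remaining sum into the quadratic form $\sum_{i=1}^n(\tilde{\vecx}_i^\top\vecv)^2$ by Cauchy--Schwarz, and then bound that quadratic form by feeding the rank-one matrix $\vecv\vecv^\top$ into the assumed inequality \eqref{ine:cwpre}. The matrix $\vecv\vecv^\top$ is an admissible competitor precisely because $\vecv$ is restricted to the ball $r_2\mbb{B}^d_2$ and the radius parameter of $\mathfrak{M}_r$ is $r=r_2$.

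First I would write, using $\|\vecu\|_\infty\le c$ and the triangle inequality, $\left|\sum_{i\in I_m}\frac1n u_i\tilde{\vecx}_i^\top\vecv\right|\le \frac{c}{n}\sum_{i\in I_m}\left|\tilde{\vecx}_i^\top\vecv\right|$. Applying Cauchy--Schwarz to the $m$ terms indexed by $I_m$ and then enlarging the range of summation to $\{1,\dots,n\}$ (all summands being nonnegative), this is at most $\frac{c\sqrt m}{n}\left(\sum_{i=1}^n(\tilde{\vecx}_i^\top\vecv)^2\right)^{1/2}$. Since $\sum_{i=1}^n(\tilde{\vecx}_i^\top\vecv)^2 = n\sum_{i=1}^n\frac1n\langle\tilde{\vecx}_i\tilde{\vecx}_i^\top,\vecv\vecv^\top\rangle$, and $M=\vecv\vecv^\top$ satisfies $M\succeq 0$ and $\mr{Tr}(M)=\|\vecv\|_2^2\le r_2^2=r^2$, i.e.\ $M\in\mathfrak{M}_r$, I may apply \eqref{ine:cwpre} to this $M$. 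Using $\|\vecv\vecv^\top\|_1=\|\vecv\|_1^2\le r_1^2=c_{r_1}^2 s\,r_\Sigma^2$ (from $\vecv\in r_1\mbb{B}^d_1$), $\|\Sigma\|_{\mr{op}}=\|\Sigma^\frac{1}{2}\|_{\mr{op}}^2$ and $r^2=c_{r_2}^2 r_\Sigma^2$, this gives
\[
\sum_{i=1}^n(\tilde{\vecx}_i^\top\vecv)^2\le n\,r_\Sigma^2\left[\left(\sqrt2 K^2\sqrt{\tfrac{\log(d/\delta)}{n}}+\tau_\vecx^2\tfrac{\log(d/\delta)}{n}+\tfrac{2K^4}{\tau_\vecx^2}\right)c_{r_1}^2 s+\|\Sigma^\frac{1}{2}\|_{\mr{op}}^2 c_{r_2}^2\right].
\]

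To finish I would take square roots, split the bracket via $\sqrt{a+b}\le\sqrt a+\sqrt b$, and bound the first resulting radical using $\sqrt2 K^2\sqrt{\log(d/\delta)/n}+\tau_\vecx^2\log(d/\delta)/n+2K^4/\tau_\vecx^2\le 2\bigl(K^2\sqrt{\log(d/\delta)/n}+\tau_\vecx^2\log(d/\delta)/n+K^4/\tau_\vecx^2\bigr)$; together with $\sqrt2\le 2$ and the reductions above (which produce the overall factor $c\sqrt{m/n}\,r_\Sigma$) this reproduces exactly the asserted bound, the numerical constant $2$ absorbing the constants carried over from \eqref{ine:cwpre}. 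No step presents a genuine obstacle; the only point requiring care is checking that $\vecv\vecv^\top$ lies in $\mathfrak{M}_r$ with $r=r_2$, and tracking which of the three ball constraints on $\vecv$ (the $\ell_1$, $\ell_2$ and $\Sigma^\frac{1}{2}$ norms) controls which term on the right-hand side of \eqref{ine:cwpre}.
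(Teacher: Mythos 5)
Your proposal is correct and follows essentially the same route as the paper: bound the weights by $c$, apply Cauchy--Schwarz over $I_m$ and enlarge to the full sum, control $\sum_{i=1}^n\frac1n(\tilde{\vecx}_i^\top\vecv)^2$ by feeding the rank-one matrix $\vecv\vecv^\top\in\mathfrak{M}_{r}$ (with $r=r_2$) into \eqref{ine:cwpre}, then split the square root via $\sqrt{a+b}\le\sqrt a+\sqrt b$ and absorb the constants into the factor $2c$. The paper phrases the middle step as ``from the proof of Proposition \ref{p:cwpre} and H\"older's inequality,'' but that is exactly the $M=\vecv\vecv^\top$ specialization you describe, so no substantive difference remains.
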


Let $I_{<}$ and $I_{\geq}$ be  the sets of the indices such that $w_i < 1/(2n)$  and $w_i \geq 1/(2n)$, respectively. 
\begin{proposition}
	\label{l:w2}
	Suppose $0<\varepsilon <1 $. Then, for any $\vecw \in \Delta^{n-1}$, we have $|I_{<}| \leq 2n\varepsilon$.
\end{proposition}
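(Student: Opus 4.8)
The plan is to prove Proposition~\ref{l:w2} directly by a counting argument, exploiting the box constraint $\|\vecw\|_\infty \le \frac{1}{n(1-\varepsilon)}$ built into $\Delta^{n-1}$. The idea is that each coordinate of $\vecw$ cannot be too large, so there cannot be too many "large" coordinates (those in $I_{\ge}$) without exhausting the total mass $\sum_i w_i = 1$; correspondingly there must be many "small" coordinates in $I_{<}$. The subtlety is that $I_{<}$ being large does not immediately follow from $I_{\ge}$ being small --- we actually want a \emph{lower} bound on $|I_<|$, so I should instead upper bound $|I_\ge|$ from below isn't the right direction. Let me reconsider: we want $|I_<|\le 2n\varepsilon$, i.e.\ an \emph{upper} bound on the number of small-weight indices. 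This follows because if too many indices had weight $<\frac{1}{2n}$, then the remaining indices would have to carry more than the available total mass given the $\ell_\infty$ cap.

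Concretely, first I would write $1 = \sum_{i=1}^n w_i = \sum_{i\in I_<} w_i + \sum_{i\in I_\ge} w_i$. On $I_<$ each $w_i < \frac{1}{2n}$, so $\sum_{i\in I_<} w_i < \frac{|I_<|}{2n}$. On $I_\ge$ each $w_i \le \frac{1}{n(1-\varepsilon)}$, so $\sum_{i\in I_\ge} w_i \le \frac{|I_\ge|}{n(1-\varepsilon)} = \frac{n-|I_<|}{n(1-\varepsilon)}$. Combining, $1 < \frac{|I_<|}{2n} + \frac{n-|I_<|}{n(1-\varepsilon)}$. Next I would solve this inequality for $|I_<|$: rearranging gives $n(1-\varepsilon) < \frac{(1-\varepsilon)|I_<|}{2} + n - |I_<| = n - |I_<|\left(1 - \frac{1-\varepsilon}{2}\right) = n - |I_<|\cdot\frac{1+\varepsilon}{2}$, hence $|I_<|\cdot\frac{1+\varepsilon}{2} < n - n(1-\varepsilon) = n\varepsilon$, so $|I_<| < \frac{2n\varepsilon}{1+\varepsilon} \le 2n\varepsilon$ since $\varepsilon \ge 0$. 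This yields the claim.

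The main obstacle here is essentially bookkeeping: making sure the direction of each inequality is correct (strict vs.\ non-strict, and which set gets which bound) and that the final algebraic manipulation is carried out without sign errors, since both $\frac{1}{2n}$ and $\frac{1}{n(1-\varepsilon)}$ appear and the conclusion is a clean $2n\varepsilon$ only after using $1+\varepsilon \ge 1$. I would also double-check the boundary case $|I_<| = 0$, which is trivially consistent, and note that the hypothesis $0<\varepsilon<1$ guarantees $1-\varepsilon>0$ so that the $\ell_\infty$ bound $\frac{1}{n(1-\varepsilon)}$ is well-defined and positive. No probabilistic input or earlier proposition is needed --- this is a purely deterministic fact about the simplex $\Delta^{n-1}$, and the proof is a few lines.
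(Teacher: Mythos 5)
Your argument is correct: splitting the unit mass as $1=\sum_{i\in I_<}w_i+\sum_{i\in I_\ge}w_i$, bounding the first sum by $|I_<|/(2n)$ and the second by $(n-|I_<|)/(n(1-\varepsilon))$ via the $\ell_\infty$ constraint in $\Delta^{n-1}$, and solving for $|I_<|$ gives $|I_<|\le \frac{2n\varepsilon}{1+\varepsilon}\le 2n\varepsilon$, which is even slightly sharper than the stated bound. The paper itself never writes out a proof of this proposition (the appendix section heading announces one, but none appears), so there is nothing to compare against; your counting argument is the natural, presumably intended one. The only cosmetic point is that the strict inequality $\sum_{i\in I_<}w_i<|I_<|/(2n)$ fails when $I_<=\emptyset$; using non-strict inequalities throughout, or noting as you do that the empty case is trivial, removes the issue.
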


\section{Auxiliary lemmas}
For an event $E$, define the indicator function of $E$ as $\mr{I}_{E}$.
For a vector $\vecv \in \mbb{R}^d$, we denote the $i$-th element of the vector as $\vecv_i$.
In this section, we introduce some useful lemmas.
\begin{lemma}
	\label{a:thex}
	Define $z \in \mbb{R}$ as a random variable  such that $\mbb{E}z^4 \leq \sigma_{z,4}^4$. Then we have
	\begin{align}
		\mbb{E}\mr{I}_{|z|\geq \tau_{z}}\leq  \frac{\sigma_{z,4}^4}{\tau_{z}^4}.
	\end{align}
\end{lemma}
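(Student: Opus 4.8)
\textbf{Proof proposal for Lemma \ref{a:thex}.}

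The plan is to apply Markov's inequality to the random variable $z^4$. First I would observe that the event $\{|z| \geq \tau_z\}$ coincides with the event $\{z^4 \geq \tau_z^4\}$, since the fourth power is monotone on nonnegative reals and $\tau_z > 0$. Hence $\mr{I}_{|z| \geq \tau_z} = \mr{I}_{z^4 \geq \tau_z^4}$, and taking expectations gives $\mbb{E}\mr{I}_{|z|\geq\tau_z} = \mbb{P}(z^4 \geq \tau_z^4)$.

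Next I would bound this probability by Markov's inequality applied to the nonnegative random variable $z^4$:
\begin{align}
	\mbb{P}\left(z^4 \geq \tau_z^4\right) \leq \frac{\mbb{E}z^4}{\tau_z^4}.
\end{align}
Finally, using the hypothesis $\mbb{E}z^4 \leq \sigma_{z,4}^4$, the right-hand side is at most $\sigma_{z,4}^4/\tau_z^4$, which is the claimed bound. Chaining these three steps yields $\mbb{E}\mr{I}_{|z|\geq\tau_z} \leq \sigma_{z,4}^4/\tau_z^4$.

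There is essentially no obstacle here: the statement is a one-line consequence of Markov's inequality for the fourth moment, and the only thing to be careful about is the (trivial) equivalence of the two events, which uses $\tau_z > 0$. If $\tau_z = 0$ the inequality still holds vacuously whenever $\sigma_{z,4} > 0$, so no case distinction is really needed for the intended application.
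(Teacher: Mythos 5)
Your proposal is correct and follows essentially the same route as the paper: rewrite $\mbb{E}\mr{I}_{|z|\geq\tau_z}$ as the probability $\mbb{P}(|z|\geq\tau_z)$ and apply Markov's inequality to the fourth power, then use $\mbb{E}z^4\leq\sigma_{z,4}^4$. The paper's one-line argument is identical in substance, so there is nothing further to add.
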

\begin{proof}
	\begin{align}
		\mbb{E}\mr{I}_{|z|\geq \tau_{z}}&\stackrel{(a)}{\leq} \mbb{P}\left(|z|\geq \tau_{z}\right)\stackrel{(b)}{\leq}  \mbb{E}\frac{|z|^4}{\tau_{z}^4}\leq \frac{\sigma_{z,4}^4}{\tau_{z}^4},
	\end{align}
	where (a) follows from the relationship between expectation and probability, and (b) follows from Markov's inequality.
\end{proof}

\begin{lemma}
	\label{a:l:1}
	Define $\vecz \in \mbb{R}^d$ as a random vector  such that $\max_{1\leq j\leq d}\mbb{E} \vecz_j^4\leq \sigma_{\vecz,4}^4$, and define $\tilde{\vecz}$ as  a random vector such that for any $j \in \{1,\cdots,d\}$, $\tilde{\vecz}_j = \mr{sgn}(\vecz_j)\min(\tau_\vecz,|\vecz_j|)$ with a positive constant $\tau_\vecz$. Then for any $\vecv \in \mbb{R}^d$, we have
	\begin{align}
		-4\frac{\sigma_{\vecz,4}^4} {\tau_{\vecz}^2}\|\vecv\|_1^2+ \mbb{E}(\vecz^\top \vecv)^2	\leq \mbb{E}(\tilde{\vecz}^\top \vecv)^2 &\leq  4\frac{\sigma_{\vecz,4}^4} {\tau_{\vecz}^2}\|\vecv\|_1^2+ \mbb{E}(\vecz^\top \vecv)^2.
	\end{align}
\end{lemma}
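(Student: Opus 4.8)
The plan is to establish the single two‑sided bound
\[
\left|\,\mbb{E}(\tilde{\vecz}^\top\vecv)^2 - \mbb{E}(\vecz^\top\vecv)^2\,\right| \;\leq\; 4\,\frac{\sigma_{\vecz,4}^4}{\tau_\vecz^2}\,\|\vecv\|_1^2 ,
\]
which is equivalent to the asserted pair of inequalities. Expanding both quadratic forms coordinatewise, $(\tilde{\vecz}^\top\vecv)^2-(\vecz^\top\vecv)^2=\sum_{j,k}(\tilde{\vecz}_j\tilde{\vecz}_k-\vecz_j\vecz_k)\vecv_j\vecv_k$. The truncation obeys the elementary pointwise bounds $|\tilde{\vecz}_j|\leq|\vecz_j|$ and $|\tilde{\vecz}_j-\vecz_j|\leq|\vecz_j|\,\mr{I}_{|\vecz_j|>\tau_\vecz}$ (the left side vanishes unless $|\vecz_j|>\tau_\vecz$, in which case it equals $|\vecz_j|-\tau_\vecz$). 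Writing $\tilde{\vecz}_j\tilde{\vecz}_k-\vecz_j\vecz_k=(\tilde{\vecz}_j-\vecz_j)\tilde{\vecz}_k+\vecz_j(\tilde{\vecz}_k-\vecz_k)$ and applying these bounds yields the coordinatewise estimate
\[
|\tilde{\vecz}_j\tilde{\vecz}_k-\vecz_j\vecz_k|\;\leq\;|\vecz_j||\vecz_k|\left(\mr{I}_{|\vecz_j|>\tau_\vecz}+\mr{I}_{|\vecz_k|>\tau_\vecz}\right).
\]

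Next I would take expectations and exploit the symmetry of the double sum in $(j,k)$ to reduce everything to bounding $\mbb{E}\left[|\vecz_j||\vecz_k|\,\mr{I}_{|\vecz_j|>\tau_\vecz}\right]$ for each pair. By AM--GM, $|\vecz_j||\vecz_k|\leq\tfrac12(\vecz_j^2+\vecz_k^2)$, so this expectation splits into a ``diagonal'' term $\mbb{E}[\vecz_j^2\,\mr{I}_{|\vecz_j|>\tau_\vecz}]$ and an ``off-diagonal'' term $\mbb{E}[\vecz_k^2\,\mr{I}_{|\vecz_j|>\tau_\vecz}]$. For the former, on the event $\{|\vecz_j|>\tau_\vecz\}$ we have $\vecz_j^2\leq\vecz_j^4/\tau_\vecz^2$, hence it is at most $\mbb{E}\vecz_j^4/\tau_\vecz^2\leq\sigma_{\vecz,4}^4/\tau_\vecz^2$. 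For the latter, Cauchy--Schwarz gives $\mbb{E}[\vecz_k^2\,\mr{I}_{|\vecz_j|>\tau_\vecz}]\leq\sqrt{\mbb{E}\vecz_k^4}\,\sqrt{\mbb{E}\,\mr{I}_{|\vecz_j|>\tau_\vecz}}$, and then $\mbb{E}\vecz_k^4\leq\sigma_{\vecz,4}^4$ combined with Lemma \ref{a:thex} applied to $\vecz_j$ (yielding $\mbb{E}\,\mr{I}_{|\vecz_j|>\tau_\vecz}\leq\sigma_{\vecz,4}^4/\tau_\vecz^4$) bounds it by $\sigma_{\vecz,4}^4/\tau_\vecz^2$ as well. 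Therefore $\mbb{E}[|\vecz_j||\vecz_k|\,\mr{I}_{|\vecz_j|>\tau_\vecz}]\leq\sigma_{\vecz,4}^4/\tau_\vecz^2$.

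Assembling these pieces and summing over $j,k$ gives $\left|\mbb{E}(\tilde{\vecz}^\top\vecv)^2-\mbb{E}(\vecz^\top\vecv)^2\right|\leq 2\,\frac{\sigma_{\vecz,4}^4}{\tau_\vecz^2}\sum_{j,k}|\vecv_j||\vecv_k|=2\,\frac{\sigma_{\vecz,4}^4}{\tau_\vecz^2}\|\vecv\|_1^2$, which is in fact slightly stronger than claimed. There is no real obstacle here; the only point demanding care is arranging the moment estimates so that the power $\tau_\vecz^{-2}$ appears rather than the weaker $\tau_\vecz^{-1}$ that a naive Cauchy--Schwarz on $|\vecz_j||\vecz_k|\,\mr{I}_{|\vecz_j|>\tau_\vecz}$ would give. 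This forces one to use the fourth-moment bound on \emph{both} coordinates (together with Lemma \ref{a:thex}), and it is precisely what makes the estimate strong enough for the later applications of the lemma.
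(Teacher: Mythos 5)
Your proof is correct and follows essentially the same route as the paper's: expand $\mbb{E}(\tilde{\vecz}^\top\vecv)^2-\mbb{E}(\vecz^\top\vecv)^2$ coordinatewise, bound $|\tilde{\vecz}_j\tilde{\vecz}_k-\vecz_j\vecz_k|$ via the truncation indicators, control the resulting expectations with the fourth-moment condition and Lemma \ref{a:thex}, and sum against $|\vecv_j||\vecv_k|$ to obtain the $\|\vecv\|_1^2$ factor. The only differences are bookkeeping (you use the sharper $|\tilde{\vecz}_j-\vecz_j|\leq|\vecz_j|\mr{I}_{|\vecz_j|>\tau_\vecz}$ and an AM--GM split where the paper applies Cauchy--Schwarz to the joint moment $\mbb{E}\vecz_j^2\vecz_k^2$), which even yields the slightly better constant $2$ in place of $4$.
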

\begin{proof}
	From simple algebra, we have
	\begin{align}
		\mbb{E}(\tilde{\vecz}^\top \vecv)^2 &= \mbb{E}\langle \tilde{\vecz} \tilde{\vecz}^\top , \vecv \vecv^\top \rangle\nonumber\\
		&= \mbb{E}\langle \tilde{\vecz} \tilde{\vecz}_i^\top-\vecz \vecz^\top , \vecv \vecv^\top \rangle+\mbb{E}\langle \vecz \vecz^\top , \vecv \vecv^\top \rangle\nonumber\\
		&= \mbb{E}\langle \tilde{\vecz} \tilde{\vecz}^\top-\vecz \vecz^\top , \vecv \vecv^\top \rangle+\mbb{E}(\vecz^\top \vecv)^2,
	\end{align}
	and we have
	\begin{align}
		\label{eq0:a:l:1}
		- |\mbb{E}\langle \tilde{\vecz} \tilde{\vecz}^\top-\vecz \vecz^\top , \vecv \vecv^\top \rangle|+\mbb{E}(\vecz^\top \vecv)^2\leq  \mbb{E}(\tilde{\vecz}^\top \vecv)^2\leq |\mbb{E}\langle \tilde{\vecz} \tilde{\vecz}^\top-\vecz \vecz^\top , \vecv \vecv^\top \rangle|+\mbb{E}(\vecz^\top \vecv)^2,
	\end{align}
	For any $1\leq j,k,\leq d$, we have 
	\begin{align}
		\label{eq1:a:l:1}
		|\mbb{E}\left(\tilde{\vecz}_{j}\tilde{\vecz}_{k} -\vecz_{j}\vecz_{k}\right)| &=  |\mbb{E}\left(\tilde{\vecz}_{j}(\tilde{\vecz}_{k}-\vecz_{k}) +\vecz_{k}(\tilde{\vecz}_{j}-\vecz_{j}) \right)| \nonumber\\
		&\leq  |\mbb{E}\tilde{\vecz}_{j}(\tilde{\vecz}_{k}-\vecz_{k}) +|\mbb{E}\vecz_{k}(\tilde{\vecz}_{j}-\vecz_{j}) | \nonumber\\
		&\leq  \mbb{E}|\tilde{\vecz}_{j}||\tilde{\vecz}_{k}-\vecz_{k}| +\mbb{E}|\vecz_{k}||\tilde{\vecz}_{j}-\vecz_{j} | \nonumber\\
		&=  \mbb{E}|\tilde{\vecz}_{j}||\tilde{\vecz}_{k}-\vecz_{k}|\mr{I}_{|\vecz_{k}|>\tau_{\vecz}} +\mbb{E}|\vecz_{k}||\tilde{\vecz}_{j}-\vecz_{j} |\mr{I}_{|\vecz_j|>\tau_{\vecz}} \nonumber\\
		&\leq   \mbb{E}|\vecz_{j}||\tilde{\vecz}_{k}-\vecz_{k}|\mr{I}_{|\vecz_{k}|>\tau_{\vecz}} +\mbb{E}|\vecz_{k}||\tilde{\vecz}_{j}-\vecz_{j} |\mr{I}_{|\vecz_j|>\tau_{\vecz}} \nonumber\\
		&\leq  2 \mbb{E}|\vecz_{j}||\vecz_{k}|\mr{I}_{|\vecz_{k}|>\tau_{\vecz}} +2\mbb{E}|\vecz_{k}||\vecz_{j} |\mr{I}_{|\vecz_j|>\tau_{\vecz}} \nonumber\\
		&\leq  2\sqrt{ \mbb{E}\vecz_{j}^2 \vecz_{k}^2} \sqrt{\mbb{E}\mr{I}_{|\vecz_{k}|>\tau_{\vecz}}} + 2\sqrt{\mbb{E}\vecz_{k}^2\vecz_{j}^2}\sqrt{\mr{I}_{|\vecz_{j}|>\tau_{\vecz}}} \nonumber\\
		&=  4\sqrt{ \mbb{E}\vecz_{j}^2 \vecz_{k}^2} \sqrt{\mbb{E}\mr{I}_{|\vecz_{k}|>\tau_{\vecz}}}\leq 4\frac{\sigma_{\vecz,4}^4} {\tau_{\vecz}^2},
	\end{align}
	where the last inequality follows from Lemma \ref{a:thex} and $\max_{1\leq j\leq d}\mbb{E} \vecz_j^4\leq \sigma_{\vecz,4}^4$.

	From \eqref{eq0:a:l:1} and  \eqref{eq1:a:l:1}, we have
	\begin{align}
		\label{eq3:a:l:1}
		-4\frac{\sigma_{\vecz,4}^4} {\tau_{\vecz}^2}\|\vecv\|_1^2+ \mbb{E}(\vecz_i^\top \vecv)^2	\leq \mbb{E}(\tilde{\vecz}^\top \vecv)^2 &\leq  4\frac{\sigma_{\vecz,4}^4} {\tau_{\vecz}^2}\|\vecv\|_1^2+ \mbb{E}(\vecz^\top \vecv)^2.
	\end{align}
\end{proof}

\begin{lemma}
	\label{a:l:2}
	Define $\vecz \in \mbb{R}^d$ as a random vector  such that $\max_{1\leq j\leq d}\mbb{E} \vecz_j^4\leq \sigma_{\vecz,4}^4$, and define $\tilde{\vecz}$ as a random vector such that for any $j \in \{1,\cdots,d\}$, $\tilde{\vecz}_j =\mr{sgn}(\vecz_j)\min(\tau_\vecz,|\vecz_j|)$ with a positive constant $\tau_\vecz$. Then for any $\vecv \in \mbb{R}^d$, we have
	\begin{align}
		\mbb{E}|\langle\tilde{\vecz}-\vecz,\vecv\rangle| \leq 2\|\vecv\|_1 \frac{\sigma_{\vecz,4}^4}{\tau_{\vecz}^3}
	\end{align}
\end{lemma}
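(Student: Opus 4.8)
The plan is to reduce the bound to a coordinatewise truncation estimate and then dispose of each coordinate using the fourth‑moment hypothesis, mirroring Lemmas~\ref{a:thex} and \ref{a:l:1}.

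First I would expand $\langle\tilde{\vecz}-\vecz,\vecv\rangle=\sum_{j=1}^d\vecv_j(\tilde{\vecz}_j-\vecz_j)$ and apply the triangle inequality to obtain
\begin{align}
	\mbb{E}|\langle \tilde{\vecz}-\vecz,\vecv\rangle| \leq \sum_{j=1}^d |\vecv_j|\,\mbb{E}|\tilde{\vecz}_j-\vecz_j|;
\end{align}
it then suffices to show $\mbb{E}|\tilde{\vecz}_j-\vecz_j| \leq 2\sigma_{\vecz,4}^4/\tau_{\vecz}^3$ for every $j$, since summing this against $|\vecv_j|$ produces exactly $2\|\vecv\|_1\sigma_{\vecz,4}^4/\tau_{\vecz}^3$.

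Next I would observe that the truncation acts only on the event $\{|\vecz_j|>\tau_\vecz\}$, on which $|\tilde{\vecz}_j|=\tau_\vecz\leq|\vecz_j|$, so that $|\tilde{\vecz}_j-\vecz_j|\leq|\tilde{\vecz}_j|+|\vecz_j|\leq 2|\vecz_j|\,\mr{I}_{|\vecz_j|>\tau_\vecz}$ everywhere; taking expectations reduces the task to bounding $\mbb{E}|\vecz_j|\,\mr{I}_{|\vecz_j|>\tau_\vecz}$. On that same event $|\vecz_j|\leq|\vecz_j|^4/\tau_\vecz^3$, hence
\begin{align}
	\mbb{E}|\vecz_j|\,\mr{I}_{|\vecz_j|>\tau_\vecz}\leq \tau_\vecz^{-3}\,\mbb{E}\vecz_j^4 \leq \sigma_{\vecz,4}^4/\tau_\vecz^3,
\end{align}
using $\max_{1\le j\le d}\mbb{E}\vecz_j^4\le\sigma_{\vecz,4}^4$. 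Chaining the three estimates gives the claim.

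I do not expect any genuine obstacle here — the computation is short. The only point worth care is how to control $\mbb{E}|\vecz_j|\,\mr{I}_{|\vecz_j|>\tau_\vecz}$: routing it through Cauchy–Schwarz and Lemma~\ref{a:thex}, as is done inside the proof of Lemma~\ref{a:l:1}, would only deliver the weaker decay $\tau_\vecz^{-2}$, so I would instead use the pointwise inequality $|\vecz_j|\le|\vecz_j|^4\tau_\vecz^{-3}$ valid on the truncation event, which is what yields the stated $\tau_\vecz^{-3}$ rate.
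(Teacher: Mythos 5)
Your proposal is correct and follows essentially the same route as the paper: triangle inequality over coordinates, the bound $|\tilde{\vecz}_j-\vecz_j|\leq 2|\vecz_j|\,\mr{I}_{|\vecz_j|>\tau_\vecz}$, and then a tail estimate giving $\mbb{E}|\vecz_j|\,\mr{I}_{|\vecz_j|>\tau_\vecz}\leq \sigma_{\vecz,4}^4/\tau_\vecz^3$. The only (harmless) difference is in that last step: you use the pointwise inequality $|\vecz_j|\leq |\vecz_j|^4/\tau_\vecz^3$ on the truncation event, whereas the paper applies H{\"o}lder's inequality with exponents $(4,4/3)$ together with Lemma \ref{a:thex}; both give the identical constant, and your observation that a Cauchy--Schwarz route would only yield the weaker $\tau_\vecz^{-2}$ rate is accurate.
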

\begin{proof}
	From simple algebra, we have
	\begin{align}
		\label{eq0:a:l:2}
		\mbb{E}|\langle\tilde{\vecz}-\vecz,\vecv\rangle|  &=  \mbb{E} \left|\sum_{1\leq j\leq d} (\tilde{\vecz}_{j}-\vecz_{j})\vecv_j\right|\nonumber\\
		&\leq  \sum_{1\leq j\leq d}  \mbb{E}\left|(\tilde{\vecz}_{j}-\vecz_{j})\vecv_j\right|\nonumber\\
		&\leq 2\sum_{1\leq j\leq d}  \mbb{E}\left|\vecz_{j}\mr{I}_{|\vecz_{j}|\geq \tau_{\vecz}}\vecv_j\right|\nonumber\\
		&\stackrel{(a)}{=} 2\sum_{1\leq j \leq d} \left( \mbb{E}\left|\vecz_{j}\vecv_j\right|^4\right)^\frac{1}{4} \left(\mbb{E}\mr{I}_{\vecz_{j}\geq \tau_{\vecz}}\right)^\frac{3}{4}\nonumber\\
		&\stackrel{(b)}{=2}\sum_{1 \leq j\leq d} |\sigma_{\vecz,4}\vecv_j| \left(\frac{\sigma_{\vecz,4}^4}{\tau_{\vecz}^4} \right)^\frac{3}{4}=2\|\vecv\|_1 \frac{\sigma_{\vecz,4}^4}{\tau_{\vecz}^3},
	\end{align}
	where  (a) follows from H{\"o}lder's inequality, and (b) follows from Lemma \ref{a:thex}.
\end{proof}

\begin{lemma}
	\label{l:1e}
	Define $\{\vecz_i\}_{i=1}^n $ as a sequence of i.i.d. $d$-dimensional random vector  such that $\max_{1\leq j\leq d}\mbb{E} \vecz_{i_j}^2\leq 1$. For any $1\leq i\leq n$ and $1\leq j\leq d$ define $\tilde{\vecz}_{i_j} = \mr{sgn}(\vecz_{i_j})\min(\tau_\vecz,|\vecz_{i_j}|)$ with a positive constant $\tau_\vecz$. 
	Define $\{\alpha_i\}_{i=1}^n$ as a sequence of i.i.d. Rademacher random variables which are independent of $\{\tilde{\vecz}_i\}_{i=1}^n$. Then  we have
	\begin{align}
		\label{ine:gc-normale}
		&\mbb{E} \sup_{\vecv\in r_1 \mbb{B}^d_1 } \left|\frac{1}{n}\sum_{i=1}^n \alpha_i \tilde{\vecz}_i^\top \vecv\right| \leq \sqrt{2\frac{\log d}{n}}r_1+\tau_\vecz\frac{\log d}{n}r_1.
	\end{align}
\end{lemma}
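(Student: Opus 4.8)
The plan is to bound the Rademacher complexity of linear functions over the $\ell_1$-ball $r_1 \mbb{B}_1^d$ by dualizing and then controlling the $\ell_\infty$ norm of the random vector $\frac1n\sum_i \alpha_i \tilde{\vecz}_i$. First I would observe that $\sup_{\vecv \in r_1\mbb{B}_1^d}\left|\frac1n\sum_{i=1}^n \alpha_i \tilde{\vecz}_i^\top\vecv\right| = r_1\left\|\frac1n\sum_{i=1}^n \alpha_i \tilde{\vecz}_i\right\|_\infty = r_1 \max_{1\le j\le d}\left|\frac1n\sum_{i=1}^n \alpha_i \tilde{\vecz}_{i_j}\right|$, so that
\begin{align}
\mbb{E}\sup_{\vecv\in r_1\mbb{B}_1^d}\left|\frac1n\sum_{i=1}^n \alpha_i \tilde{\vecz}_i^\top\vecv\right| = r_1\, \mbb{E}\max_{1\le j\le d}\left|\frac1n\sum_{i=1}^n \alpha_i \tilde{\vecz}_{i_j}\right|.
\end{align}
Then, conditionally on $\{\tilde{\vecz}_i\}_{i=1}^n$, each coordinate sum $\frac1n\sum_i \alpha_i \tilde{\vecz}_{i_j}$ is a sum of independent bounded mean-zero terms, with $|\alpha_i \tilde{\vecz}_{i_j}|\le \tau_\vecz$ and conditional variance proxy $\frac1{n^2}\sum_i \tilde{\vecz}_{i_j}^2 \le \frac1{n^2}\sum_i \vecz_{i_j}^2$.

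Next I would apply a Bernstein-type maximal inequality (or its expectation form) over the $d$ coordinates. A clean route is to use the sub-exponential / Bernstein bound for the moment generating function of bounded random variables together with a union-type argument: for any $\lambda>0$,
\begin{align}
\mbb{E}\max_{1\le j\le d}\left|\frac1n\sum_{i=1}^n \alpha_i \tilde{\vecz}_{i_j}\right| \le \frac1\lambda \log\left(2d\, \max_{1\le j\le d}\mbb{E}\exp\left(\lambda \frac1n\sum_{i=1}^n \alpha_i \tilde{\vecz}_{i_j}\right)\right),
\end{align}
and then bound the MGF using $\mbb{E}_{\alpha}\exp(\lambda \alpha_i \tilde{\vecz}_{i_j}/n) \le \exp(\lambda^2 \tilde{\vecz}_{i_j}^2/(2n^2))$ when $\lambda\tau_\vecz/n \le 1$ (actually the Rademacher MGF satisfies $\mbb{E}e^{\lambda\alpha t} = \cosh(\lambda t)\le e^{\lambda^2 t^2/2}$ with no range restriction, but to get the $\tau_\vecz \log d/n$ term cleanly one uses the truncated/Bernstein form), so that after taking expectation over $\{\tilde{\vecz}_i\}$ and using $\frac1n\sum_i \mbb{E}\vecz_{i_j}^2 \le 1$ one gets a bound of the shape $\frac{\lambda}{2n} + \frac{\log(2d)}{\lambda}$ for $\lambda$ in the admissible range; optimizing $\lambda = \sqrt{2n\log d}$ (checking it satisfies the constraint) yields the leading term $\sqrt{2\log d/n}$, and the boundary regime where the constraint $\lambda\tau_\vecz \le n$ binds produces the additive correction $\tau_\vecz \log d/n$. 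Combining the two cases gives $\mbb{E}\max_j|\cdots| \le \sqrt{2\log d/n} + \tau_\vecz \log d/n$, and multiplying by $r_1$ finishes the proof.

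The main obstacle is handling the truncation/range constraint in the Bernstein step carefully enough to land exactly on the stated constants $\sqrt{2\log d/n}$ and $\tau_\vecz\log d/n$ rather than slightly worse universal constants; one must be careful whether to use the Rademacher MGF bound $\cosh(\lambda t)\le e^{\lambda^2 t^2/2}$ (which needs the second moment control but no boundedness) or a Bernstein bound exploiting $|\tilde{\vecz}_{i_j}|\le\tau_\vecz$ (which is where the $\tau_\vecz$ term enters). The cleanest reconciliation is: if $\sqrt{2n\log d}\,\tau_\vecz \le n$, i.e. $\tau_\vecz \le \sqrt{n/(2\log d)}$, take $\lambda=\sqrt{2n\log d}$ and the first term alone suffices; otherwise $\tau_\vecz\log d/n \ge \sqrt{2}\cdot\sqrt{\log d/n}\cdot\tfrac12$ roughly, so the crude bound $\mbb{E}\max_j|\frac1n\sum_i\alpha_i\tilde{\vecz}_{i_j}| \le \tau_\vecz$ (or an interpolation) is already dominated by $\tau_\vecz\log d/n$. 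Everything else — the dualization identity and the bound $\frac1n\sum_i\tilde{\vecz}_{i_j}^2\le\frac1n\sum_i\vecz_{i_j}^2$ with expectation $\le 1$ — is routine.
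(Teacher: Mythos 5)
Your overall route coincides with the paper's: you dualize, $\sup_{\vecv\in r_1\mbb{B}^d_1}\bigl|\frac{1}{n}\sum_{i=1}^n\alpha_i\tilde{\vecz}_i^\top\vecv\bigr| = r_1\bigl\|\frac{1}{n}\sum_{i=1}^n\alpha_i\tilde{\vecz}_i\bigr\|_\infty$, and then control the expected maximum of the $d$ coordinate averages by a Bernstein-type maximal inequality exploiting $|\tilde{\vecz}_{i_j}|\le\tau_\vecz$ and $\mbb{E}\tilde{\vecz}_{i_j}^2\le 1$. The paper does exactly this, except that instead of re-deriving the maximal inequality it verifies the moment condition $\frac{1}{n}\sum_{i=1}^n\mbb{E}|\alpha_i\tilde{\vecz}_{i_j}|^p\le\tau_\vecz^{p-2}$ and invokes Lemma 14.12 of \cite{BulGee2011Statistics}, which packages precisely the soft-max/MGF optimization you are sketching, and then finishes with H{\"o}lder as you do.

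The genuine gap is in your reconciliation of the two regimes. When the unconstrained sub-Gaussian choice $\lambda=\sqrt{2n\log d}$ violates the range condition, i.e. $\tau_\vecz>\sqrt{n/(2\log d)}$, you propose to fall back on the crude bound $\mbb{E}\max_j\bigl|\frac{1}{n}\sum_i\alpha_i\tilde{\vecz}_{i_j}\bigr|\le\tau_\vecz$ and claim it is dominated by $\tau_\vecz\log d/n$. That domination holds only if $\log d\ge n$; whenever $\log d<n$ (the typical situation) one has $\tau_\vecz\log d/n\ll\tau_\vecz$, so this case is simply not covered (e.g. $n=100$, $d=3$, $\tau_\vecz=10$: the target bound is about $0.26$ while the crude bound is $10$). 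The correct treatment does not split into cases at all: the moment bounds give a Bernstein-type MGF estimate of the form $\mbb{E}\exp(\lambda S_j)\le\exp\bigl(\frac{\lambda^2/(2n)}{1-\lambda\tau_\vecz/n}\bigr)$ for all $0<\lambda<n/\tau_\vecz$, with $S_j=\frac{1}{n}\sum_i\alpha_i\tilde{\vecz}_{i_j}$; inserting this into the soft-max bound and optimizing $\lambda$ within the admissible range interpolates automatically between the sub-Gaussian and sub-exponential regimes and yields a bound of the advertised shape $\sqrt{2\log(2d)/n}+\tau_\vecz\log(2d)/n$ in every regime. This is exactly the content of the lemma the paper cites, so the fix is either to cite it (after checking the moment condition, which your computation already gives) or to carry out that constrained optimization honestly; note also that the careful computation naturally produces $\log(2d)$-type factors, so landing exactly on the stated constants requires some extra bookkeeping, which is another reason the paper leans on the packaged lemma rather than redoing the argument.
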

\begin{proof}

	We note that $
		\mbb{E} \alpha_i^2 \tilde{z}_{i_j}^2 \leq 1$ and $\mbb{E} \alpha_i^p |\tilde{z}_{i_j}|^p\leq 
		\tau_\vecz^{p-2}\mbb{E} \tilde{z}_{i_j}^2  \leq \tau_\vecz^{p-2}$. Then, we have 
		$\frac{1}{n}\sum_{i=1}^n \mbb{E} \alpha_i^p |\tilde{z}_{i_j}|^p\leq \tau_\vecz^{p-2}$.
  From Lemma 14.12 of \cite{BulGee2011Statistics} and $d\geq 3$, we have
	\begin{align}
		 \mbb{E}\left\|\frac{1}{n}\sum_{i=1}^n \alpha_i\tilde{\vecz}_i\right\|_\infty \leq \sqrt{2\frac{\log d}{n}}+\tau_\vecz\frac{\log d}{n}.
	\end{align}
	Lastly, from H{\"o}lder's inequality, the proof is complete.
\end{proof}

\section{Preparation of proof of Proposition \ref{p:main}}
\label{sec:mainproposition-pre}
For  $\eta \in (0,1)$, let
\begin{align}
	\vectheta = \hat{\vecbeta}-\vecbeta^*,\quad \vectheta_\eta = (\hat{\vecbeta}-\vecbeta^*)\eta.
\end{align}
We introduce the following four lemmas, that are used in the proof of Proposition \ref{p:main}.
\begin{lemma}
\label{l:mainpre1-1}
 Suppose that \eqref{ine:det:main:0-1}, \eqref{ine:det:main:0-3}, $\|\vectheta_\eta\|_1\leq  r_1$,  $\|\vectheta_\eta\|_2= r_2$ and  $\|\Sigma^\frac{1}{2}\vectheta_\eta\|_2\leq r_\Sigma$, where $r_1 = c_{r_1}\sqrt{s}r_\Sigma $ and  $r_2 =c_{r_2} r_\Sigma $  hold.
Then, for any fixed $\eta\in(0,1)$, we have
\begin{align}
 \|\vectheta_\eta\|_2  &\leq 3 \frac{1+c_{\mr{RE}}}{\kappa_{l}}\|\Sigma^\frac{1}{2}\vectheta_\eta\|_2.
\end{align}
\end{lemma}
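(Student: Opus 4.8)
The plan is to exploit the standard "cone condition" machinery for $\ell_1$-penalized M-estimators and then translate from the $\ell_1$-ball bound into an $\ell_2$ bound via the restricted eigenvalue condition. First I would unpack what we are allowed to use: the hypotheses of Proposition~\ref{p:main} are assumed, in particular the deterministic inequality \eqref{ine:det:main:0-1} and the constraint \eqref{ine:det:main:0-3} on $\lambda_s$ and $c_{\mr{RE}}$, together with $\mr{RE}(s,c_{\mr{RE}},\kappa)$ and $\kappa_l>0$. Since $\vectheta_\eta$ is a scalar multiple of $\hat\vecbeta - \vecbeta^*$, the optimality of $\hat\vecbeta$ for the penalized Huber objective, combined with convexity of $t\mapsto \lambda_o^2 H(\cdot)$, should give a basic inequality at the rescaled point $\vecbeta^* + \vectheta_\eta$: the first-order (sub)gradient inequality for the loss plus the $\ell_1$ penalty difference is nonpositive. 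Rearranging, the linearized loss term is controlled by $\langle \text{gradient at }\vecbeta^*,\vectheta_\eta\rangle$ plus the penalty slack $\lambda_s(\|\vecbeta^*\|_1 - \|\vecbeta^*+\vectheta_\eta\|_1)$.

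The key steps, in order: (1) Apply \eqref{ine:det:main:0-1} to bound the gradient-at-$\vecbeta^*$ term by $r_{a,\Sigma} r_\Sigma$, which by the scaling $r_1 = c_{r_1}\sqrt{s}\, r_\Sigma$ can be rewritten as $\frac{r_{a,\Sigma}}{c_{r_1}\sqrt{s}}\|\vectheta_\eta\|_1$-type control on the support pieces. (2) Use the decomposition of $\|\vecbeta^*+\vectheta_\eta\|_1$ over the support $S$ of $\vecbeta^*$ (with $|S|\le s$) and its complement to obtain, via the triangle inequality, $\|\vecbeta^*\|_1 - \|\vecbeta^*+\vectheta_\eta\|_1 \le \|(\vectheta_\eta)_S\|_1 - \|(\vectheta_\eta)_{S^c}\|_1$. (3) Combining (1)–(2) with monotonicity of the Huber derivative (so the second-order term is $\ge 0$) yields the cone inequality $\|(\vectheta_\eta)_{S^c}\|_1 \le c_{\mr{RE}}\|(\vectheta_\eta)_S\|_1$ — here the precise constant $c_{\mr{RE}}$ is exactly what \eqref{ine:det:main:0-3} is engineered to produce. (4) Once the cone condition holds, apply $\mr{RE}(s,c_{\mr{RE}},\kappa_l)$ (lower restricted eigenvalue with constant $\kappa_l$) to get $\|\Sigma^{1/2}\vectheta_\eta\|_2 \ge \kappa_l \|(\vectheta_\eta)_S\|_2$, hence $\|(\vectheta_\eta)_S\|_2 \le \kappa_l^{-1}\|\Sigma^{1/2}\vectheta_\eta\|_2$. (5) Finally, bound $\|\vectheta_\eta\|_2 \le \|(\vectheta_\eta)_S\|_2 + \|(\vectheta_\eta)_{S^c}\|_2 \le \|(\vectheta_\eta)_S\|_2 + \|(\vectheta_\eta)_{S^c}\|_1 \le (1+c_{\mr{RE}})\|(\vectheta_\eta)_S\|_1$, and relate $\|(\vectheta_\eta)_S\|_1 \le \sqrt{s}\|(\vectheta_\eta)_S\|_2$; pushing the constants through carefully gives $\|\vectheta_\eta\|_2 \le 3\frac{1+c_{\mr{RE}}}{\kappa_l}\|\Sigma^{1/2}\vectheta_\eta\|_2$, where the factor $3$ absorbs the slack coming from the gradient term in step (1).

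I expect the main obstacle to be step (3): getting the cone condition with the *exact* constant $c_{\mr{RE}}$ rather than something slightly larger. The subtlety is that the perturbation term $\frac{r_{a,\Sigma}}{c_{r_1}\sqrt{s}}$ in \eqref{ine:det:main:0-3} effectively shifts the penalty level for the support and complement pieces in opposite directions ($\lambda_s \pm \frac{r_{a,\Sigma}}{c_{r_1}\sqrt{s}}$), so the cone inequality comes out as $\|(\vectheta_\eta)_{S^c}\|_1 \le \frac{\lambda_s + r_{a,\Sigma}/(c_{r_1}\sqrt{s})}{\lambda_s - r_{a,\Sigma}/(c_{r_1}\sqrt{s})}\|(\vectheta_\eta)_S\|_1$, and one must verify this ratio is $\le c_{\mr{RE}}$ — precisely the second inequality in \eqref{ine:det:main:0-3}, with the first inequality there guaranteeing the denominator is positive. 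A secondary care point: one must track that the hypotheses $\|\vectheta_\eta\|_1 \le r_1$, $\|\vectheta_\eta\|_2 = r_2$, $\|\Sigma^{1/2}\vectheta_\eta\|_2 \le r_\Sigma$ place $\vectheta_\eta$ in the region where \eqref{ine:det:main:0-1} is assumed to hold, so that step (1) is legitimate; this is immediate from the stated domain $r_1\mbb{B}^d_1 \cap r_2\mbb{B}^d_2 \cap r_\Sigma\mbb{B}^d_\Sigma$. Everything else is routine norm bookkeeping with the numerical slack generously chosen so the final constant is a clean $3$.
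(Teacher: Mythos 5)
There is a genuine gap, in two places. First, your step (1)/(3) conversion of the gradient bound into $\ell_1$-control does not work under the hypotheses of \emph{this} lemma. Inequality \eqref{ine:det:main:0-1} gives the bound $r_{a,\Sigma}r_\Sigma$ with $r_\Sigma$ the fixed radius, and since here you only know $\|\vectheta_\eta\|_1\leq r_1$ (the equality in this lemma is $\|\vectheta_\eta\|_2=r_2$, not $\|\vectheta_\eta\|_1=r_1$), the rewriting $r_{a,\Sigma}r_\Sigma=\frac{r_{a,\Sigma}}{c_{r_1}\sqrt{s}}r_1\geq \frac{r_{a,\Sigma}}{c_{r_1}\sqrt{s}}\|\vectheta_\eta\|_1$ goes the wrong way: you cannot upper-bound the gradient term by a multiple of $\|\vectheta_\eta\|_1$, so the cone condition in your step (3) is not available unconditionally. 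The paper resolves this with a case split on whether $\|\vectheta_\eta\|_2\leq \|\vectheta_\eta\|_1/\sqrt{s}$ or not, exploiting the equality $\|\vectheta_\eta\|_2=r_2=c_{r_2}r_\Sigma$: in the first case $r_\Sigma=\|\vectheta_\eta\|_2/c_{r_2}\leq \|\vectheta_\eta\|_1/(c_{r_2}\sqrt{s})$, which legitimizes the cone argument (with $c_{r_2}\geq c_{r_1}$ feeding into \eqref{ine:det:main:0-3}); in the second case the cone condition is never used, because $\|\vectheta_\eta\|_1\leq\sqrt{s}\|\vectheta_\eta\|_2$ alone gives $\|\vectheta_\eta\|_2\leq 2\|\vectheta_\eta^{\#1}\|_2\leq \tfrac{2}{\kappa_l}\|\Sigma^{\frac12}\vectheta_\eta\|_2$ by a shelling/Cauchy--Schwarz argument. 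Your plan has no mechanism to handle the regime where $\|\vectheta_\eta\|_1$ is much smaller than $\sqrt{s}\,r_\Sigma$.

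Second, your step (5) bookkeeping is quantitatively wrong: the chain $\|\vectheta_\eta\|_2\leq \|(\vectheta_\eta)_S\|_2+\|(\vectheta_\eta)_{S^c}\|_1\leq(1+c_{\mr{RE}})\|(\vectheta_\eta)_S\|_1\leq(1+c_{\mr{RE}})\sqrt{s}\,\|(\vectheta_\eta)_S\|_2$ picks up a factor $\sqrt{s}$ that cannot be absorbed into the constant $3$. To get an $\ell_2$ bound free of $\sqrt{s}$ one must pass to the non-increasing rearrangement and use $\|\vectheta_\eta^{\#2}\|_2^2\leq \tfrac{1}{s}\|\vectheta_\eta^{\#1}\|_1\|\vectheta_\eta^{\#2}\|_1\leq c_{\mr{RE}}\|\vectheta_\eta^{\#1}\|_2^2$, as the paper does, and then control $\|\vectheta_\eta^{\#1}\|_2$ via the eigenvalue constant. (Relatedly, note the conclusion involves $\kappa_l$, the minimum $s$-sparse eigenvalue from Assumption \ref{a:1}(i), not the RE constant $\kappa$; your step (4) conflates the two, and the $\kappa$-based RE bound is what appears in Lemma \ref{l:mainpre1-2}, not here.) So the overall skeleton (basic inequality, cone condition, eigenvalue condition) is in the right family, but without the case split and the shelling step the proof as proposed does not reach the stated bound.
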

\begin{lemma}
	\label{l:mainpre1-2}
	 Suppose that \eqref{ine:det:main:0-1}, \eqref{ine:det:main:0-3}, $\|\vectheta_\eta\|_1 = r_1$, $\|\vectheta_\eta\|_2\leq  r_2$ and $\|\Sigma^\frac{1}{2}\vectheta_\eta\|_2\leq  r_\Sigma$, where $r_1 = c_{r_1}\sqrt{s}r_\Sigma $ and  $r_2 =c_{r_2} r_\Sigma $ hold.
	Then, for any fixed $\eta\in(0,1)$, we have
	\begin{align}
\|\vectheta_\eta\|_1 &\leq 3 \frac{1+c_{\mr{RE}}}{	\kappa } \sqrt{s}\|\Sigma^\frac{1}{2}\vectheta_\eta\|_2.
	\end{align}
	\end{lemma}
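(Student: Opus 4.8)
The plan is to run the ``localized basic inequality'' argument for the $\ell_1$-penalized Huber objective and then feed the resulting cone inclusion into the restricted eigenvalue condition. Write $\mathcal{L}(\vecbeta) = \sum_{i=1}^n \lambda_o^2 H\big(\frac{Y_i-\langle\vecZ_i,\vecbeta\rangle}{\lambda_o\sqrt n}\big)$, so that the objective of Algorithm \ref{alg:H} on the input $\{\vecZ_i,Y_i\}_{i=1}^n$ is $f(\vecbeta)=\mathcal{L}(\vecbeta)+\lambda_s\|\vecbeta\|_1$, which is convex, and $\nabla\mathcal{L}(\vecbeta^*) = -\lambda_o\sqrt n\sum_{i=1}^n\frac{1}{n} h\big(\frac{Y_i-\langle\vecZ_i,\vecbeta^*\rangle}{\lambda_o\sqrt n}\big)\vecZ_i$. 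Since $\vecbeta^*+\vectheta_\eta=(1-\eta)\vecbeta^*+\eta\hat{\vecbeta}$ with $\eta\in(0,1)$ and $\hat{\vecbeta}$ minimizes $f$, convexity of $f$ gives $f(\vecbeta^*+\vectheta_\eta)\le(1-\eta)f(\vecbeta^*)+\eta f(\hat{\vecbeta})\le f(\vecbeta^*)$; rearranging and using convexity of $\mathcal{L}$ once more yields
\begin{align}
\lambda_s\big(\|\vecbeta^*+\vectheta_\eta\|_1-\|\vecbeta^*\|_1\big)\le \mathcal{L}(\vecbeta^*)-\mathcal{L}(\vecbeta^*+\vectheta_\eta)\le -\langle\nabla\mathcal{L}(\vecbeta^*),\vectheta_\eta\rangle .
\end{align}

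Next I would bound the right-hand side and extract the cone condition. The hypotheses $\|\vectheta_\eta\|_1=r_1$, $\|\vectheta_\eta\|_2\le r_2$, $\|\Sigma^\frac{1}{2}\vectheta_\eta\|_2\le r_\Sigma$ place $\vectheta_\eta\in r_1\mbb{B}^d_1\cap r_2\mbb{B}^d_2\cap r_\Sigma\mbb{B}^d_\Sigma$, so \eqref{ine:det:main:0-1} applies and $|\langle\nabla\mathcal{L}(\vecbeta^*),\vectheta_\eta\rangle|\le r_{a,\Sigma}r_\Sigma$; since we are on the $\ell_1$ boundary, $\|\vectheta_\eta\|_1=r_1=c_{r_1}\sqrt s\,r_\Sigma$ lets me rewrite this as $|\langle\nabla\mathcal{L}(\vecbeta^*),\vectheta_\eta\rangle|\le\frac{r_{a,\Sigma}}{c_{r_1}\sqrt s}\|\vectheta_\eta\|_1$. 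Let $S=\mr{supp}(\vecbeta^*)$, enlarged to size exactly $s$ if necessary; then the triangle inequality gives $\|\vecbeta^*+\vectheta_\eta\|_1-\|\vecbeta^*\|_1\ge\|(\vectheta_\eta)_{S^c}\|_1-\|(\vectheta_\eta)_S\|_1$ and $\|\vectheta_\eta\|_1=\|(\vectheta_\eta)_S\|_1+\|(\vectheta_\eta)_{S^c}\|_1$. Substituting into the displayed inequality and collecting terms,
\begin{align}
\Big(\lambda_s-\frac{r_{a,\Sigma}}{c_{r_1}\sqrt s}\Big)\|(\vectheta_\eta)_{S^c}\|_1\le\Big(\lambda_s+\frac{r_{a,\Sigma}}{c_{r_1}\sqrt s}\Big)\|(\vectheta_\eta)_S\|_1 ,
\end{align}
and the two parts of \eqref{ine:det:main:0-3} let me divide through and conclude $\|(\vectheta_\eta)_{S^c}\|_1\le c_{\mr{RE}}\|(\vectheta_\eta)_S\|_1$.

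Finally I would invoke the restricted eigenvalue condition. Since $|S|\le s$ and $\|(\vectheta_\eta)_{S^c}\|_1\le c_{\mr{RE}}\|(\vectheta_\eta)_S\|_1$, Definition \ref{def:resc} gives $\|\Sigma^\frac{1}{2}\vectheta_\eta\|_2\ge\kappa\|(\vectheta_\eta)_S\|_2$, so
\begin{align}
\|\vectheta_\eta\|_1\le(1+c_{\mr{RE}})\|(\vectheta_\eta)_S\|_1\le(1+c_{\mr{RE}})\sqrt s\,\|(\vectheta_\eta)_S\|_2\le\frac{(1+c_{\mr{RE}})\sqrt s}{\kappa}\|\Sigma^\frac{1}{2}\vectheta_\eta\|_2 ,
\end{align}
which already beats the stated bound (the factor $3$ is harmless slack, kept to parallel Lemma \ref{l:mainpre1-1}). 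I do not expect a genuine obstacle; the only step needing care is the bookkeeping in the middle paragraph --- using the $\ell_1$-boundary equality $\|\vectheta_\eta\|_1=r_1$ to convert the $r_\Sigma$-scaled gradient bound of \eqref{ine:det:main:0-1} into an $\|\vectheta_\eta\|_1$-scaled quantity, so that it can be absorbed into the $\lambda_s$ terms and the cone structure emerges.
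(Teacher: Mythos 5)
Your proposal is correct and follows essentially the same route as the paper's proof: bound the empirical-gradient term via \eqref{ine:det:main:0-1} and the rescaling $r_\Sigma = \|\vectheta_\eta\|_1/(c_{r_1}\sqrt{s})$ (valid because $\|\vectheta_\eta\|_1=r_1$), absorb it into the penalty to get the cone condition $\|(\vectheta_\eta)_{S^c}\|_1\leq c_{\mr{RE}}\|(\vectheta_\eta)_S\|_1$ using \eqref{ine:det:main:0-3}, and finish with the restricted eigenvalue condition. The only (inessential) difference is how the basic inequality at $\vectheta_\eta$ is obtained — you use convexity of the full penalized objective at the convex combination $(1-\eta)\vecbeta^*+\eta\hat{\vecbeta}$ together with the gradient lower bound for the Huber part, while the paper invokes the derivative comparison $\eta Q'(\eta)\leq \eta Q'(1)$ and the sub-differential optimality of $\hat{\vecbeta}$ — and both yield the same inequality, with your bound even recovering the paper's sharper constant (factor $1$ rather than $3$).
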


	\begin{lemma}
		\label{l:mainpre2-1}
		Suppose that \eqref{ine:det:main:0-1}, \eqref{ine:det:main:0-2},  $\|\vectheta_\eta\|_1\leq  r_1$,  $\|\vectheta_\eta\|_2\leq r_2$ and  $\|\Sigma^\frac{1}{2}\vectheta_\eta\|_2= r_\Sigma$, where  $r_1 = c_{r_1}\sqrt{s}r_\Sigma $ and  $r_2 =c_{r_2} r_\Sigma $ hold.
		Then, for any  $\eta\in(0,1)$, we have
		\begin{align}
			\|\Sigma^\frac{1}{2}\vectheta_\eta\|_2 &\leq \frac{1}{b} \left(r_{a,\Sigma}+r_{b,\Sigma} + c_{r_1}\sqrt{s}\lambda_s\right).
		\end{align}
	\end{lemma}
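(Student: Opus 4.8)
\textbf{Proof proposal for Lemma \ref{l:mainpre2-1}.}

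The plan is to start from the optimality of $\hat{\vecbeta}$ for the $\ell_1$-penalized Huber objective and extract a ``basic inequality'' at the rescaled point $\vecbeta^*+\vectheta_\eta$, using convexity of the objective to transfer the inequality from $\hat{\vecbeta}$ to any point on the segment $[\vecbeta^*,\hat{\vecbeta}]$. Concretely, let $F(\vecbeta) = \sum_{i=1}^n \lambda_o^2 H\!\left(\tfrac{Y_i-\langle\vecZ_i,\vecbeta\rangle}{\lambda_o\sqrt n}\right) + \lambda_s\|\vecbeta\|_1$. Since $F$ is convex and $\hat{\vecbeta}$ minimizes it, $F(\vecbeta^*+\vectheta_\eta) \le F(\vecbeta^*)$ for $\eta\in(0,1)$, i.e. the Huber-loss increment plus the $\ell_1$-penalty increment is nonpositive. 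Rearranging, the (nonnegative, by convexity of $H$) Huber increment is bounded by $\lambda_s(\|\vecbeta^*\|_1 - \|\vecbeta^*+\vectheta_\eta\|_1)$, which via the triangle inequality and splitting coordinates over the support $S$ of $\vecbeta^*$ is at most $\lambda_s(\|\vectheta_{\eta,S}\|_1 - \|\vectheta_{\eta,S^c}\|_1) \le \lambda_s\|\vectheta_{\eta,S}\|_1 \le \lambda_s\sqrt s\,\|\vectheta_\eta\|_2 \le \lambda_s\sqrt s\, c_{r_2} r_\Sigma$ — or, more in line with the constants used, bounded by $c_{r_1}\sqrt s\,\lambda_s\,r_\Sigma$ after using $\|\Sigma^{1/2}\vectheta_\eta\|_2 = r_\Sigma$ and the RE-type comparison.

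Next I would lower-bound the Huber increment. Write the Huber increment along the segment using the fundamental theorem of calculus in the direction $\vectheta_\eta$: it equals $\int_0^1 \lambda_o\sqrt n \sum_i \tfrac1n\big(h(\cdot) - h(\cdot)\big)\vecZ_i^\top\vectheta_\eta\, $-type expression, but more directly I would just invoke the hypotheses \eqref{ine:det:main:0-1} and \eqref{ine:det:main:0-2} directly. Decompose
$$
\lambda_o\sqrt n \sum_i \tfrac1n\Big(H\big(\tfrac{Y_i-\langle\vecZ_i,\vecbeta^*+\vectheta_\eta\rangle}{\lambda_o\sqrt n}\big)-H\big(\tfrac{Y_i-\langle\vecZ_i,\vecbeta^*\rangle}{\lambda_o\sqrt n}\big)\Big)\lambda_o\sqrt n
$$
is handled by noting the first-order term is controlled by \eqref{ine:det:main:0-1} giving $-r_{a,\Sigma}r_\Sigma$, and the ``second-order'' curvature term is controlled from below by \eqref{ine:det:main:0-2}, which after using $\|\Sigma^{1/2}\vectheta_\eta\|_2 = r_\Sigma$ gives $b r_\Sigma^2 - r_{b,\Sigma}r_\Sigma$. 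So the Huber increment is at least $b r_\Sigma^2 - (r_{a,\Sigma}+r_{b,\Sigma})r_\Sigma$. (If instead I work with the loss increment directly rather than its derivative, I would use that $H$ is convex so $H(a+t)-H(a) \ge t\,h(a) + \text{curvature}$, with the curvature quantified exactly as in \eqref{ine:det:main:0-2} via a one-dimensional Taylor argument for the piecewise-quadratic $H$.)

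Combining the two bounds: $b r_\Sigma^2 - (r_{a,\Sigma}+r_{b,\Sigma})r_\Sigma \le c_{r_1}\sqrt s\,\lambda_s\, r_\Sigma$, and dividing through by $r_\Sigma > 0$ and solving for $r_\Sigma$ yields exactly
$$
\|\Sigma^{1/2}\vectheta_\eta\|_2 = r_\Sigma \le \frac1b\big(r_{a,\Sigma}+r_{b,\Sigma}+c_{r_1}\sqrt s\,\lambda_s\big),
$$
as claimed.

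The main obstacle I anticipate is the bookkeeping around the cone/support condition: to bound $\|\vectheta_{\eta,S}\|_1$ by $\sqrt s\,\|\Sigma^{1/2}\vectheta_\eta\|_2/\kappa$ I need $\vectheta_\eta$ to lie in the RE cone $\{\|\vecv_{S^c}\|_1 \le c_{\mr{RE}}\|\vecv_S\|_1\}$, and this is exactly what the condition \eqref{ine:det:main:0-3} on $\lambda_s$ versus $r_{a,\Sigma}/(c_{r_1}\sqrt s)$ is engineered to guarantee — so the delicate point is deriving the cone membership from the basic inequality \emph{before} applying the RE condition. This is a standard but careful argument: the first-order Huber term, bounded by $r_{a,\Sigma}r_\Sigma \le r_{a,\Sigma}\,r_1/(c_{r_1}\sqrt s) = (r_{a,\Sigma}/(c_{r_1}\sqrt s))\|\vectheta_\eta\|_1$, acts like an extra ``effective penalty'' contribution, so the effective penalty weights become $\lambda_s \pm r_{a,\Sigma}/(c_{r_1}\sqrt s)$ on $S^c$ and $S$ respectively, and \eqref{ine:det:main:0-3} ensures their ratio is at most $c_{\mr{RE}}$, putting $\vectheta_\eta$ in the cone. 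Everything else is routine algebra, and I expect Lemmas \ref{l:mainpre1-1}, \ref{l:mainpre1-2}, which presumably establish precisely this cone membership and the resulting $\ell_1$/$\ell_2$ comparisons, can simply be cited here rather than re-derived.
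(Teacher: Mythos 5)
You have a genuine gap at the curvature step. The hypothesis \eqref{ine:det:main:0-2} is a lower bound on the \emph{gradient-difference} quantity $\lambda_o\sqrt{n}\sum_{i=1}^n\frac{1}{n}\bigl(-h\bigl(\frac{Y_i-\langle\vecZ_i,\vecbeta^*+\vecv\rangle}{\lambda_o\sqrt{n}}\bigr)+h\bigl(\frac{Y_i-\langle\vecZ_i,\vecbeta^*\rangle}{\lambda_o\sqrt{n}}\bigr)\bigr)\vecZ_i^\top\vecv$, i.e.\ on $\langle\nabla\mc{L}(\vecbeta^*+\vecv)-\nabla\mc{L}(\vecbeta^*),\vecv\rangle$ for the unpenalized Huber loss $\mc{L}$. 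Your route starts from the loss-value basic inequality $F(\vecbeta^*+\vectheta_\eta)\le F(\vecbeta^*)$ and then tries to lower bound the loss increment by a first-order term plus a curvature term "quantified exactly as in \eqref{ine:det:main:0-2}". That implication goes the wrong way: for a convex loss the two Bregman divergences at the endpoints sum to the gradient-difference term, so $\mc{L}(\vecbeta^*+\vectheta_\eta)-\mc{L}(\vecbeta^*)-\langle\nabla\mc{L}(\vecbeta^*),\vectheta_\eta\rangle$ is bounded \emph{above}, not below, by the quantity in \eqref{ine:det:main:0-2}; hence \eqref{ine:det:main:0-2} gives no lower bound on the loss increment, and with \eqref{ine:det:main:0-1} alone your chain degenerates to the vacuous $-r_{a,\Sigma}r_\Sigma\le c_{r_1}\sqrt{s}\lambda_s r_\Sigma$ with no $b\,r_\Sigma^2$ term. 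An integral/Taylor repair along the segment also fails with the stated hypotheses, because \eqref{ine:det:main:0-2} is only assumed on the sphere $\|\Sigma^\frac{1}{2}\vecv\|_2=r_\Sigma$ and not at the interior points $t\vectheta_\eta$, $t<1$. The paper instead stays at the gradient level: as in the proof of Lemma \ref{l:mainpre1-1}, it uses monotonicity of the directional derivative along the segment (the $\eta Q'(\eta)\le\eta Q'(1)$ argument from Lemma F.2 of Fan et al.) together with the subgradient optimality of $\hat{\vecbeta}$ to obtain \eqref{ine:det:lemma3:1}, whose left-hand side is exactly the quantity controlled by \eqref{ine:det:main:0-2}; then \eqref{ine:det:main:0-1} handles the score term, $\eta\lambda_s(\|\vecbeta^*\|_1-\|\hat{\vecbeta}\|_1)\le\lambda_s\|\vectheta_\eta\|_1\le c_{r_1}\sqrt{s}\lambda_s r_\Sigma$ handles the penalty term, and dividing by $\|\Sigma^\frac{1}{2}\vectheta_\eta\|_2=r_\Sigma$ gives the claim.

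Two smaller points. Your parenthetical claim that the Huber increment is nonnegative "by convexity of $H$" is incorrect as stated; what convexity gives is nonnegativity of the gradient-difference term, not of $\mc{L}(\vecbeta^*+\vectheta_\eta)-\mc{L}(\vecbeta^*)$. And the cone/RE bookkeeping you anticipate is not needed for this lemma: the penalty contribution is bounded directly through the hypothesis $\|\vectheta_\eta\|_1\le r_1=c_{r_1}\sqrt{s}\,r_\Sigma$, so neither \eqref{ine:det:main:0-3} nor the restricted eigenvalue condition enters here (they are used in Lemmas \ref{l:mainpre1-1} and \ref{l:mainpre1-2}); note also that your alternative bound $\sqrt{s}\,\|\vectheta_\eta\|_2\le\sqrt{s}\,c_{r_2}r_\Sigma$ would produce the constant $c_{r_2}$ rather than the $c_{r_1}$ appearing in the statement.
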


\subsection{Proof of Lemma \ref{l:mainpre1-1}}
For a vector $\vecv=(v_1,\cdots,v_d) $, define $\{v^{\#}_1,\cdots,v^{\#}_d\}$ as a non-increasing rearrangement of $\{|v_1|,\cdots,|v_d|\}$, and $\vecv^{\#}\in \mbb{R}^d$ as a vector such that $\vecv^{\#}|_i = v^{\#}_i$. 
For the sets $S_1=\{1,...,s\}$ and $S_2=\{s+1,...,d\}$, let $v^{\#1}=v^\#_{S_1}$ and $v^{\#2}=v^\#_{S_2}$.

In Section \ref{sec:case1:re}, we have
\begin{align}
	\|\vectheta_\eta\|_2 \leq \frac{\sqrt{1+c_{\mr{RE}}}}{\kappa_{l}} \|\Sigma^\frac{1}{2}\vectheta_\eta\|_2
\end{align}
assuming $\|\vectheta_\eta \|_2 \leq \|\vectheta_\eta\|_1/\sqrt{s}$, and in  Section \ref{sec:case2:re}, we have
\begin{align}
	\|\vectheta_\eta\|_2 \leq  2\|\vectheta_\eta^{\#1}\|_2\leq \frac{2}{\kappa_{l}}\|\Sigma^\frac{1}{2}\vectheta_\eta\|_2
\end{align}
assuming $\|\vectheta_\eta \|_2 \geq \|\vectheta_\eta\|_1/\sqrt{s}$. From the above two inequalities, we have 
\begin{align}
	\|\vectheta_\eta\|_2  &\leq \frac{3+c_{\mr{RE}}}{\kappa_{l}}\|\Sigma^\frac{1}{2}\vectheta_\eta\|_2.
 \end{align}

	\subsubsection{Case I}
	\label{sec:case1:re}
	In Section \ref{sec:case1:re}, suppose that $\|\vectheta_\eta \|_2 \leq \|\vectheta_\eta\|_1/\sqrt{s}$.
	Let 	
	\begin{align}
	 Q'(\eta) = \lambda_o\sqrt{n} \frac{1}{n}\sum_{i=1}^n \left(-h\left(\frac{Y_i-\langle \vecZ_i, \vecbeta^*+\vectheta_\eta\rangle}{\lambda_o \sqrt{n}}\right)+h\left(\frac{Y_i-\langle \vecZ_i, \vecbeta^*\rangle}{\lambda_o \sqrt{n}}\right) \right)\vecZ_i^\top \vectheta.
	 \end{align}
	From the proof of Lemma F.2. of \cite{FanLiuSunZha2018Lamm}, we have $\eta Q'(\eta) \leq \eta Q'(1)$ and this means
	\begin{align}
	\label{ine:det:lemma:1}
	\lambda_o\sqrt{n}&\sum_{i=1}^n  \frac{1}{n}\left(-h\left(\frac{Y_i-\langle \vecZ_i, \vecbeta^*+\vectheta_\eta\rangle}{\lambda_o \sqrt{n}}\right)+h\left(\frac{Y_i-\langle \vecZ_i, \vecbeta^*\rangle}{\lambda_o \sqrt{n}}\right) \right) \vecZ_i^\top \vectheta_\eta \nonumber \\
	&\leq \lambda_o\sqrt{n}\sum_{i=1}^n \frac{1}{n}\eta \left(-h\left(\frac{Y_i-\langle \vecZ_i, \hat{\vecbeta}\rangle}{\lambda_o \sqrt{n}}\right)+h\left(\frac{Y_i-\langle \vecZ_i, \vecbeta^*\rangle}{\lambda_o \sqrt{n}}\right) \right)\vecZ_i^\top\vectheta.
	\end{align}
	Let $\partial \vecv$ be the sub-differential of $\|\vecv\|_1$.
	Adding $\eta \lambda_s(\|\hat{\vecbeta}\|_1-\|\vecbeta^*\|_1) $ to both sides of \eqref{ine:det:lemma:1}, we have
	\begin{align}
	\label{ine:det:lemma:2}
	&\lambda_o\sqrt{n}\sum_{i=1}^n\frac{1}{n}\left(-h\left(\frac{Y_i-\langle \vecZ_i, \vecbeta^*+\vectheta_\eta\rangle}{\lambda_o \sqrt{n}}\right)+h\left(\frac{Y_i-\langle \vecZ_i, \vecbeta^*\rangle}{\lambda_o \sqrt{n}}\right) \right)\vecZ_i^\top\vectheta_\eta +\eta \lambda_*(\|\hat{\vecbeta}\|_1-\|\vecbeta^*\|_1)\nonumber \\
	 & \stackrel{(a)}{\leq}\lambda_o\sqrt{n} \sum_{i=1}^n \frac{1}{n}\eta \left(-h\left(\frac{Y_i-\langle \vecZ_i, \hat{\vecbeta}\rangle}{\lambda_o \sqrt{n}}\right)+h\left(\frac{Y_i-\langle \vecZ_i, \vecbeta^*\rangle}{\lambda_o \sqrt{n}}\right) \right)\vecZ_i^\top\hat{\vectheta}+\eta \lambda_s\langle \partial \hat{\vecbeta},\vectheta\rangle\nonumber\\
	 &\stackrel{(b)}{=} \lambda_o\sqrt{n}\sum_{i=1}^n\frac{1}{n} h\left(\frac{Y_i-\langle \vecZ_i, \vecbeta^*\rangle}{\lambda_o \sqrt{n}}\right) \vecZ_i^\top\vectheta_\eta,
	\end{align}
	where (a) follows from $\|\hat{\vecbeta}\|_1-\|\vecbeta^*\|_1 \leq \langle \partial \hat{\vecbeta},\vectheta\rangle$, which is the definition of the sub-differential, and (b) follows from the optimality of $\hat{\vecbeta}$.

	From the convexity of Huber loss, the first term of the left hand side of \eqref{ine:det:lemma:2} is positive and we have
	\begin{align}
		\label{ine:det:lemma:3}
	0\leq \sum_{i=1}^n \lambda_o\sqrt{n} \frac{1}{n} h\left(\frac{Y_i-\langle \vecZ_i, \vecbeta^*\rangle}{\lambda_o \sqrt{n}}\right)\vecZ_i^\top\vectheta_\eta+	\eta \lambda_s(\|\vecbeta^*\|_1-\|\hat{\vecbeta}\|_1).
	\end{align}
From \eqref{ine:det:main:0-1}, the first term of the right-hand side of \eqref{ine:det:lemma:3} is evaluated as
	\begin{align}
		\label{ine:det:lemma:4}
		\sum_{i=1}^n \lambda_o\sqrt{n} \frac{1}{n} h\left(\frac{Y_i-\langle \vecZ_i, \vecbeta^*\rangle}{\lambda_o \sqrt{n}}\right)\vecZ_i^\top\vectheta_\eta\leq  r_{a,\Sigma}r_\Sigma\stackrel{(a)}{\leq}  \frac{r_{a,\Sigma}}{c_{r_2}}r_2\stackrel{(b)}{=} \frac{r_{a,\Sigma}}{c_{r_2}}\|\vectheta_\eta\|_2\stackrel{(c)}{\leq}   \frac{r_{a,\Sigma}}{c_{r_2}\sqrt{s}}\|\vectheta_\eta\|_1,
	\end{align}
	where (a) follows from  $r_2 =c_{r_2} r_\Sigma $, (b) follows from the assumption $\|\vectheta_\eta\|_2= r_2$   and (c) follows from the assumption $\|\vectheta_\eta\|_2 \leq \|\vectheta_\eta\|_1/\sqrt{s}$.
	From~\eqref{ine:det:lemma:3},~\eqref{ine:det:lemma:4} and the assumption $\|\vectheta_\eta \|_2 \leq \|\vectheta_\eta\|_1/\sqrt{s}$, we have
	\begin{align}
	0 \leq \frac{r_{a,\Sigma}}{c_{r_2}\sqrt{s}} \|\vectheta_\eta\|_1+\eta \lambda_s (\|\vecbeta^*\|_1-\|\hat{\vecbeta}\|_1).
	\end{align}
	Define $\mc{J}_{\veca}$ as the index set of  non-zero entries of $\veca$, and $\vectheta_{\eta, \mc{J}_{\veca}}$ as a vector such that $\vectheta_{\eta, \mc{J}_{\veca}}|_i=\vectheta_\eta|_i$ for $i \in \mc{J}_{\veca}$ and $\vectheta_{\eta, \mc{J}_{\veca}}|_i = 0$ for $i \notin \mc{J}_{\veca}$.
	Furthermore, we see
	\begin{align}
	0
	&\leq \frac{r_{a,\Sigma}}{c_{r_2}\sqrt{s}} \|\vectheta_\eta\|_1+\eta \lambda_s(\|\vecbeta^*\|_1-\|\hat{\vecbeta}\|_1) \nonumber \\
	& \leq \frac{r_{a,\Sigma}}{c_{r_2}\sqrt{s}}  (\|\vectheta_{\eta,\mc{J}_{\vecbeta^*}}\|_1+\|\vectheta_{\eta,\mc{J}^c_{\vecbeta^*}}\|_1) + \eta\lambda_s(\|\vecbeta^*_{\mc{J}_{\vecbeta^*}}- \hat{\vecbeta}_{\mc{J}_{\vecbeta^*}}\|_1-\|\hat{\vecbeta}_{\mc{J}^c_{\vecbeta^*}}\|_1)\nonumber\\
	& =\left(\lambda_s + \frac{r_{a,\Sigma}}{c_{r_2}\sqrt{s}}  \right)\|\vectheta_{\eta,\mc{J}_{\vecbeta^*}}\|_1
	+\left(-\lambda_s+ \frac{r_{a,\Sigma}}{c_{r_2}\sqrt{s}}  \right)\|\vectheta_{\eta,\mc{J}^c_{\vecbeta^*}}\|_1.
	\end{align}
	Then, we have
		\begin{align}
			\| \vectheta_{\eta,\mc{J}_{\vecbeta^*}^c}\|_1 \leq \frac{\lambda_s + \frac{r_{a,\Sigma}}{c_{r_2}\sqrt{s}} }{\lambda_s -  \frac{r_{a,\Sigma}}{c_{r_2}\sqrt{s}}  }\| \vectheta_{\eta,\mc{J}_{\vecbeta^*}}\|_1\stackrel{(a)}{\leq}\frac{\lambda_s +  \frac{ r_{a,\Sigma}}{c_{r_1}\sqrt{s}}}{\lambda_s -  \frac{ r_{a,\Sigma}}{c_{r_1}\sqrt{s}} }\| \vectheta_{\eta,\mc{J}_{\vecbeta^*}}\|_1 \stackrel{(b)}{\leq} c_{\mr{RE}} \| \vectheta_{\eta,\mc{J}_{\vecbeta^*}}\|_1,
		\end{align}
		where (a) follows from the fact that $c_{r_2}\geq c_{r_1}$ and (b) follows from \eqref{ine:det:main:0-3}, and
		from the definition of $\| \vectheta_\eta^{\#2}\|_1 $ and $\| \vectheta_\eta^{\#1}\|_1$, we have
		\begin{align}
			\label{ine:det:lemma:5}
			\| \vectheta_\eta^{\#2}\|_1 \leq c_{\mr{RE}} \| \vectheta_\eta^{\#1}\|_1.
		\end{align}
	Then, from the standard shelling argument, we have
	\begin{align}
		\| \vectheta_\eta^{\#2}\|_2^2=\sum_{i=s+1}^d(\vectheta_{\eta}^{\#}|_i)^2 \leq \sum_{i=s+1}^d\left| \vectheta_{\eta}^{\#}|_i\right|\left(\frac{1}{s}\sum_{j=1}^s\left|\vectheta_{\eta}^{\#}|_j\right|\right) \leq \frac{1}{s}\| \vectheta_\eta^{\#1}\|_1\| \vectheta_\eta^{\#2}\|_1\leq \frac{c_{\mr{RE}}\| \vectheta_\eta^{\#1}\|_1^2}{s}\leq c_{\mr{RE}}\| \vectheta_\eta^{\#1}\|_2^2.
	\end{align}
	and from the definition of $\kappa_{l}$, we have 
	\begin{align}
		\kappa_{l}^2 \|\vectheta_\eta\|_2^2 \leq \kappa_{l}^2 \left(\|\vectheta_\eta^{\#1}\|_2^2 +\|\vectheta_\eta^{\#2}\|_2^2\right)\leq \kappa_{l}^2 (1+c_{\mr{RE}})\|\vectheta_\eta^{\#1}\|_2^2 \leq(1+c_{\mr{RE}}) \|\Sigma^\frac{1}{2}\vectheta_\eta\|_2^2.
	\end{align}
		\subsubsection{Case II}
		\label{sec:case2:re}
		In Section \ref{sec:case2:re}, suppose that $\|\vectheta_\eta \|_2 \geq \|\vectheta_\eta\|_1/\sqrt{s}$.
		\begin{align}
			\| \vectheta_\eta^{\#2}\|_2^2=\sum_{i=s+1}^d(\vectheta_{\eta}^{\#}|_i)^2 \leq \sum_{i=s+1}^d\left|\vectheta_{\eta}^{\#}|_i\right|\left(\frac{1}{s}\sum_{j=1}^s\left|\vectheta_{\eta}^{\#}|_j\right|\right) \leq \frac{1}{s}\| \vectheta_\eta^{\#1}\|_1\| \vectheta_\eta^{\#2}\|_1\leq \| \vectheta_\eta^{\#1}\|_2\| \vectheta_\eta\|_2.
		\end{align}
		Then, we have 
		\begin{align}
			\|\vectheta_\eta\|_2^2 \leq \|\vectheta_\eta^{\#1}\|_2^2 +\| \vectheta_\eta^{\#2}\|_2^2\leq  \|\vectheta_\eta^{\#1}\|_2\|\vectheta_\eta\|_2 +\| \vectheta_\eta^{\#1}\|_2\| \vectheta_\eta\|_2\Rightarrow 			\|\vectheta_\eta\|_2 \leq  2\|\vectheta_\eta^{\#1}\|_2,
		\end{align}
		and we have
		\begin{align}
				\|\vectheta_\eta\|_2 \leq  2\|\vectheta_\eta^{\#1}\|_2\leq \frac{2}{\kappa_{l}}\|\Sigma^\frac{1}{2}\vectheta_\eta^{\#1}\|_2\leq\frac{2}{\kappa_{l}}\|\Sigma^\frac{1}{2}\vectheta\|_2.
		\end{align}

\subsection{Proof of Lemma \ref{l:mainpre1-2}}
From the same argument of the proof of Lemma \ref{l:mainpre1-1}, we have \eqref{ine:det:lemma:3}.
From \eqref{ine:det:main:0-1}, the first term of the right-hand side of \eqref{ine:det:lemma:3} is evaluated as
\begin{align}
	\label{ine:det:lemma2:1}
	\lambda_o\sqrt{n}\sum_{i=1}^n\frac{1}{n} h\left(\frac{Y_i-\langle \vecZ_i, \vecbeta^*\rangle}{\lambda_o \sqrt{n}}\right) \vecZ_i^\top\vectheta_\eta&\leq r_{a,\Sigma} r_\Sigma \stackrel{(a)}{\leq} \frac{1}{c_{r_1}\sqrt{s}}r_{a,\Sigma}\|\vectheta_\eta\|_1.
\end{align}
where (a) follows from $r_1 = c_{r_1}\sqrt{s}r_\Sigma $.
From~\eqref{ine:det:lemma:3} and \eqref{ine:det:lemma2:1}, we have
\begin{align}
0 \leq  \frac{r_{a,\Sigma}}{c_{r_1}\sqrt{s}}\|\vectheta_\eta\|_1+\eta \lambda_s (\|\vecbeta^*\|_1-\|\hat{\vecbeta}\|_1).
\end{align}
Furthermore, we see
\begin{align}
0
&\leq \frac{r_{a,\Sigma}}{c_{r_1}\sqrt{s}}\|\vectheta_\eta\|_1+\eta \lambda_s(\|\vecbeta^*\|_1-\|\hat{\vecbeta}\|_1) \nonumber \\
& \leq  \frac{ r_{a,\Sigma}}{c_{r_1}\sqrt{s}} (\|\vectheta_{\eta,\mc{J}_{\vecbeta^*}}\|_1+\|\vectheta_{\eta,\mc{J}^c_{\vecbeta^*}}\|_1) + \eta\lambda_s(\|\vecbeta^*_{\mc{J}_{\vecbeta^*}}- \hat{\vecbeta}_{\mc{J}_{\vecbeta^*}}\|_1-\|\hat{\vecbeta}_{\mc{J}^c_{\vecbeta^*}}\|_1)\nonumber\\
& =\left(\lambda_s +  \frac{r_{a,\Sigma}}{c_{r_1}\sqrt{s}}\right)\|\vectheta_{\eta,\mc{J}_{\vecbeta^*}}\|_1
+\left(-\lambda_s+  \frac{r_{a,\Sigma}}{c_{r_1}\sqrt{s}}\right)\|\vectheta_{\eta,\mc{J}^c_{\vecbeta^*}}\|_1.
\end{align}
Then, we have
	\begin{align}
		\label{ine:preshelling}
		\| \vectheta_{\eta,\mc{J}_{\vecbeta^*}^c}\|_1 \leq \frac{\lambda_s +  \frac{ r_{a,\Sigma}}{c_{r_1}\sqrt{s}}}{\lambda_s -  \frac{r_{a,\Sigma}}{c_{r_1}\sqrt{s}} }\| \vectheta_{\eta,\mc{J}_{\vecbeta^*}}\|_1\stackrel{(a)}{\leq} c_{\mr{RE}} \| \vectheta_{\eta,\mc{J}_{\vecbeta^*}}\|_1,
	\end{align}
	where (a) follows from \eqref{ine:det:main:0-3}, and  we have
	\begin{align}
		\|\vectheta_\eta\|_1 = \| \vectheta_{\eta,\mc{J}_{\vecbeta^*}}\|_1 +\| \vectheta_{\eta,\mc{J}_{\vecbeta^*}^c}\|_1 \leq  (1+c_{\mr{RE}})\| \vectheta_{\eta,\mc{J}_{\vecbeta^*}}\|_1\leq (1+c_{\mr{RE}})\sqrt{s}\| \vectheta_{\eta,\mc{J}_{\vecbeta^*}}\|_2.
	\end{align}
	From \eqref{ine:preshelling} and the restricted eigenvalue condition, we have
	\begin{align}
		\|\vectheta_\eta\|_1 \leq  (1+c_{\mr{RE}})\sqrt{s}\| \vectheta_{\eta,\mc{J}_{\vecbeta^*}}\|_2\leq  \frac{1+c_{\mr{RE}}}{\mathfrak{\kappa}}\sqrt{s}\|\Sigma^\frac{1}{2} \vectheta_\eta\|_2.
	\end{align}
	\subsection{Proof of Lemma \ref{l:mainpre2-1}}
	From the same argument of the proof of Lemma \ref{l:mainpre1-1}, we have \eqref{ine:det:lemma:3}.
	From \eqref{ine:det:lemma:3}, we have
	\begin{align}
		\label{ine:det:lemma3:1}
		&\lambda_o\sqrt{n}\sum_{i=1}^n  \frac{1}{n}\left(-h\left(\frac{Y_i-\langle \vecZ_i, \vecbeta^*+\vectheta_\eta\rangle}{\lambda_o \sqrt{n}}\right)+h\left(\frac{Y_i-\langle \vecZ_i, \vecbeta^*\rangle}{\lambda_o \sqrt{n}}\right) \right) \vecZ_i^\top \vectheta_\eta\nonumber\\
		&\leq \lambda_o\sqrt{n}\sum_{i=1}^n\frac{1}{n} h\left(\frac{Y_i-\langle \vecZ_i, \vecbeta^*\rangle}{\lambda_o \sqrt{n}}\right) \vecZ_i^\top\vectheta_\eta+\eta \lambda_s(\|\vecbeta^*\|_1-\|\hat{\vecbeta}\|_1).
	\end{align}
	We evaluate each term of \eqref{ine:det:lemma3:1}. From~\eqref{ine:det:main:0-2} and from  $r_\Sigma = \|\Sigma^\frac{1}{2}\vectheta_\eta\|_2$, the left-hand side  of~\eqref{ine:det:lemma3:1} is evaluated as
	\begin{align}
		\lambda_o\sqrt{n}\sum_{i=1}^n  \frac{1}{n}\left(-h\left(\frac{Y_i-\langle \vecZ_i, \vecbeta^*+\vectheta_\eta\rangle}{\lambda_o \sqrt{n}}\right)+h\left(\frac{Y_i-\langle \vecZ_i, \vecbeta^*\rangle}{\lambda_o \sqrt{n}}\right) \right) \vecZ_i^\top \vectheta_\eta &\geq b \|\Sigma^\frac{1}{2}\vectheta_\eta\|_2^2 -r_{b,\Sigma}\|\Sigma^\frac{1}{2}\vectheta_\eta\|_2.
	\end{align}
	From~\eqref{ine:det:main:0-1} and $r_\Sigma = \|\Sigma^\frac{1}{2}\vectheta_\eta\|_2$, the first term of the right-hand side of \eqref{ine:det:lemma3:1} is evaluated as
	\begin{align}
		\lambda_o\sqrt{n}\sum_{i=1}^n\frac{1}{n} h\left(\frac{Y_i-\langle \vecZ_i, \vecbeta^*\rangle}{\lambda_o \sqrt{n}}\right) \vecZ_i^\top\vectheta_\eta\leq r_{a,\Sigma}\|\Sigma^\frac{1}{2}\vectheta_\eta\|_2.
	\end{align}
	From $r_1 = c_{r_1}\sqrt{s}r_\Sigma  = c_{r_1}\sqrt{s}\|\Sigma^\frac{1}{2}\vectheta_\eta\|_2$, the second term of the right-hand side of \eqref{ine:det:lemma3:1} is evaluated as
	\begin{align}
		\eta \lambda_s(\|\vecbeta^*\|_1-\|\hat{\vecbeta}\|_1) \leq \lambda_s \|\vectheta_\eta\|_1 \leq c_{r_1}\sqrt{s}\lambda_s \|\Sigma^\frac{1}{2} \vectheta_\eta\|_2.
	\end{align}

	Combining the inequalities above, we have
	\begin{align}
		b \|\Sigma^\frac{1}{2}\vectheta_\eta\|_2^2&\leq  \left(r_{a,\Sigma}+r_{b,\Sigma} + c_{r_1}\sqrt{s}\lambda_s\right)\|\Sigma^\frac{1}{2}\vectheta_\eta\|_2,
	 \end{align}
	 and from $\|\Sigma^\frac{1}{2}\vectheta_\eta\|_2\geq 0$, we have
	\begin{align}
		\|\Sigma^\frac{1}{2}\vectheta_\eta\|_2 &\leq \frac{1}{b} \left(r_{a,\Sigma}+r_{b,\Sigma} + c_{r_1}\sqrt{s}\lambda_s\right),
	\end{align}
	and the proof is complete.

\section{Proof of Proposition~\ref{p:main}}
\label{sec:mainproof}
\subsection{Step1}
\label{subsec:mainpropstep1}
We derive a contradiction if $\|\vectheta\|_1> r_1$,  $\|\vectheta\|_2> r_2$ and $\|\Sigma^\frac{1}{2}\vectheta\|_2> r_{\Sigma}$ hold. Assume that  $\|\vectheta\|_1> r_1$,  $\|\vectheta\|_2> r_2$ and $\|\Sigma^\frac{1}{2}\vectheta\|_2> r_{\Sigma}$. Then we can find  $\eta_1,\,\eta_2,\eta_2'\in(0,1)$ such that $\|\vectheta_{\eta_1}\|_1 = r_1$, $\|\vectheta_{\eta_2}\|_2 = r_2$ and $\|\Sigma^\frac{1}{2}\vectheta_{\eta_2'}\|_2 = r_{\Sigma}$ hold.
Define $\eta_3 = \min\{\eta_1,\eta_2, \eta_2'\}$.
We consider the case $\eta_3 = \eta_2'$ in Section \ref{subsubsec:mainpropstep1a},  the case $\eta_3 = \eta_2$ in Section \ref{subsubsec:mainpropstep1b}, and the case $\eta_3 = \eta_1$ in Section \ref{subsubsec:mainpropstep1c}.

\subsubsection{Step 1(a)}
\label{subsubsec:mainpropstep1a}
Assume that $\eta_3 = \eta_2'$. We see that $\|\Sigma^\frac{1}{2}\vectheta_{\eta_3}\|_2 = r_{\Sigma}$, $\|\vectheta_{\eta_3}\|_1 \leq r_1$ and $\|\vectheta_{\eta_3}\|_2 \leq r_2$ hold. Then, from  Lemma~\ref{l:mainpre2-1},  we have 
\begin{align}
\|\Sigma^\frac{1}{2}\vectheta_{\eta_3}\|_2 &\leq \frac{1}{b} \left(r_{a,\Sigma}+r_{b,\Sigma} + c_{r_1}\sqrt{s}\lambda_s\right).
\end{align}
The case $\eta_3=\eta_2'$ is a contradiction from 
$\|\Sigma^\frac{1}{2}\vectheta_{\eta_3}\|_2 = r_\Sigma$
and \eqref{ine:det:main:0-4}.

\subsubsection{Step 1(b)}
\label{subsubsec:mainpropstep1b}
Assume that $\eta_3 = \eta_2$. We see that  $\|\Sigma^\frac{1}{2}\vectheta_{\eta_3}\|_2 \leq r_{\Sigma}$, $\|\vectheta_{\eta_3}\|_1 \leq r_1$ and $\|\vectheta_{\eta_3}\|_2 = r_2$ hold. Then, from  Lemma~\ref{l:mainpre1-1}, we have 
\begin{align}
\|\vectheta_{\eta_3}\|_2 &\leq \frac{3+c_{\mr{RE}}}{\kappa_{l}}\|\Sigma^\frac{1}{2}\vectheta_{\eta_3}\|_2 \leq \frac{3+c_{\mr{RE}}}{\kappa_{l}}r_{\Sigma} \leq  3\frac{1+c_{\mr{RE}}}{\kappa_{l}}r_{\Sigma} .
\end{align}
The case $\eta_3=\eta_2$ is a contradiction from
$\|\vectheta_{\eta_3}\|_2 = r_2 $ and \eqref{ine:det:main:0-4}.

\subsubsection{Step 1(c)}
\label{subsubsec:mainpropstep1c}
Assume that $\eta_3 = \eta_1$. We see that  $\|\Sigma^\frac{1}{2}\vectheta_{\eta_3}\|_2 \leq r_{\Sigma}$, $\|\vectheta_{\eta_3}\|_1 = r_1$ and $\|\vectheta_{\eta_3}\|_2 \leq r_2$ hold. 
Then, from Lemma \ref{l:mainpre1-2}, for $\eta = \eta_{3}$, we have
\begin{align}
\|\vectheta_{\eta_3}\|_1\leq \frac{1+c_{\mr{RE}}}{\kappa}\sqrt{s}\|\Sigma^\frac{1}{2}\vectheta_{\eta_3}\|_2 \leq  \frac{1+c_{\mr{RE}}}{\kappa}\sqrt{s}r_\Sigma \leq 3 \frac{1+c_{\mr{RE}}}{\kappa}\sqrt{s}r_\Sigma .
\end{align}
The case $\eta_3=\eta_1$ is a contradiction from
$\|\vectheta_{\eta_3}\|_1 =r_1$ and  \eqref{ine:det:main:0-4}.
\subsection{Step 2}
\label{asubsec:2}
From the arguments in Section \ref{subsec:mainpropstep1}, we have $\|\Sigma^\frac{1}{2}\vectheta\|_2 \leq r_{\Sigma}$ or $\|\vectheta\|_1 \leq r_1$ or $\|\vectheta\|_2 \leq r_2$  holds.
\begin{itemize}
	\item[(a)]In Section \ref{asubsubsec:2-1}, assume that  $\|\Sigma^\frac{1}{2}\vectheta\|_2 \leq r_{\Sigma}$ and $\|\vectheta\|_1 > r_1$ and $\|\vectheta\|_2 > r_2$
	hold and then derive a contradiction.
 \item[(b)]In Section \ref{asubsubsec:2-2}, assume that  $\|\Sigma^\frac{1}{2}\vectheta\|_2 > r_{\Sigma}$ and $\|\vectheta\|_1 \leq r_1$ and $\|\vectheta\|_2 > r_2$
hold and then derive a contradiction.
\item[(c)]In Section \ref{asubsubsec:2-3}, assume that  $\|\Sigma^\frac{1}{2}\vectheta\|_2 > r_{\Sigma}$ and $\|\vectheta\|_1 >r_1$ and $\|\vectheta\|_2 \leq r_2$
hold and then derive a contradiction.
\item[(d)]In Section \ref{asubsubsec:2-4}, assume that  $\|\Sigma^\frac{1}{2}\vectheta\|_2 > r_{\Sigma}$ and $\|\vectheta\|_1 \leq r_1$ and $\|\vectheta\|_2 \leq r_2$
hold and then derive a contradiction.
\item[(e)]In Section \ref{asubsubsec:2-5}, assume that  $\|\Sigma^\frac{1}{2}\vectheta\|_2 \leq r_{\Sigma}$ and $\|\vectheta\|_1 > r_1$ and $\|\vectheta\|_2 \leq r_2$
hold and then derive a contradiction.
\item[(f)]In Section \ref{asubsubsec:2-6}, assume that  $\|\Sigma^\frac{1}{2}\vectheta\|_2 \leq r_{\Sigma}$ and $\|\vectheta\|_1 \leq r_1$ and $\|\vectheta\|_2 > r_2$
hold and then derive a contradiction.
\end{itemize}
Finally, we have
\begin{align}
	\|\Sigma^\frac{1}{2}(\hat{\vecbeta}-\vecbeta^*)\|_2 \leq r_\Sigma \text {, }\|\hat{\vecbeta}-\vecbeta^*\|_2 \leq r_2 \text {, and }	\|\hat{\vecbeta}-\vecbeta^*\|_1 \leq r_1,
\end{align}
and the proof is complete.
\subsubsection{Step 2(a)}
\label{asubsubsec:2-1}
Assume that $\|\Sigma^\frac{1}{2}\vectheta\|_2 \leq r_{\Sigma}$ and $\|\vectheta\|_1 > r_1$ and $\|\vectheta\|_2 > r_2$ hold, and
then we can find $\eta_4, \eta_4'\in (0,1)$ such that $\|\vectheta_{\eta_4}\|_1 = r_1$ and $\|\vectheta_{\eta_4'}\|_2 = r_2$ hold.
We note that $\|\Sigma^\frac{1}{2}\vectheta_{\eta_4}\|_2\leq r_\Sigma$ and $\|\Sigma^\frac{1}{2}\vectheta_{\eta_4'}\|_2\leq r_\Sigma$ also hold.
Then, from the same arguments of Sections \ref{subsubsec:mainpropstep1b} and \ref{subsubsec:mainpropstep1c}, we have a contradiction.

\subsubsection{Step 2(b)}
\label{asubsubsec:2-2}
Assume that $\|\Sigma^\frac{1}{2}\vectheta\|_2 > r_{\Sigma}$ and $\|\vectheta\|_1 \leq r_1$ and $\|\vectheta\|_2 > r_2$ hold, and
then we can find $\eta_5, \eta_5'\in (0,1)$ such that $\|\Sigma^\frac{1}{2}\vectheta_{\eta_5}\|_2 = r_{\Sigma}$ and $\|\vectheta_{\eta_5'}\|_2 = r_2$ hold.
We note that $\|\vectheta_{\eta_5}\|_1\leq r_1$ and $\|\vectheta_{\eta_5'}\|_1\leq r_1$  also hold.
Then, from the same arguments of Sections \ref{subsubsec:mainpropstep1a} and  \ref{subsubsec:mainpropstep1b}, we have a contradiction.

\subsubsection{Step 2(c)}
\label{asubsubsec:2-3}
Assume that $\|\Sigma^\frac{1}{2}\vectheta\|_2 > r_{\Sigma}$ and $\|\vectheta\|_1 >r_1$ and $\|\vectheta\|_2 \leq r_2$ hold and,
then we can find $\eta_6,\eta_6'\in (0,1)$ such that $\|\vectheta_{\eta_6}\|_1=r_1$ and  $\|\Sigma^\frac{1}{2}\vectheta_{\eta_6'}\|_2 =r_\Sigma$ hold.
We note that $\|\vectheta_{\eta_6}\|_2\leq r_2$ and $\|\vectheta_{\eta_6'}\|_2\leq r_2$  also hold.
Then, from the same arguments of Sections \ref{subsubsec:mainpropstep1a} and  \ref{subsubsec:mainpropstep1c}, we have a contradiction.

\subsubsection{Step 2(d)}
\label{asubsubsec:2-4}
Assume that $\|\Sigma^\frac{1}{2}\vectheta\|_2 > r_{\Sigma}$ and $\|\vectheta\|_1 \leq r_1$ and $\|\vectheta\|_2 \leq r_2$ hold and,
then we can find $\eta_7\in (0,1)$ such that  $\|\Sigma^\frac{1}{2}\vectheta_{\eta_7}\|_2 = r_{\Sigma}$ holds.
We note that $\|\vectheta_{\eta_7}\|_1\leq r_1$ and  $\|\vectheta_{\eta_7}\|_2\leq r_2$ also hold.
Then, from the same arguments of Section \ref{subsubsec:mainpropstep1a}, we have  a contradiction.

\subsubsection{Step 2(e)}
\label{asubsubsec:2-5}
Assume that $\|\Sigma^\frac{1}{2}\vectheta\|_2 \leq r_{\Sigma}$ and $\|\vectheta\|_1 > r_1$ and $\|\vectheta\|_2 \leq r_2$ hold and,
then we can find $\eta_8\in (0,1)$ such that $\|\vectheta_{\eta_8}\|_1=r_1$ holds.
We note that $\|\Sigma^\frac{1}{2}\vectheta_{\eta_8}\|_2\leq r_\Sigma$ and $\|\vectheta_{\eta_8}\|_2\leq r_2$ also hold.
Then, from the same arguments of Section \ref{subsubsec:mainpropstep1c}, we have  a contradiction.

\subsubsection{Step 2(f)}
\label{asubsubsec:2-6}
Assume that $\|\Sigma^\frac{1}{2}\vectheta\|_2 \leq r_{\Sigma}$ and $\|\vectheta\|_1 \leq r_1$ and $\|\vectheta\|_2 > r_2$hold and,
then we can find $\eta_9\in (0,1)$ such that $\|\vectheta_{\eta_9}\|_2=r_2$ holds.
We note that $\|\Sigma^\frac{1}{2}\vectheta_{\eta_9}\|_2\leq r_\Sigma$  and $\|\vectheta_{\eta_9}\|_1\leq r_1$ also hold.
Then, from the same arguments of Section \ref{subsubsec:mainpropstep1b}, we have  a contradiction.

\section{Proof of Proposition \ref{p:main1:no}}
\label{sec:proof:p:main1:no}
\begin{proof}
	From simple algebra, we have
	\begin{align}
		\label{mcssub2}
		&\sup_{\vecv  \in r_1 \mbb{B}^d_1 \cap r_2 \mbb{B}^d_2 \cap r_\Sigma \mbb{B}^d_\Sigma}\sum_{i=1}^n \frac{1}{n} h\left(\frac{y_i-\langle \tilde{\vecx}_i, \vecbeta^*\rangle}{\lambda_o \sqrt{n}}\right) \langle \tilde{\vecx}_i,\vecv\rangle\nonumber\\
		&=\sup_{\vecv  \in r_1 \mbb{B}^d_1 \cap r_2 \mbb{B}^d_2 \cap r_\Sigma \mbb{B}^d_\Sigma}\left(\sum_{i=1}^n \frac{1}{n} h\left(\frac{\langle \vecx_i-\tilde{\vecx}_i, \vecbeta^*\rangle+\xi_i}{\lambda_o \sqrt{n}}\right) \langle \tilde{\vecx}_i,\vecv\rangle \right)\nonumber\\
		&\quad \quad\leq\underbrace{\sup_{\vecv  \in r_1 \mbb{B}^d_1 \cap r_2 \mbb{B}^d_2 \cap r_\Sigma \mbb{B}^d_\Sigma}\sum_{i=1}^n \frac{1}{n}h\left(\frac{\langle \vecx_i-\tilde{\vecx}_i, \vecbeta^*\rangle+\xi_i}{\lambda_o \sqrt{n}}\right) \langle \tilde{\vecx}_i,\vecv\rangle -\mbb{E}  h\left(\frac{\langle \vecx_i-\tilde{\vecx}_i, \vecbeta^*\rangle+\xi_i}{\lambda_o \sqrt{n}}\right)  \langle \tilde{\vecx}_i,\vecv\rangle }_{T_1}\nonumber \\
		&\quad \quad\quad \quad+\underbrace{\sup_{\vecv  \in r_1 \mbb{B}^d_1 \cap r_2 \mbb{B}^d_2 \cap r_\Sigma \mbb{B}^d_\Sigma}\left\{\mbb{E}   h\left(\frac{\langle \vecx_i-\tilde{\vecx}_i, \vecbeta^*\rangle+\xi_i}{\lambda_o \sqrt{n}}\right) \langle \tilde{\vecx}_i,\vecv\rangle-\mbb{E}   h\left(\frac{\langle \vecx_i-\tilde{\vecx}_i, \vecbeta^*\rangle+\xi_i}{\lambda_o \sqrt{n}}\right) \langle \vecx_i,\vecv\rangle \right\}}_{T_2}\nonumber \\
		&\quad \quad\quad \quad+\underbrace{\sup_{\vecv  \in r_1 \mbb{B}^d_1 \cap r_2 \mbb{B}^d_2 \cap r_\Sigma \mbb{B}^d_\Sigma}\left\{\mbb{E}   h\left(\frac{\langle \vecx_i-\tilde{\vecx}_i, \vecbeta^*\rangle+\xi_i}{\lambda_o \sqrt{n}}\right) \langle \vecx_i,\vecv\rangle-\mbb{E}   h\left(\frac{\xi_i}{\lambda_o\sqrt{n}}\right)\langle \vecx_i,\vecv\rangle\right\}}_{T_3}.
	\end{align}

	First, we evaluate $T_1$. 
	From union bound, we have 
	\begin{align}
		T_1\leq \max_{j=1,\cdots,d} \left|\sum_{i=1}^n \frac{1}{n} h\left(\frac{\langle \vecx_i-\tilde{\vecx}_i, \vecbeta^*\rangle+\xi_i}{\lambda_o \sqrt{n}}\right) \tilde{\vecx}_{i_j} -\mbb{E}\sum_{i=1}^n \frac{1}{n} h\left(\frac{\langle \vecx_i-\tilde{\vecx}_i, \vecbeta^*\rangle+\xi_i}{\lambda_o \sqrt{n}}\right) \tilde{\vecx}_{i_j} \right|r_1.
	\end{align}
	We note that, for any $1\leq j\leq d$,
	\begin{align}
		&\mbb{E}\left(h\left(\frac{\langle \vecx_i-\tilde{\vecx}_i, \vecbeta^*\rangle+\xi_i}{\lambda_o \sqrt{n}}\right) \tilde{\vecx}_{i_j}-\mbb{E}h\left(\frac{\langle \vecx_i-\tilde{\vecx}_i, \vecbeta^*\rangle+\xi_i}{\lambda_o \sqrt{n}}\right) \tilde{\vecx}_{i_j}\right)^2\leq 	\mbb{E}h\left(\frac{\langle \vecx_i-\tilde{\vecx}_i, \vecbeta^*\rangle+\xi_i}{\lambda_o \sqrt{n}}\right)^2 \tilde{\vecx}_{i_j}^2\leq \mbb{E}\tilde{\vecx}_{i_j}^2\leq 1, \nonumber\\
		&\mbb{E}\left(h\left(\frac{\langle \vecx_i-\tilde{\vecx}_i, \vecbeta^*\rangle+\xi_i}{\lambda_o \sqrt{n}}\right) \tilde{\vecx}_{i_j}-\mbb{E}h\left(\frac{\langle \vecx_i-\tilde{\vecx}_i, \vecbeta^*\rangle+\xi_i}{\lambda_o \sqrt{n}}\right) \tilde{\vecx}_{i_j}\right)^p \nonumber\\
		&\leq  \left(2 \tau_\vecx\right)^{p-2}\mbb{E}\left(h\left(\frac{\langle \vecx_i-\tilde{\vecx}_i, \vecbeta^*\rangle+\xi_i}{\lambda_o \sqrt{n}}\right) \tilde{\vecx}_{i_j}-\mbb{E}h\left(\frac{\langle \vecx_i-\tilde{\vecx}_i, \vecbeta^*\rangle+\xi_i}{\lambda_o \sqrt{n}}\right) \tilde{\vecx}_{i_j}\right)^2  \leq \left(2 \tau_\vecx\right)^{p-2},
	\end{align}
	and from Bernstein's inequality (Lemma 5.1 of \cite{Dir2015Tail}), we have
	\begin{align}
		 \mbb{P}\left(\left|\sum_{i=1}^n \frac{1}{n} h\left(\frac{\langle \vecx_i-\tilde{\vecx}_i, \vecbeta^*\rangle+\xi_i}{\lambda_o \sqrt{n}}\right) \tilde{\vecx}_{i_j} -\mbb{E}\sum_{i=1}^n \frac{1}{n} h\left(\frac{\langle \vecx_i-\tilde{\vecx}_i, \vecbeta^*\rangle+\xi_i}{\lambda_o \sqrt{n}}\right) \tilde{\vecx}_{i_j}\right|\geq \sqrt{2\frac{t}{n}}+2\frac{\tau_{\vecx}t}{n}\right)\leq e^{-t},
	\end{align}
	and, with probability at least $1-\delta$, we have 
	\begin{align}
		T_1\leq \left(\sqrt{2\frac{\log(d/\delta)}{n}}+2\frac{\tau_{\vecx}\log(d/\delta)}{n}\right)r_1.
	\end{align}
	
	Second, we evaluate $T_2$.
	\begin{align}
		\label{ine:T_3}
		T_2
		&\stackrel{(a)}{\leq}   \left\|\mbb{E}  h\left(\frac{\langle \vecx_i-\tilde{\vecx}_i, \vecbeta^*\rangle+\xi_i}{\lambda_o \sqrt{n}}\right) (\tilde{\vecx}_i-\vecx_i)\right\|_\infty r_1\nonumber \\
		&=  \max_{j \in \{1,\cdots,d\}} \left|\mbb{E}  h\left(\frac{\langle \vecx_i-\tilde{\vecx}_i, \vecbeta^*\rangle+\xi_i}{\lambda_o \sqrt{n}}\right)(\tilde{\vecx}_{i_j}-\vecx_{i_j}) \right|r_1\nonumber \\
		&\stackrel{(b)}{\leq}   \max_{j \in \{1,\cdots,d\}} \mbb{E} \left|\tilde{\vecx}_{i_j}-\vecx_{i_j}\right| r_1\nonumber \\
		&=   \max_{j \in \{1,\cdots,d\}} \mbb{E} \left|\tilde{\vecx}_{i_j}-\vecx_{i_j}\right|\mr{I}_{|\vecx_{i_j} |\geq \tau_{\vecx}}  r_1\nonumber \\
		&\leq    \max_{j \in \{1,\cdots,d\}} 2\mbb{E} \left|\vecx_{i_j}\right|\mr{I}_{|\vecx_{i_j} |\geq \tau_{\vecx}}  r_1\nonumber \\
		&\stackrel{(b)}{\leq}  2 \max_{j \in \{1,\cdots,d\}}(  \mbb{E}\vecx_{i_j}^4)^\frac{1}{4}(\mbb{E}\mr{I}_{|\vecx_{i_j} |\geq \tau_{\vecx}})^{\frac{3}{4}}r_1\stackrel{(c)}{\leq}  2 \frac{K^4}{\tau_{\vecx}^3}r_1,
	\end{align}
	where (a) follows from H{\"o}lder's inequality, (b) follows from $-1\leq h(\cdot)\leq 1$, and (c) follows from Lemma \ref{a:thex}. 

	Lastly, we evaluate $T_3$. Define  $h'(\cdot)$ as the differential of $h(\cdot)$. Let $0<\upsilon <1$. We have
	\begin{align}
		\label{ine:T_4}
		T_3
		&\stackrel{(a)}{\leq }\sup_{\vecv  \in r_1 \mbb{B}^d_1 \cap r_2 \mbb{B}^d_2 \cap r_\Sigma \mbb{B}^d_\Sigma}(\mbb{E}\langle \vecx_i,\vecv\rangle^4)^\frac{1}{4}\left( \mbb{E}\left|h\left(\frac{\langle \vecx_i-\tilde{\vecx}_i, \vecbeta^*\rangle+\xi_i}{\lambda_o \sqrt{n}}\right) -h\left(\frac{\xi_i}{\lambda_o\sqrt{n}}\right)\right|^\frac{4}{3}\right)^\frac{3}{4}\nonumber \\
		&\stackrel{(b)}{\leq}Kr_\Sigma \left( \mbb{E}\left|h\left(\frac{\langle \vecx_i-\tilde{\vecx}_i, \vecbeta^*\rangle+\xi_i}{\lambda_o \sqrt{n}}\right) -h\left(\frac{\xi_i}{\lambda_o\sqrt{n}}\right)\right|^\upsilon\left|h\left(\frac{\langle \vecx_i-\tilde{\vecx}_i, \vecbeta^*\rangle+\xi_i}{\lambda_o \sqrt{n}}\right) -h\left(\frac{\xi_i}{\lambda_o\sqrt{n}}\right)\right|^{\frac{4}{3}-\upsilon} \right)^\frac{3}{4}\nonumber \\
		&\stackrel{(c)}{\leq}  2^{1-\frac{3\upsilon}{4}}Kr_\Sigma \left( \mbb{E}\left|h\left(\frac{\langle \vecx_i-\tilde{\vecx}_i, \vecbeta^*\rangle+\xi_i}{\lambda_o \sqrt{n}}\right) -h\left(\frac{\xi_i}{\lambda_o\sqrt{n}}\right)\right|^\upsilon\right)^\frac{3}{4}\nonumber \\
		&\stackrel{(d)}{\leq }   2^{1-\frac{3\upsilon}{4}} Kr_\Sigma\left( \mbb{E}\left|\frac{(\vecx_{i}-\tilde{\vecx}_i)^\top \vecbeta^*}{\lambda_o\sqrt{n}}\right|^\upsilon \right)^\frac{3}{4}\nonumber\\
		&\stackrel{(e)}{\leq} 2^{1-\frac{3\upsilon}{4}} K \left( \mbb{E}\left|\langle \vecx_{i}-\tilde{\vecx}_i, \vecbeta^*\rangle \right|^\upsilon \right)^\frac{3}{4} r_\Sigma,
	\end{align}
	where (a) follows from H{\"o}lder's inequality, (b)  follows from the finite kurtosis property of $\vecx_i$,  (c) follows from the fact that $|h(\cdot)|\leq 1$,  (d) follows from Lipschitz continuity of $h(\cdot)$ and  from the fact $|h'(\cdot)|\leq 1$, and (e) follows from $\lambda_o\sqrt{n}\geq 1$.

	We compute $\left( \mbb{E}\left|\langle \vecx_{i}-\tilde{\vecx}_i, \vecbeta^*\rangle \right|^\upsilon \right)^\frac{3}{4}$. Define $\mr{dom}(\vecbeta^*)$ as the set of the indices such that $\vecbeta^*_j\neq 0$.  We have
\begin{align}
	\left( \mbb{E}\left|\langle \vecx_{i}-\tilde{\vecx}_i, \vecbeta^*\rangle \right|^\upsilon \right)^\frac{3}{4}&\leq \left\{ \mbb{E} \left( \sum_{j\in \mr{dom}(\vecbeta^*) }|\vecbeta^*_j (\vecx_{i_j}-\tilde{\vecx}_{i_j} )| \right)^\upsilon \right\}^\frac{3}{4}\nonumber \\
	&\stackrel{(a)}{\leq}\left(  \sum_{j\in \mr{dom}(\vecbeta^*) }\mbb{E} \left|\vecbeta^*_j (\vecx_{i_j}-\tilde{\vecx}_{i_j} )\right|^\upsilon \right)^\frac{3}{4}\nonumber \\
&\leq c_{\vecbeta}^\frac{3\upsilon}{4} s^\frac{3}{4} \left( \mbb{E} \left|\vecx_{i_j}-\tilde{\vecx}_{i_j}\right|^\upsilon  \right)^\frac{3}{4}\nonumber \\
&\leq  2^\frac{3 \upsilon}{4}  c_{\vecbeta}^\frac{3\upsilon}{4} s^\frac{3}{4} \left( \mbb{E} \left|\mr{I}_{|\vecx_{i_j}|\geq \tau_\vecx}\vecx_{i_j} \right|^\upsilon \right)^\frac{3}{4}\nonumber \\
&\stackrel{(b)}{\leq} 2^\frac{3\upsilon}{4} c_{\vecbeta}^\frac{3\upsilon}{4}   s^\frac{3}{4}\left(\mbb{E}\vecx_{i_j} ^4\right)^{ \frac{\upsilon}{4}\times\frac{3}{4}}\left(\mbb{E}\mr{I}_{|\vecx_{i_j}|\geq \tau_{{\vecx} }} \right)^{ \frac{4-\upsilon}{4}\times\frac{3}{4}}\nonumber \\
&\stackrel{(c)}{\leq} 2^\frac{3\upsilon}{4}  c_{\vecbeta}^\frac{3\upsilon}{4}  K^3 s^\frac{3}{4}\left(\frac{1}{\tau_{\vecx}}\right)^{ 3-\frac{3\upsilon}{4}},
\end{align}
where (a) follows from the subadditivity,  (b) follows  H{\"o}lder's inequality, and (c) follows from Lemma \ref{a:thex}.

Set $\upsilon = 1/3$. Combining the arguments above, with probability at least $1-\delta$, we have
	\begin{align}
		&\sup_{\vecv  \in r_1 \mbb{B}^d_1 \cap r_2 \mbb{B}^d_2 \cap r_\Sigma \mbb{B}^d_\Sigma}\left|\sum_{i=1}^n \frac{1}{n} h\left(\frac{\langle \vecx_i-\tilde{\vecx}_i, \vecbeta^*\rangle+\xi_i}{\lambda_o \sqrt{n}}\right) \langle \tilde{\vecx}_i,\vecv\rangle \right|\nonumber \\
		&\leq  \left(\sqrt{2\frac{\log(d/\delta)}{n}}+2\frac{\tau_{\vecx}\log(d/\delta)}{n}\right)r_1 +2 \frac{K^4}{\tau_{\vecx}^3}r_1+  4K^4 c_{\vecbeta}^\frac{1}{4}  s^\frac{3}{4}\left(\frac{1}{\tau_{\vecx}}\right)^{ 3-\frac{3\upsilon}{4}}r_\Sigma \nonumber\\
		&\stackrel{(a)}{\leq} 4 \left(c_{r_1}\sqrt{s\frac{\log (d/\delta)}{n}}+c_{r_1}\tau_\vecx\sqrt{s}\frac{\log (d/\delta)}{n}+K^4c_{r_1}\frac{\sqrt{s}}{\tau_{\vecx}^3}+ K^4 c_{\vecbeta}^\frac{1}{4}s^\frac{3}{4}\left(\frac{1}{\tau_{\vecx}}\right)^{ 3-\frac{1}{4}}\right)r_\Sigma,
	\end{align}
	where (a) follows from $c_{r_1}\sqrt{s}r_\Sigma = r_1$.
\end{proof}

\subsection{Proof of Proposition \ref{p:main:sc:no}}
\label{sec:proof:p:main:sc:no}
\begin{proof}
	Define 
	\begin{align}
		\mathfrak{V}^\Sigma_{r_1,r_2,r_\Sigma} = \{\vecv \in\mbb{R}^d \mid \vecv \in r_1 \mbb{B}^d_1 \cap r_2 \mbb{B}^d_2,\, \|\Sigma^\frac{1}{2}\vecv\|_2 = r_\Sigma\}.
	\end{align}
	This proposition is proved in a manner similar to  the proof of Proposition B.1 of \cite{CheZho2020Robust}.
 	The L.H.S of \eqref{ine:sc:no} divided by $\lambda_o^2$ can be expressed as
	\begin{align}
		 \sum_{i=1}^n  \left(-h\left(\frac{\langle \vecx_i-\tilde{\vecx}_i, \vecbeta^*\rangle+\xi_i}{\lambda_o \sqrt{n}}-\frac{\tilde{\vecx}_i^\top \vecv}{\lambda_o\sqrt{n}}\right)+h\left(\frac{\langle \vecx_i-\tilde{\vecx}_i, \vecbeta^*\rangle+\xi_i}{\lambda_o \sqrt{n}}\right)  \right)\frac{\tilde{\vecx}_i^\top \vecv}{\lambda_o\sqrt{n}}.
	\end{align}
	From the convexity of the Huber loss,  we have
	\begin{align}
		&\sum_{i=1}^n  \left(-h\left(\frac{\langle \vecx_i-\tilde{\vecx}_i, \vecbeta^*\rangle+\xi_i}{\lambda_o \sqrt{n}}-\frac{\tilde{\vecx}_i^\top \vecv}{\lambda_o\sqrt{n}}\right)+h\left(\frac{\langle \vecx_i-\tilde{\vecx}_i, \vecbeta^*\rangle+\xi_i}{\lambda_o \sqrt{n}}\right)  \right)\frac{\tilde{\vecx}_i^\top \vecv}{\lambda_o\sqrt{n}} \\
		&\geq \sum_{i=1}^n  \left(-h\left(\frac{\langle \vecx_i-\tilde{\vecx}_i, \vecbeta^*\rangle+\xi_i}{\lambda_o \sqrt{n}}-\frac{\tilde{\vecx}_i^\top \vecv}{\lambda_o\sqrt{n}}\right)+h\left(\frac{\langle \vecx_i-\tilde{\vecx}_i, \vecbeta^*\rangle+\xi_i}{\lambda_o \sqrt{n}}\right)  \right)\frac{\tilde{\vecx}_i^\top \vecv}{\lambda_o\sqrt{n}}\mr{I}_{E_i}.
	\end{align}
	Define the functions
	\begin{align}
	\label{def:phipsi}
		\varphi(x) =\begin{cases}
		x^2   &  \mbox{ if }  |x| \leq  1/4\\
		(x-1/4)^2   &  \mbox{ if }  1/4\leq x  \leq  1/2 \\
		(x+1/4)^2   &  \mbox{ if }  -1/2\leq x  \leq -1/4   \\
		0 & \mbox{ if } |x| >1/2
	\end{cases} ~\mbox{ and }~
		\psi(x) = \mr{I}_{(|x| \leq 1/2 ) }.
	\end{align}
	Let $f_i(\vecv) = \varphi\left(\frac{\tilde{\vecx}_i^\top \vecv}{\lambda_o\sqrt{n}}\right) \psi\left(\frac{\langle \vecx_i-\tilde{\vecx}_i, \vecbeta^*\rangle+\xi_i}{\lambda_o \sqrt{n}}\right)$
	and we have
	\begin{align}
	\label{ine:huv-conv-f}
		&\sum_{i=1}^n  \left(-h\left(\frac{\langle \vecx_i-\tilde{\vecx}_i, \vecbeta^*\rangle+\xi_i}{\lambda_o \sqrt{n}}-\frac{\tilde{\vecx}_i^\top \vecv}{\lambda_o\sqrt{n}}\right)+h\left(\frac{\langle \vecx_i-\tilde{\vecx}_i, \vecbeta^*\rangle+\xi_i}{\lambda_o \sqrt{n}}\right)  \right)\frac{\tilde{\vecx}_i^\top \vecv}{\lambda_o\sqrt{n}}\nonumber\\
		 &\geq \sum_{i=1}^n  \left(\frac{\tilde{\vecx}_i^\top \vecv}{\lambda_o\sqrt{n}}\right)^2\mr{I}_{E_i}\stackrel{(a)}{\geq} \sum_{i=1}^n  \varphi\left(\frac{\tilde{\vecx}_i^\top \vecv}{\lambda_o\sqrt{n}}\right) \psi\left(\frac{\langle \vecx_i-\tilde{\vecx}_i, \vecbeta^*\rangle+\xi_i}{\lambda_o \sqrt{n}}\right)=\sum_{i=1}^n f_i(\vecv),
	\end{align}
	where (a) follows from $\varphi(v) \geq v^2$ for $|v| \leq 1/2$.  We note that 
	\begin{align}
	\label{ine:f-1/4}
		f_i(\vecv) \leq\varphi(v_i) \leq \min\left\{\left(\frac{\tilde{\vecx}_i^\top \vecv}{\lambda_o\sqrt{n}}\right)^2,\frac{1}{4}\right\}.
	\end{align}
	To bound $\sum_{i=1}^n f_i(\vecv)$ from below, for any fixed $ \vecv \in  \mathfrak{V}^\Sigma_{r_1,r_2,r_\Sigma}$, we have
	\begin{align}
	\label{ine:fbelow}
		\sum_{i=1}^n f_i(\vecv)&\geq \mbb{E}f(\vecv) -\sup_{\vecv \in \mathfrak{V}^\Sigma_{r_1,r_2,r_\Sigma}}  \Big|\sum_{i=1}^n f_i(\vecv)-\mbb{E}\sum_{i=1}^n f_i(\vecv)\Big|.
	\end{align}
	Define the supremum of a random process indexed by $\mathfrak{V}^\Sigma_{r_1,r_2,r_\Sigma}$:
	\begin{align}
	\label{ap:delta}
		\Delta  :=  \sup_{ \vecv \in \mathfrak{V}^\Sigma_{r_1,r_2,r_\Sigma}} \left| \sum_{i=1}^n f_i(\vecv) - \mbb{E}\sum_{i=1}^n f_i	(\vecv) \right| .  
	\end{align}
	From \eqref{ine:huv-conv-f} and \eqref{def:phipsi}, we have
	\begin{align}
	\label{ine:aplower:tmp}
		\mbb{E}\sum_{i=1}^n f_i(\vecv)&\geq \sum_{i=1}^n\mbb{E} \left(\frac{\tilde{\vecx}_i^\top \vecv}{\lambda_o\sqrt{n}}\right)^2- \sum_{i=1}^n\mbb{E}\left(\frac{\tilde{\vecx}_i^\top \vecv}{\lambda_o\sqrt{n}}\right)^2 \mr{I}_{\left|\frac{\tilde{\vecx}_i^\top \vecv}{\lambda_o\sqrt{n}} \right|\geq 1/2}-  \sum_{i=1}^n\mbb{E}\left(\frac{\tilde{\vecx}_i^\top \vecv}{\lambda_o\sqrt{n}}\right)^2 \mr{I}_{ \left|\frac{\langle \tilde{\vecx}_i-\tilde{\vecx}_i, \vecbeta^*\rangle+\xi_i}{\lambda_o \sqrt{n}}\right|\geq 1/2 }.
	\end{align}
	For $\mbb{E} \left(\frac{\tilde{\vecx}_i^\top \vecv}{\lambda_o\sqrt{n}}\right)^2$, from Lemma \ref{a:l:1}, we have
	\begin{align}
		\label{ine:v2}
	 \mbb{E}(\tilde{\vecx}_i^\top \vecv)^2  \geq 		-4\frac{K^4} {\tau_{\vecx}^2}\|\vecv\|_1^2+ \mbb{E}(\vecx_i^\top \vecv)^2  &=		-2\frac{K^4} {\tau_{\vecx}^2}\|\vecv\|_1^2+  \|\Sigma^\frac{1}{2}\vecv\|_2^2\nonumber\\
	 &\geq -4\frac{K^4} {\tau_{\vecx}^2}r_1^2+  \|\Sigma^\frac{1}{2}\vecv\|_2^2\nonumber\\
	 &\stackrel{(a)}{\geq} -4K^4c_{r_1}^2 \frac{s} {\tau_{\vecx}^2}r_\Sigma^2+  \|\Sigma^\frac{1}{2}\vecv\|_2^2\nonumber\\
	 &\stackrel{(b)}{\geq} \left(-4K^4c_{r_1}^2 \frac{s} {\tau_{\vecx}^2}+  1\right) \|\Sigma^\frac{1}{2}\vecv\|_2^2,
	\end{align}
	where (a) follows from $r_1 = c_{r_1}\sqrt{s}r_\Sigma$ and (b) follows from $r_\Sigma =  \|\Sigma^\frac{1}{2}\vecv\|_2^2$.

For $\mbb{E}\left(\frac{\tilde{\vecx}_i^\top \vecv}{\lambda_o\sqrt{n}}\right)^2 \mr{I}_{\left|\frac{\tilde{\vecx}_i^\top \vecv}{\lambda_o\sqrt{n}} \right|\geq 1/2}$, we have
	\begin{align}
		\label{ine:q2}
		\mbb{E}(\tilde{\vecx}_i^\top \vecv)^2\mr{I}_{\left|\frac{\tilde{\vecx}_i^\top \vecv}{\lambda_o\sqrt{n}} \right|\geq 1/2 }& = \mbb{E} \vecv^\top(\tilde{\vecx}_i \tilde{\vecx}_i^\top -\vecx_i \vecx_i^\top +\vecx_i \vecx_i^\top )\vecv \mr{I}_{\left|\frac{\tilde{\vecx}_i^\top \vecv}{\lambda_o\sqrt{n}} \right|\geq 1/2}\nonumber\\
		&\stackrel{(a)}{\leq} 4\frac{K^4} {\tau_{\vecx}^2}\|\vecv\|_1^2+\mbb{E}(\vecx_i^\top \vecv)^2\mr{I}_{\left|\frac{\tilde{\vecx}_i^\top \vecv}{\lambda_o\sqrt{n}} \right|\geq 1/2}\nonumber\\
		&\stackrel{(b)}{\leq} 4\frac{K^4} {\tau_{\vecx}^2}\|\vecv\|_1^2+\sqrt{\mbb{E}(\vecx_i^\top \vecv)^4}\sqrt{\mbb{E}\mr{I}_{\left|\frac{\tilde{\vecx}_i^\top \vecv}{\lambda_o\sqrt{n}} \right|\geq 1/2}}\nonumber\\
		&\stackrel{(c)}{\leq} 4\frac{K^4} {\tau_{\vecx}^2}\|\vecv\|_1^2+\sqrt{2}K^2\|\Sigma^\frac{1}{2}\vecv\|_2^2\sqrt{\frac{\mbb{E}(|\langle \vecx_i^\top \vecv \rangle |+|\langle \vecx_i^\top  - \tilde{\vecx}_i , \vecv \rangle |)}{\lambda_o\sqrt{n}}}\nonumber\\
		&\stackrel{(d)}{\leq} 4\frac{K^4} {\tau_{\vecx}^2}\|\vecv\|_1^2+K^2\|\Sigma^\frac{1}{2}\vecv\|_2^2\sqrt{\frac{2}{\lambda_o\sqrt{n}}}\sqrt{r_\Sigma+\|\vecv\|_1 \frac{K^4}{\tau_{\vecx}^3}}\nonumber\\
		&\stackrel{(e)}{\leq} \left(4K^4 c_{r_1}^2\frac{s} {\tau_{\vecx}^2}+ K^2\sqrt{\frac{2}{\lambda_o\sqrt{n}}}\sqrt{1+c_{r_1}\sqrt{s}\frac{K^4}{\tau_{\vecx}^3}}\right)\|\Sigma^\frac{1}{2}\vecv\|_2^2,
	\end{align}
	where (a) follows from \eqref{eq1:a:l:1} in the proof of Lemma \ref{a:l:1}, (b) follows from H{\"o}lder's inequality,  (c) follows from the finite kurtosis property of $\vecx_i$, the relationship of expectation of indicator function and probability, and Markov's inequality, (d) follows from Lemma \ref{a:l:1} and the definition of $\vecx_i$, and (e) follows from $r_1 = c_{r_1}\sqrt{s}r_\Sigma$, $r_\Sigma \leq 1$ and $r_\Sigma = \|\Sigma^\frac{1}{2}\vecv\|_2^2$.
	
For $\mbb{E}\left(\frac{\tilde{\vecx}_i^\top \vecv}{\lambda_o\sqrt{n}}\right)^2 \mr{I}_{ \left|\frac{\langle \vecx_i-\tilde{\vecx}_i, \vecbeta^*\rangle+\xi_i}{\lambda_o \sqrt{n}}\right|\geq 1/2 }$,we have
	\begin{align}
		\label{ine:q3}
		\mbb{E}(\tilde{\vecx}_i^\top \vecv)^2\mr{I}_{ \left|\frac{\langle \vecx_i-\tilde{\vecx}_i, \vecbeta^*\rangle+\xi_i}{\lambda_o \sqrt{n}}\right|\geq 1/2}& = \mbb{E} \vecv^\top(\tilde{\vecx}_i \tilde{\vecx}_i^\top -\vecx_i \vecx_i^\top +\vecx_i \vecx_i^\top )\vecv \mr{I}_{ \left|\frac{\langle \vecx_i-\tilde{\vecx}_i, \vecbeta^*\rangle+\xi_i}{\lambda_o \sqrt{n}}\right|\geq 1/2}\nonumber\\
		&\stackrel{(a)}{\leq}  4\frac{K^4} {\tau_{\vecx}^2}\|\vecv\|_1^2+\mbb{E}(\vecx_i^\top \vecv)^2\mr{I}_{ \left|\frac{\langle \vecx_i-\tilde{\vecx}_i, \vecbeta^*\rangle+\xi_i}{\lambda_o \sqrt{n}}\right|\geq 1/2}\nonumber\\
		&\stackrel{(b)}{\leq}  4\frac{K^4} {\tau_{\vecx}^2}\|\vecv\|_1^2+\sqrt{\mbb{E}(\vecx_i^\top \vecv)^4}\sqrt{\mbb{E}\mr{I}_{ \left|\frac{\langle \vecx_i-\tilde{\vecx}_i, \vecbeta^*\rangle+\xi_i}{\lambda_o \sqrt{n}}\right|\geq 1/2}}\nonumber\\
		&\stackrel{(c)}{\leq}  4\frac{K^4} {\tau_{\vecx}^2}\|\vecv\|_1^2+K^2\|\Sigma^\frac{1}{2}\vecv\|_2^2\sqrt{\frac{2}{\lambda_o\sqrt{n}}}\sqrt{\mbb{E}\left(|\xi_i|+|\langle \vecx_i-\tilde{\vecx}_i,\vecbeta^*\rangle|\right)}\nonumber\\
		&\stackrel{(d)}{\leq}  4\frac{K^4} {\tau_{\vecx}^2}\|\vecv\|_1^2+K^2\|\Sigma^\frac{1}{2}\vecv\|_2^2\sqrt{\frac{2}{\lambda_o\sqrt{n}}}\sqrt{\sigma+\|\vecbeta^*\|_1 \frac{K^4}{\tau_{\vecx}^3}}\nonumber \\
		&\stackrel{(e)}{\leq}  \left(4K^4c_{r_1}^2\frac{s} {\tau_{\vecx}^2}+K^2\sqrt{\frac{2}{\lambda_o\sqrt{n}}}\sqrt{\sigma+\|\vecbeta^*\|_1 \frac{K^4}{\tau_{\vecx}^3}}\right)\|\Sigma^\frac{1}{2}\vecv\|_2^2,
	\end{align}
	where (a) follows from Lemma \ref{a:l:1}, (b) follows from H{\"o}lder's inequality,  (c) follows from the definition of $\vecx_i$, the relationship of expectation of indicator function and probability,  Markov's inequality and the triangular inequality, (d) follows from Lemma \ref{a:l:2}, and (e) follows from $r_1 = c_{r_1}\sqrt{s}r_\Sigma$ and $r_\Sigma = \|\Sigma^\frac{1}{2}\vecv\|_2^2$.

	Consequently, from \eqref{ine:huv-conv-f}, \eqref{ine:fbelow}, \eqref{ine:aplower:tmp} \eqref{ine:v2}, \eqref{ine:q2} and \eqref{ine:q3}, we have
	\begin{align}
	\label{ap:h_bellow}
	&\|\Sigma^\frac{1}{2}\vecv\|_2^2\left( 1-K^2\sqrt{\frac{2}{\lambda_o\sqrt{n}}} \left(\sqrt{\sigma+\|\vecbeta^*\|_1\frac{K^4}{\tau_{\vecx}^3}}+\sqrt{1+c_{r_1}\sqrt{s}\frac{K^4}{\tau_{\vecx}^3}}\right)-12K^4 c_{r_1}^2\frac{s} {\tau_{\vecx}^2}\right)-\lambda_o^2\Delta\nonumber \\
	& \quad \quad \leq \lambda_o^2	\sum_{i=1}^n  \left(-h\left(\frac{\langle \vecx_i-\tilde{\vecx}_i, \vecbeta^*\rangle+\xi_i}{\lambda_o \sqrt{n}}-\frac{\tilde{\vecx}_i^\top \vecv}{\lambda_o\sqrt{n}}\right)  +h\left(\frac{\langle \vecx_i-\tilde{\vecx}_i, \vecbeta^*\rangle+\xi_i}{\lambda_o \sqrt{n}}\right)  \right)\frac{\tilde{\vecx}_i^\top \vecv}{\lambda_o\sqrt{n}}.
	\end{align}

	Next we evaluate the stochastic term $\Delta$ defined in \eqref{ap:delta}. 
	From \eqref{ine:f-1/4} and Theorem 2.1 of \cite{Bou2002Bennett}, with probability at least $1-\delta$, we have
	\begin{align}
	\label{ine:delta}
		\Delta & \leq   \mbb{E} \Delta + \sqrt{\sup_{ \vecv \in \mathfrak{V}^\Sigma_{r_1,r_2,r_\Sigma}} \sum_{i=1}^n\mbb{E}  (f_i(\vecv)-\mbb{E}f_i(\vecv))^2} \sqrt{2\log(1/\delta)} + \frac{1}{3}\log(1/\delta).
	\end{align}
	From  \eqref{ine:f-1/4} and $r_1=c_{r_1}\sqrt{s}r_\Sigma$, we have
	\begin{align}
	\label{ap:ine:cov3}
	\mbb{E}(f_i(\vecv)-\mbb{E}f_i(\vecv))^2 \leq \mbb{E}f_i^2(\vecv)\leq  \mbb{E}\frac{f_i(\vecv)}{4} \leq \mbb{E}\frac{\langle \tilde{\vecx}_i,\vecv\rangle^2 }{4\lambda_o^2n}&\leq \frac{1}{\lambda_o^2n}\left(\frac{K^4} {\tau_{\vecx}^2}\|\vecv\|_1^2+  \frac{\|\Sigma^\frac{1}{2}\vecv\|_2^2}{4}\right)\nonumber\\
	&\leq \frac{1}{\lambda_o^2n}\left(K^4c_{r_1}^2\frac{s} {\tau_{\vecx}^2}+  \frac{1}{4}\right)\|\Sigma^\frac{1}{2}\vecv\|_2^2.
	\end{align}
	Combining this and  \eqref{ine:delta},	from the triangular inequality, with probability at least $1-\delta$, we have
	\begin{align}
	\label{ap:delta_upper}
		\lambda_o^2\Delta &\leq \lambda_o^2\mbb{E} \Delta+\lambda_o\sqrt{n} \left(K^2c_{r_1}\sqrt{ s\frac{\log(1/\delta)} {\tau_{\vecx}^2n}}+\sqrt{ \frac{\log(1/\delta)}{2n}}\right)\|\Sigma^\frac{1}{2}\vecv\|_2 +\lambda_o^2\frac{1}{3}\log(1/\delta)\nonumber\\
		&\stackrel{(a)}{\leq} \lambda_o^2\mbb{E} \Delta+\lambda_o\sqrt{n} \left(K^2c_{r_1}\sqrt{ s\frac{\log(d/\delta)} {\tau_{\vecx}^2n}}+\sqrt{ s\frac{\log(d/\delta)}{2n}}\right)\|\Sigma^\frac{1}{2}\vecv\|_2 +\lambda_o^2n \frac{\log(d/\delta)}{3n}\nonumber\\
		&\stackrel{(b)}{\leq} \lambda_o^2\mbb{E} \Delta+\lambda_o\sqrt{n} c_{r_1}\left(\frac{K^2} {\tau_{\vecx}}+1\right)\sqrt{\frac{s\log(d/\delta)}{n}}\|\Sigma^\frac{1}{2}\vecv\|_2,
	\end{align}
	where (a) follows from $\log(1/\delta)\leq \log(d/\delta)$ and $s\geq 1$ and (b) follows from $\lambda_o\sqrt{n}\sqrt{\frac{s\log(d/\delta)}{n}}\leq r_\Sigma$, $s\geq 1$ and $c_{r_1}\geq 1$.
	For $\mbb{E}\Delta$, from symmetrization inequality (Lemma 11.4 of \cite{BouLugMas2013concentration}), we have 
	\begin{align}
		\mbb{E}\Delta &\leq 2   \,\mbb{E} \sup_{ \vecv \in  \mathfrak{V}^\Sigma_{r_1,r_2,r_\Sigma} } \left|  \sum_{i=1}^n 
		b_i \varphi \left(\frac{\tilde{\vecx}_i^\top \vecv}{\lambda_o\sqrt{n}}\right) \psi \left(\frac{\langle \vecx_i-\tilde{\vecx}_i, \vecbeta^*\rangle+\xi_i}{\lambda_o \sqrt{n}}\right)\right|\nonumber\\
		&\leq 2   \,\mbb{E} \sup_{ \vecv \in  r_1 \mbb{B}^d_1 \cap r_2 \mbb{B}^d_2 \cap r_\Sigma \mbb{B}^d_\Sigma } \left|  \sum_{i=1}^n 
		b_i \varphi \left(\frac{\tilde{\vecx}_i^\top \vecv}{\lambda_o\sqrt{n}}\right) \psi \left(\frac{\langle \vecx_i-\tilde{\vecx}_i, \vecbeta^*\rangle+\xi_i}{\lambda_o \sqrt{n}}\right)\right|,
	\end{align} 
	where  $\{b_i\}_{i=1}^n$ is a sequence of i.i.d. Rademacher random variables which are independent of $\{\tilde{\vecx}_i,\xi_i\}_{i=1}^n$.
	We  denote $\mbb{E}^*$ as a conditional expectation of $\left\{b_i\right\}_{i=1}^n$ given $\left\{\tilde{\vecx}_i,\xi_i\right\}_{i=1}^n$. From Exercise 2.2.2 of \cite{Tal2014Upper}, for any $\vecv_0 \in  r_1 \mbb{B}^d_1 \cap r_2 \mbb{B}^d_2 \cap r_\Sigma \mbb{B}^d_\Sigma$, we have 
	\begin{align}
		\label{ine:rad}
		&\mbb{E}^* \sup_{ \vecv  \in r_1 \mbb{B}^d_1 \cap r_2 \mbb{B}^d_2 \cap r_\Sigma \mbb{B}^d_\Sigma } \left|  \sum_{i=1}^n 
		b_i \varphi \left(\frac{\tilde{\vecx}_i^\top \vecv}{\lambda_o\sqrt{n}}\right) \psi \left(\frac{\langle \vecx_i-\tilde{\vecx}_i, \vecbeta^*\rangle+\xi_i}{\lambda_o \sqrt{n}}\right)\right|\nonumber\\
		&\leq 	\mbb{E}^*\left|  \sum_{i=1}^n 
		b_i \varphi \left(\frac{\tilde{\vecx}_i^\top \vecv_0}{\lambda_o\sqrt{n}}\right) \psi \left(\frac{\langle \vecx_i-\tilde{\vecx}_i, \vecbeta^*\rangle+\xi_i}{\lambda_o \sqrt{n}}\right)\right|+	\mbb{E}^* \sup_{ \vecv \in  r_1 \mbb{B}^d_1 \cap r_2 \mbb{B}^d_2 \cap r_\Sigma \mbb{B}^d_\Sigma} \sum_{i=1}^n 
		b_i \varphi \left(\frac{\tilde{\vecx}_i^\top \vecv}{\lambda_o\sqrt{n}}\right) \psi \left(\frac{\langle \vecx_i-\tilde{\vecx}_i, \vecbeta^*\rangle+\xi_i}{\lambda_o \sqrt{n}}\right).
	\end{align}
	For the first term of \eqref{ine:rad}, we set $\vecv_0 =0$.
	For the second term of \eqref{ine:rad}, from contraction principle (Theorem 11.5 of \cite{BouLugMas2013concentration}),  we  have
	\begin{align}
		&\mbb{E} ^*\sup_{\vecv \in r_1 \mbb{B}^d_1 \cap r_2 \mbb{B}^d_2 \cap r_\Sigma \mbb{B}^d_\Sigma} \sum_{i=1}^n b_i \varphi \left(\frac{\tilde{\vecx}_i^\top \vecv}{\lambda_o\sqrt{n}}\right) \psi \left(\frac{\langle \vecx_i-\tilde{\vecx}_i, \vecbeta^*\rangle+\xi_i}{\lambda_o \sqrt{n}}\right)     \leq	\mbb{E}^* \sup_{\vecv\in r_1 \mbb{B}^d_1 \cap r_2 \mbb{B}^d_2 \cap r_\Sigma \mbb{B}^d_\Sigma}  \sum_{i=1}^n   b_i \varphi \left(\frac{\tilde{\vecx}_i^\top \vecv}{\lambda_o\sqrt{n}}\right),
	\end{align}	
	and from the fact that $\varphi$ is $\frac{1}{2}$-Lipschitz and $\varphi(0)=0$, and contraction principle (Theorem 11.6 in \cite{BouLugMas2013concentration}), 
	\begin{align}
		\mbb{E}^* \sup_{\vecv \in  r_1 \mbb{B}^d_1 \cap r_2 \mbb{B}^d_2 \cap r_\Sigma \mbb{B}^d_\Sigma} \sum_{i=1}^n b_i \varphi \left(\frac{\tilde{\vecx}_i^\top \vecv}{\lambda_o\sqrt{n}}\right) &\leq\frac{1}{\lambda_o\sqrt{n}}	\mbb{E}^*\sup_{\vecv \in  r_1 \mbb{B}^d_1 \cap r_2 \mbb{B}^d_2 \cap r_\Sigma \mbb{B}^d_\Sigma}   \sum_{i=1}^n   b_i \tilde{\vecx}_i^\top \vecv .
	\end{align}
	and from the basic property of the expectation, we have
	\begin{align}
		\label{ine:hub-stoc-upper}
		&\lambda_o^2  \mbb{E} \sup_{ \vecv \in r_1 \mbb{B}^d_1 \cap r_2 \mbb{B}^d_2 \cap r_\Sigma \mbb{B}^d_\Sigma} \left| \sum_{i=1}^n b_i \varphi \left(\frac{\tilde{\vecx}_i^\top \vecv}{\lambda_o\sqrt{n}}\right) \psi \left(\frac{\langle \vecx_i-\tilde{\vecx}_i, \vecbeta^*\rangle+\xi_i}{\lambda_o \sqrt{n}}\right)    \right|\nonumber\\
		&\leq	
		\lambda_o\sqrt{n}	\mbb{E}\sup_{\vecv\in r_1 \mbb{B}^d_1 \cap r_2 \mbb{B}^d_2 \cap r_\Sigma \mbb{B}^d_\Sigma}  \sum_{i=1}^n  \frac{1}{n} b_i \tilde{\vecx}_i^\top \vecv\nonumber\\
		&\stackrel{(a)}{\leq}	
		\lambda_o \sqrt{n}\left(\sqrt{2\frac{\log d}{n}}+\tau_\vecx\frac{\log d}{n}\right)r_1\nonumber\\
		&\stackrel{(b)}{\leq}	 \lambda_o\sqrt{n}c_{r_1} \left(\sqrt{2}
		+\tau_\vecx\sqrt{\frac{\log (d/\delta)}{n}}\right)\sqrt{\frac{s\log(d/\delta)}{n}}\|\Sigma^\frac{1}{2}\vecv\|_2,
	\end{align}	
	where (a) follows from Lemma \ref{l:1e} and (b) follows from $\log(d)\leq \log(d/\delta)$, $1\leq \log(d/\delta)$, $r_1 = c_{r_1}\sqrt{s}r_\Sigma$ and $c_{r_1}\geq 1$.

	Combining \eqref{ap:delta_upper} and  \eqref{ine:hub-stoc-upper}, we have
	\begin{align}
		\label{ap:delta_upper2}
			\lambda_o^2\Delta &\leq 3\lambda_o\sqrt{n} c_{r_1} \left(1+\frac{K^2}{\tau_\vecx}
			+\tau_\vecx\sqrt{\frac{\log (d/\delta)}{n}}\right) \sqrt{\frac{s\log(d/\delta)}{n}}\|\Sigma^\frac{1}{2}\vecv\|_2,
		\end{align}
		and combining  \eqref{ap:h_bellow},  the proof is complete.
\end{proof}

\section{Proofs of  Propositions \ref{p:cwpre}, \ref{p:main:out} and   \ref{p:main:out2}, and Lemma \ref{l:w2}}
\label{sec:proof:sec:keypropositions}
\subsection{Proof of Proposition \ref{p:cwpre}}
\label{sec:robust-pre}
\begin{proof}
	We note that this proof is similar to the one of Lemma 2 of \cite{FanWanZhu2021Shrinkage}.
	For any $M\in \mathfrak{M}_{r}$, we have
	\begin{align}
		\frac{1}{n} \sum_{i=1}^n  \langle\tilde{\vecx}_i \tilde{\vecx}_i^\top,M\rangle = \underbrace{\frac{1}{n} \sum_{i=1}^n  \langle\tilde{\vecx}_i \tilde{\vecx}_i^\top,M\rangle -\mbb{E}\frac{1}{n} \sum_{i=1}^n   \langle\tilde{\vecx}_i \tilde{\vecx}_i^\top,M\rangle}_{T_4}+ \mbb{E}\frac{1}{n} \sum_{i=1}^n  \langle\tilde{\vecx}_i \tilde{\vecx}_i^\top,M\rangle.
	\end{align}
	First, we evaluate $T_4$.
	We note that, for any $1\leq j_1,j_2\leq d$,
	\begin{align}
		\mbb{E} \tilde{x}_{i_{j_1}}^2\tilde{x}_{i_{j_2}}^2 &\leq \sqrt{\mbb{E}\tilde{x}_{i_{j_1}}^4} \sqrt{\mbb{E}\tilde{x}_{i_{j_2}}^4} \leq K^4,\quad \mbb{E} \tilde{x}_{i_{j_1}}^{2p}\tilde{x}_{i_{j_2}}^{2p}\leq 
		\tau_\vecx^{2(p-2)}\mbb{E} \tilde{x}_{i_{j_1}}^2\tilde{x}_{i_{j_2}}^2 \leq \tau_\vecx^{2(p-2)} K^4.
	\end{align}
	From Bernstein's inequality (Lemma 5.1 of \cite{Dir2015Tail}), we have
	\begin{align}
		 \mbb{P}\left(\frac{1}{n}\sum_{i=1}^n\tilde{\vecx}_{i_j}\tilde{\vecx}_{i_j}^\top -\mbb{E}\sum_{i=1}^n\frac{1}{n}\tilde{\vecx}_{i_j}\tilde{\vecx}_{i_j}^\top \geq K^2\sqrt{2\frac{t}{n}}+\frac{\tau_{\vecx}^2t}{n}\right)\leq e^{-t}.
	\end{align}
	From the union bound, we have
	\begin{align}
		\mbb{P}\left(\left\|\frac{1}{n}\sum_{i=1}^n\tilde{\vecx}_{i_j}\tilde{\vecx}_{i_j}^\top -\frac{1}{n}\sum_{i=1}^n\mbb{E}\tilde{\vecx}_{i_j}\tilde{\vecx}_{i_j}^\top \right\|_\infty \leq \sqrt{2}K^2\sqrt{\frac{\log(d/\delta)}{n}}+\tau_{\vecx}^2 \frac{\log(d/\delta)}{n}\right)&\geq 1-\delta.
	\end{align}
	From H{\"o}lder's inequality, we have
	\begin{align}
		\mbb{P}\left(T_1\leq \sqrt{2}K^2\sqrt{\frac{\log(d/\delta)}{n}}\|M\|_1+\tau_{\vecx}^2 \frac{\log(d/\delta)}{n}\|M\|_1\right)&\geq 1-\delta.
	\end{align}
	Next, we evaluate $\mbb{E} \left\langle\tilde{\vecx}_i \tilde{\vecx}_i^\top,M\right\rangle$. 
	We have
	\begin{align}
		\label{ine:tilde}
		\mbb{E} \left\langle\tilde{\vecx}_i \tilde{\vecx}_i^\top,M\right\rangle
		&=\mbb{E} \left\langle\tilde{\vecx}_i \tilde{\vecx}_i^\top-\vecx_i \vecx_i^\top,M\right\rangle+\mbb{E} \left\langle\vecx_i \vecx_i^\top-\Sigma,M\right\rangle+\mbb{E} \left\langle \Sigma,M\right\rangle\nonumber\\
		&=\mbb{E} \left\langle\tilde{\vecx}_i \tilde{\vecx}_i^\top-\vecx_i \vecx_i^\top,M\right\rangle+\mbb{E} \left\langle \Sigma,M\right\rangle
	\end{align}
	From  H{\"o}lder's inequality and the positive semi-definiteness of $M$, we have
	\begin{align}
		\mbb{E} \left\langle \Sigma,M\right\rangle\leq \|\Sigma\|_{\mr{op}}\|M\|_*= \|\Sigma\|_{\mr{op}}\mr{Tr}(M) = \|\Sigma\|_{\mr{op}}r^2.
	\end{align}
	From \eqref{eq1:a:l:1} and H{\"o}lder's inequality, we have
	\begin{align}
		\mbb{E} \left\langle\tilde{\vecx}_i \tilde{\vecx}_i^\top-\vecx_i \vecx_i^\top,M\right\rangle\leq 2\frac{K^4}{\tau_{\vecx}^2}\|M\|_1.
	\end{align}
	Finally, combining the arguments above, with probability at least $1-\delta$, we have
	\begin{align}
		\label{ine:peeling3}
		\left|\sum_{i=1}^n \frac{\left\langle\tilde{\vecx}_i \tilde{\vecx}_i^\top,M\right\rangle}{n}\right|
		&\leq  \left(\sqrt{2}K^2\sqrt{\frac{\log(d/\delta)}{n}}+\tau_{\vecx}^2 \frac{\log(d/\delta)}{n}+2\frac{K^4}{\tau_{\vecx}^2}\right)\|M\|_1+\|\Sigma\|_{\mr{op}}r^2.
	\end{align}
\end{proof}

Define $\mathfrak{M}_{\vecv,r_2} = \{M\in \mbb{R}^{d\times d}\,:\, M = \vecv \vecv^\top,\,\|\vecv\|_2 = r_2\}$.
\subsection{Proof of Proposition \ref{p:main:out}}
	We note that, from H{\"o}lder's inequality and $|w_i|\leq 1/n$, 
	\begin{align}
		\left|\sum_{i \in \mc{O}}\hat{w}_iu_i \tilde{\vecX}_i^\top\vecv  \right|^2 &\stackrel{(a)}{\leq}   c^2\frac{o}{n}\sum_{i =1}^n\hat{w}_i |\tilde{\vecX}_i^\top \vecv|^2.
	\end{align}
	We focus on $\sum_{i \in \mc{O}}\hat{w}_i |\tilde{\vecX}_i^\top \vecv|^2$.
	For any $\vecv \in \mbb{R}^d$ such that $\|\vecv\|_2= r_2$, 
	\begin{align}
		\sum_{i =1}^n\hat{w}_i (\tilde{\vecX}_i^\top \vecv)^2 &= \sum_{i =1}^n\hat{w}_i (\tilde{\vecX}_i^\top \vecv)^2 -\lambda_*\|\vecv\|_1^2+\lambda_*\|\vecv\|_1^2\nonumber\\
		&\stackrel{(a)}{\leq} \sup_{M\in \mathfrak{M}_{r}}\left(\sum_{i =1}^n\hat{w}_i\left\langle \tilde{\vecX}_i\tilde{\vecX}_i^\top, M\right\rangle -\lambda_*\|M\|_1\right)+\lambda_*\|\vecv\|_1^2\nonumber\\
		&\stackrel{(b)}{\leq}  \tau_{suc}+\lambda_*\|\vecv\|_1^2,
	\end{align}
	where (a) follows from the fact that $\mathfrak{M}_{\vecv,r_2} \subset \mathfrak{M}_{r_2}$, and (b) follows from \eqref{ine:optM}.
	Combining the arguments above, for any $\vecv \in   r_1 \mbb{B}^d_1 \cap r_2 \mbb{B}^d_2 \cap r_\Sigma \mbb{B}^d_\Sigma$, we have
	\begin{align}
		\sum_{i \in \mc{O}}\hat{w}_iu_i \tilde{\vecX}_i^\top\vecv  &\stackrel{(a)}{\leq} \sqrt{2}  c\sqrt{\frac{o}{n}} \sqrt{\tau_{suc}} + \sqrt{2}c \sqrt{\frac{o}{n}} \sqrt{\lambda_*} \|\vecv\|_1\nonumber\\
		&= \sqrt{2}  c \sqrt{\frac{o}{n}}  \sqrt{\frac{\|\Sigma\|_{\mr{op}}}{1-\varepsilon}r^2_2} + \sqrt{2}c \sqrt{\frac{o}{n}} \sqrt{c_*\left(\sqrt{2}K^2\sqrt{\frac{\log(d/\delta)}{n}}+\tau_{\vecx}^2 \frac{\log(d/\delta)}{n}+2\frac{K^4}{\tau_{\vecx}^2}\right)} \|\vecv\|_1\nonumber\\
		&\stackrel{(b)}{\leq} 2cc_*'\left( c_{r_2}\|\Sigma^\frac{1}{2}\|_{\mr{op}}\sqrt{\frac{o}{n}}  +  c_{r_1}\sqrt{s}\sqrt{K^2\sqrt{\frac{\log(d/\delta)}{n}}+\tau_{\vecx}^2 \frac{\log(d/\delta)}{n}+\frac{K^4}{\tau_{\vecx}^2}} \sqrt{\frac{o}{n}}\right)r_\Sigma,
	\end{align}
	where (a) follows the triangular inequality and (b) follows from $c_{*}' = \max\{\frac{1}{1-\varepsilon},c_*\}$, $r_1 = c_{r_1}\sqrt{s}r_\Sigma$ and $r_2 = c_{r_2}r_\Sigma$.

\subsection{Proof of Proposition \ref{p:main:out2}}
	We note that, from H{\"o}lder's inequality, we have 
	\begin{align}
		\left|\sum_{i \in I_m}\frac{u_i \tilde{\vecx}_i^\top\vecv }{n} \right|^2  &\leq \sum_{i \in I_m}\frac{1}{n}u_i^2 \sum_{i \in I_m}\frac{1}{n}(\tilde{\vecx}_i^\top\vecv)^2  \leq c^2\frac{m}{n}\sum_{i =1}^n\frac{1}{n} (\tilde{\vecx}_i^\top\vecv)^2 .
	\end{align}
	From the proof of Proposition \ref{p:cwpre} and  H{\"o}lder's inequality, for any $ \vecv\in  r_1 \mbb{B}^d_1 \cap r_2 \mbb{B}^d_2 \cap r_\Sigma \mbb{B}^d_\Sigma$, we have
	\begin{align}
		\sum_{i =1}^n\frac{(\tilde{\vecx}_i^\top\vecv)^2 }{n} \leq   \left(2K^2\sqrt{\frac{\log(d/\delta)}{n}}+\tau_{\vecx}^2 \frac{\log(d/\delta)}{n}+2\frac{K^4}{\tau_{\vecx}^2}\right)r_1^2+\|\Sigma\|_{\mr{op}}r_2^2.
	\end{align}
	From triangular inequality, $r_1 = c_{r_1}\sqrt{s}r_\Sigma$ and $r_2 = c_{r_2}r_\Sigma$,  for any $ \vecv\in  r_1 \mbb{B}^d_1 \cap r_2 \mbb{B}^d_2 \cap r_\Sigma \mbb{B}^d_\Sigma$, we have
	\begin{align}
		\sum_{i \in I_m}\frac{1}{n}u_i \tilde{\vecx}_i^\top\vecv  \leq 2c\left(c_{r_2}\|\Sigma^\frac{1}{2}\|_{\mr{op}}\sqrt{\frac{m}{n}} + c_{r_1}\sqrt{s}\sqrt{K^2\sqrt{\frac{\log(d/\delta)}{n}}+\tau_{\vecx}^2 \frac{\log(d/\delta)}{n}+\frac{K^4}{\tau_{\vecx}^2}}\sqrt{\frac{m}{n}}\right)r_\Sigma.
	\end{align}

\section{Proofs of Theorems \ref{t:main:no}  and \ref{t:main}}
 \subsection{Proof of Theorem \ref{t:main:no}}
To prove Theorem \ref{t:main:no},
it is sufficient that we confirm \eqref{ine:det:main:0-1} - \eqref{ine:det:main:0-4} hold under the assumption of Theorem \ref{t:main:no}.
\subsubsection{Confirming \eqref{ine:det:main:0-1} and \eqref{ine:det:main:0-2}}
First, we confirm \eqref{ine:det:main:0-1}.
We note that, from $\tau_\vecx = \sqrt{\frac{n}{\log(d/\delta)}}$,
\begin{align}
	\label{ine:condno}
	 K^4c_{r_1}\frac{\sqrt{s}}{\tau_{\vecx}^3}\stackrel{(a)}{\leq} c_{r_1}\sqrt{s\frac{\log(d/\delta)}{n}},\quad
	 K^4 s^\frac{3}{4}\left(\frac{1}{\tau_{\vecx}}\right)^{ 3-\frac{1}{4}}\stackrel{(b)}{\leq}  c_{r_1}\sqrt{s\frac{\log(d/\delta)}{n}},
\end{align}
where (a)  and (b) follows from \eqref{ine:tp2:no}.
Then, from Proposition \ref{p:main1:no}, for any $\vecv \in r_1 \mbb{B}^d_1 \cap r_2 \mbb{B}^d_2 \cap r_\Sigma \mbb{B}^d_\Sigma$, we have
\begin{align}
	\label{ine:t:main:no:1}
		&\sup_{\vecv  \in r_1 \mbb{B}^d_1 \cap r_2 \mbb{B}^d_2 \cap r_\Sigma \mbb{B}^d_\Sigma}\left|\sum_{i=1}^n  \frac{1}{n}h\left(\frac{y_i-\langle \tilde{\vecx}_i, \vecbeta^*\rangle}{\lambda_o \sqrt{n}}\right)\langle \tilde{\vecx}_i,\vecv\rangle \right|\leq 16c_{r_1} \sqrt{s\frac{\log(d/\delta)}{n}}r_\Sigma.
	\end{align}

Next, we confirm \eqref{ine:det:main:0-2}.
We note that, from $\tau_\vecx = \sqrt{\frac{n}{\log(d/\delta)}}$,
\begin{align}
	&K^2\sqrt{\frac{2}{\lambda_o\sqrt{n}}} \left(\sqrt{\sigma+\|\vecbeta^*\|_1\frac{K^4}{\tau_{\vecx}^3}}+\sqrt{1+c_{r_1}\sqrt{s}\frac{K^4}{\tau_{\vecx}^3}}\right)+12K^4 c_{r_1}^2\frac{s} {\tau_{\vecx}^2} \stackrel{(a)}{\leq } 	K^2\sqrt{\frac{18(\sigma+1)}{\lambda_o\sqrt{n}}}+\frac{1}{6} \stackrel{(b)}{\leq }  \frac{1}{2},\nonumber\\
	&2+\frac{K^2}{\tau_\vecx}+\tau_\vecx\sqrt{\frac{\log (d/\delta)}{n}}\stackrel{(c)}{\leq } 4,
\end{align}
where (a) and (c) follows from \eqref{ine:tp2:no}, (b) follows from the definition of $\lambda_o$.

Then, from Proposition \ref{p:main:sc:no}, for any $\vecv \in r_1 \mbb{B}^d_1 \cap r_2 \mbb{B}^d_2$ such that $\|\Sigma^\frac{1}{2}\vecv \|_2 = r_\Sigma$, we have
\begin{align}
	\label{ine:p:main:sc:no:1}
	& \sum_{i=1}^n \frac{\lambda_o}{\sqrt{n}}\left(-h\left(\frac{y_i-\langle \tilde{\vecx}_i,\vecbeta^*+\vecv\rangle}{\lambda_o \sqrt{n}}\right) +h\left(\frac{y_i-\langle \tilde{\vecx}_i, \vecbeta^*\rangle}{\lambda_o \sqrt{n}}\right)\right) \tilde{\vecx}_i^\top \vecv\geq  \frac{\|\Sigma^\frac{1}{2}\vecv\|_2^2}{2}-12\lambda_o \sqrt{n} c_{r_1}\sqrt{\frac{s\log (d/\delta)}{n}}\|\Sigma^\frac{1}{2}\vecv\|_2.
\end{align}

Therefore, we see that \eqref{ine:det:main:0-1} and  \eqref{ine:det:main:0-2} hold with
\begin{align}
	\label{eq:no:b}
	r_{a,\Sigma} = r_{b,\Sigma} = 16\lambda_o \sqrt{n} c_{r_1}\sqrt{\frac{s\log (d/\delta)}{n}},\quad b = 1/2.
\end{align}

\subsubsection{Confirming \eqref{ine:det:main:0-3}}
Thirdly, we confirm \eqref{ine:det:main:0-3}. From \eqref{eq:no:b},
we see that
\begin{align}
		\frac{r_{a,\Sigma}}{c_{r_1}\sqrt{s}} 
		&=16\lambda_o\sqrt{n}\sqrt{\frac{\log(d/\delta)}{n}} \quad\text{and}\quad 
			\frac{\lambda_s +  \frac{ r_{a,\Sigma}}{c_{r_1}\sqrt{s}}}{\lambda_s -  \frac{r_{a,\Sigma}}{c_{r_1}\sqrt{s}} } = \frac{c_s\frac{c_{\mr{RE}}+1}{c_{\mr{RE}}-1}+1}{c_s\frac{c_{\mr{RE}}+1}{c_{\mr{RE}}-1}-1}\leq c_{\mr{RE}}.
		\end{align}

	\subsubsection{Confirming \eqref{ine:det:main:0-4}}
		Lastly, we confirm \eqref{ine:det:main:0-4}. From  \eqref{eq:no:b}, the definition of $\lambda_s$  and $c_{\mr{RE}}>1$, we have
		\begin{align}
			&\frac{2}{b} \left(r_{a,\Sigma}+r_{b,\Sigma} + c_{r_1}\sqrt{s}\lambda_s\right)\leq 12c_s \frac{c_{\mr{RE}}+1}{c_{\mr{RE}}-1} r_{a,\Sigma} =12 c_{r_1}\sqrt{s}\lambda_s,
			\end{align}
			and we see the condition about $r_\Sigma$ is satisfied. From the definition, conditions about $r_1$ and $r_2$ are clearly satisfied.

			\subsection{Proof of Theorem \ref{t:main}}
			\label{sec:proof}
			To prove Theorem \ref{t:main},
			it is sufficient that we confirm \eqref{ine:det:main:0-1} - \eqref{ine:det:main:0-4} hold under the assumption of Theorem \ref{t:main}.

			\subsubsection{Confirming \eqref{ine:det:main:0-1} and \eqref{ine:det:main:0-2} }
			We note that, from $r_\Sigma \leq 1$, $\lambda_o\sqrt{n}\geq 1$, we have $(\log(d/\delta))/n\leq 1$.
			From  \eqref{ine:tp2}, we have
			\begin{align}
				\label{assumption}
				\max \left\{K^4 \left(\frac{\log(d/\delta)}{n}\right)^\frac{1}{4},  \frac{K^4}{c_{r_1}}s^\frac{1}{4}\left(\frac{\log(d/\delta)}{n}\right)^{ \frac{3}{16}}\right\}\leq 1.
			\end{align}
			From $K\geq 1$ and  \eqref{ine:tp2}, we have
			\begin{align}
				\label{assumption1}
				(K^4+K^2+1)\frac{c_{r_1}^2}{c_{r_2}^2\|\Sigma^\frac{1}{2}\|_{\mr{op}}} s \sqrt{\frac{\log(d/\delta)}{n}}\leq 1.
			\end{align}
			From  $(\log(d/\delta))/n\leq 1$ and \eqref{ine:tp2-2}, we have
			\begin{align}
				\label{assumption2}
				\max \left\{ K^2 \left(\frac{\log(d/\delta)}{n}\right)^\frac{1}{4}, K^4 \max\{ \|\vecbeta^*\|_1, c_{r_1}\sqrt{s}\} \left(\frac{\log(d/\delta)}{n}\right)^\frac{3}{4},72 K^4c_{r_1}^2 s \sqrt{\frac{\log(d/\delta)}{n}}\right\}\leq 1.
			\end{align}

			First, we confirm \eqref{ine:det:main:0-1}.
			From  $\{1,\cdots,n\} = \mc{I}\cap I_\geq+\mc{I}\cap I_< + \mc{O}$, we see
			\begin{align}
				\label{ine:outin}
				& \sum_{i=1}^n \frac{1}{n}h\left(\frac{\hat{w}_iW_i-\langle \hat{w}_i\tilde{\vecX}_i, \vecbeta^*\rangle}{\lambda_o \sqrt{n}}\right) \hat{w}_i\tilde{\vecX}_i^\top \vectheta_\eta \nonumber\\
				&\leq  \left|\sum_{i=1}^n \frac{1}{n}h\left(\frac{y_i-\langle \tilde{\vecx}_i, \vecbeta^*\rangle}{\lambda_o \sqrt{n}}\right) \tilde{\vecx}_i^\top \vectheta_\eta\right|+ \left|\sum_{i\in \mc{O}\cup (\mc{I}\cap I_<)} \frac{1}{n}h\left(\frac{y_i-\langle \tilde{\vecx}_i, \vecbeta^*\rangle}{\lambda_o \sqrt{n}}\right) \tilde{\vecx}_i^\top \vectheta_\eta \right|+  \left|\sum_{i \in \mc{O}} h\left(\frac{\hat{w}_iW_i-\langle \hat{w}_i\tilde{\vecX}_i, \vecbeta^*\rangle}{\lambda_o \sqrt{n}}\right) \hat{w}_i\tilde{\vecX}_i^\top \vectheta_\eta\right|.
			\end{align}

We note that, from the definition of $\tau_\vecx$ and $(\log(d/\delta))/n\leq 1$ and \eqref{assumption}, we have
\begin{align}
	\tau_\vecx \frac{\sqrt{s}\log (d/\delta)}{n} \leq \sqrt{s\frac{\log (d/\delta)}{n}},\quad
	 K^4 \frac{\sqrt{s}}{\tau_{\vecx}^3} \leq \sqrt{s\frac{\log(d/\delta)}{n}},\quad K^4 s^\frac{3}{4}\left(\frac{1}{\tau_{\vecx}}\right)^{ 3-\frac{1}{4}}\leq  c_{r_1}\sqrt{s\frac{\log(d/\delta)}{n}}.
\end{align}
Then, from Proposition \ref{p:main1:no}, for any $\vecv \in r_1 \mbb{B}^d_1 \cap r_2 \mbb{B}^d_2 \cap r_\Sigma \mbb{B}^d_\Sigma$, we have
\begin{align}
	\label{ine:t:main:1}
		&\sup_{\vecv  \in r_1 \mbb{B}^d_1 \cap r_2 \mbb{B}^d_2 \cap r_\Sigma \mbb{B}^d_\Sigma}\left|\sum_{i=1}^n  \frac{1}{n}h\left(\frac{y_i-\langle \tilde{\vecx}_i, \vecbeta^*\rangle}{\lambda_o \sqrt{n}}\right)\langle \tilde{\vecx}_i,\vecv\rangle \right|\leq 16c_{r_1} \sqrt{s\frac{\log(d/\delta)}{n}}r_\Sigma.
	\end{align}

			We note that, from the definition of $\tau_\vecx$ and from \eqref{assumption1}, we have
			\begin{align}
				\label{ine:robust}
				c_{r_1}\sqrt{s}\sqrt{K^2\sqrt{\frac{\log(d/\delta)}{n}}+\tau_{\vecx}^2 \frac{\log(d/\delta)}{n}+\frac{K^4}{\tau_{\vecx}^2}}\leq		c_{r_2}\|\Sigma^\frac{1}{2}\|_{\mr{op}}.
			\end{align}
			Additionally, note that $-1\leq h(\cdot)\leq 1$. Then from Proposition \ref{p:main:out} with $c=1$, and 
			from Proposition \ref{p:main:out2} with $m = 3o$ and $c=1$, we have
			\begin{align}
				\max\left\{ \left|\sum_{i \in \mc{O}} h\left(\frac{\hat{w}_iW_i-\langle \hat{w}_i\tilde{\vecX}_i, \vecbeta^*\rangle}{\lambda_o \sqrt{n}}\right) \hat{w}_i\tilde{\vecX}_i^\top \vectheta_\eta\right|,\left|\sum_{i\in \mc{O}\cup (\mc{I}\cap I_<)} \frac{1}{n}h\left(\frac{y_i-\langle \tilde{\vecx}_i, \vecbeta^*\rangle}{\lambda_o \sqrt{n}}\right) \tilde{\vecx}_i^\top \vectheta_\eta \right|\right\}&\leq  2c_*' c_{r_2}\|\Sigma^\frac{1}{2}\|_{\mr{op}} \sqrt{\frac{o}{n}} r_\Sigma.
			\end{align}
			From the arguments above and from $c_*\geq 1$, for any  $\vecv \in r_1 \mbb{B}^d_1 \cap r_2 \mbb{B}^d_2 \cap r_\Sigma \mbb{B}^d_\Sigma$ we have
			\begin{align}
				& \sum_{i=1}^n \frac{1}{n}h\left(\frac{\hat{w}_iW_i-\langle \hat{w}_i\tilde{\vecX}_i, \vecbeta^*\rangle}{\lambda_o \sqrt{n}}\right) \hat{w}_i\tilde{\vecX}_i^\top \vecv \leq 16c_{r_1}\sqrt{\frac{s\log(d/\delta)}{n}}r_\Sigma+ 4c_*' c_{r_2}\|\Sigma^\frac{1}{2}\|_{\mr{op}}\sqrt{\frac{o}{n}} r_\Sigma.
			\end{align}

	Next, we confirm \eqref{ine:det:main:0-2}.
			From the same calculation of \eqref{ine:outin}, we have
			\begin{align}
				&\lambda_o\sqrt{n} \sum_{i=1}^n \left(-h\left(\frac{\hat{w}_iW_i-\langle \hat{w}_i\tilde{\vecX}_i,\vecbeta^*+\vectheta_\eta\rangle}{\lambda_o \sqrt{n}}\right) +h\left(\frac{\hat{w}_iW_i-\langle \hat{w}_i\tilde{\vecX}_i, \vecbeta^*\rangle}{\lambda_o \sqrt{n}}\right)\right)\hat{w}_i\tilde{\vecX}_i^\top \vectheta_\eta\nonumber \\
				&\geq  \lambda_o\sqrt{n} \sum_{i=1}^n \frac{1}{n}\left(-h\left(\frac{y_i-\langle \tilde{\vecx}_i,\vecbeta^*+\vectheta_\eta\rangle}{\lambda_o \sqrt{n}}\right) +h\left(\frac{y_i-\langle \tilde{\vecx}_i, \vecbeta^*\rangle}{\lambda_o \sqrt{n}}\right)\right)\tilde{\vecx}_i^\top \vectheta_\eta\nonumber\\
				&-\left|\lambda_o\sqrt{n} \sum_{i \in \mc{O} \cup (\mc{I} \cap I_<)} \frac{1}{n}\left(-h\left(\frac{y_i-\langle \tilde{\vecx}_i,\vecbeta^*+\vectheta_\eta\rangle}{\lambda_o \sqrt{n}}\right) +h\left(\frac{y_i-\langle \tilde{\vecx}_i, \vecbeta^*\rangle}{\lambda_o \sqrt{n}}\right)\right)\tilde{\vecx}_i^\top \vectheta_\eta\right|\nonumber\\
				&-\left|\lambda_o\sqrt{n} \sum_{i \in \mc{O}} \left(-h\left(\frac{\hat{w}_iW_i-\langle \hat{w}_i\tilde{\vecX}_i,\vecbeta^*+\vectheta_\eta\rangle}{\lambda_o \sqrt{n}}\right) +h\left(\frac{\hat{w}_iW_i-\langle \hat{w}_i\tilde{\vecX}_i, \vecbeta^*\rangle}{\lambda_o \sqrt{n}}\right)\right)\hat{w}_i\tilde{\vecX}_i^\top \vectheta_\eta\right|.
			\end{align}

We note that
\begin{align}
	&K^2\sqrt{\frac{2}{\lambda_o\sqrt{n}}} \left(\sqrt{\sigma+\|\vecbeta^*\|_1\frac{K^4}{\tau_{\vecx}^3}}+\sqrt{1+c_{r_1}\sqrt{s}\frac{K^4}{\tau_{\vecx}^3}}\right)+12K^4 c_{r_1}^2\frac{s} {\tau_{\vecx}^2} \stackrel{(a)}{\leq } 	K^2\sqrt{\frac{18(\sigma+1)}{\lambda_o\sqrt{n}}}+\frac{1}{6} \stackrel{(b)}{\leq }  \frac{1}{2},\nonumber\\
	&2+\frac{K^2}{\tau_\vecx}+\tau_\vecx\sqrt{\frac{\log (d/\delta)}{n}}\stackrel{(c)}{\leq }4,
\end{align}
where (a) and (c) follow from the definition of $\tau_\vecx$ and from \eqref{assumption2}, and  (b) follows from the definition of $\lambda_o$. 
Then, from Proposition \ref{p:main:sc:no}, for any $\vecv \in r_1 \mbb{B}^d_1 \cap r_2 \mbb{B}^d_2$ such that $\|\Sigma^\frac{1}{2}\vecv \|_2 = r_\Sigma$, we have
\begin{align}
	& \sum_{i=1}^n \frac{\lambda_o}{\sqrt{n}}\left(-h\left(\frac{y_i-\langle \tilde{\vecx}_i,\vecbeta^*+\vecv\rangle}{\lambda_o \sqrt{n}}\right) +h\left(\frac{y_i-\langle \tilde{\vecx}_i, \vecbeta^*\rangle}{\lambda_o \sqrt{n}}\right)\right) \tilde{\vecx}_i^\top \vecv\geq  \frac{\|\Sigma^\frac{1}{2}\vecv\|_2^2}{2}-15\lambda_o \sqrt{n} c_{r_1}\sqrt{\frac{s\log (d/\delta)}{n}}\|\Sigma^\frac{1}{2}\vecv\|_2.
\end{align}
From \eqref{ine:robust}, Propositions \ref{p:main:out} and \ref{p:main:out2} with $m=3o$ and $c=2$ and $c_*'\geq 1$, for any $\vectheta_\eta \in r_1 \mbb{B}^d_1 \cap r_2 \mbb{B}^d_2$ such that $\|\Sigma^\frac{1}{2}\vectheta_\eta \|_2 = r_\Sigma$, we  have
			\begin{align}
				&\left|\lambda_o\sqrt{n} \sum_{i \in \mc{O} \cup (\mc{I} \cap I_<)} \frac{1}{n}\left(-h\left(\frac{y_i-\langle \tilde{\vecx}_i,\vecbeta^*+\vectheta_\eta\rangle}{\lambda_o \sqrt{n}}\right) +h\left(\frac{y_i-\langle \tilde{\vecx}_i, \vecbeta^*\rangle}{\lambda_o \sqrt{n}}\right)\right)\tilde{\vecx}_i^\top \vectheta_\eta\right|\nonumber\\
				&+\left|\lambda_o\sqrt{n} \sum_{i \in \mc{O}} \left(-h\left(\frac{\hat{w}_iW_i-\langle \hat{w}_i\tilde{\vecX}_i,\vecbeta^*+\vectheta_\eta\rangle}{\lambda_o \sqrt{n}}\right) +h\left(\frac{\hat{w}_iW_i-\langle \hat{w}_i\tilde{\vecX}_i, \vecbeta^*\rangle}{\lambda_o \sqrt{n}}\right)\right)\hat{w}_i\tilde{\vecX}_i^\top \vectheta_\eta\right|\leq  16c_*' c_{r_2}\|\Sigma^\frac{1}{2}\|_{\mr{op}} \sqrt{\frac{o}{n}} r_\Sigma.
			\end{align}

			Therefore, we see that \eqref{ine:det:main:0-1} and \eqref{ine:det:main:0-2} holds with $b= 1/2$ and 
			\begin{align}
				\label{eq:b}
				r_{a,\Sigma}= r_{b,\Sigma} =\lambda_o \sqrt{n}\left(16c_{r_1}\sqrt{\frac{s\log (d/\delta)}{n}}+16c_*' c_{r_2}\|\Sigma^\frac{1}{2}\|_{\mr{op}}\sqrt{\frac{o}{n}}\right).
			\end{align}
			
	\subsubsection{Confirming \eqref{ine:det:main:0-3}}
			Thirdly, we confirm \eqref{ine:det:main:0-3}. From \eqref{eq:b}, we see that
			\begin{align}
					\frac{ r_{a,\Sigma}}{c_{r_1}\sqrt{s}} = \lambda_o \sqrt{n}\left(16\sqrt{\frac{\log (d/\delta)}{n}}+16c_*' \frac{c_{r_2}}{c_{r_1}}\|\Sigma^\frac{1}{2}\|_{\mr{op}}\sqrt{\frac{o}{sn}}\right)\quad \text{and}\quad
						\frac{\lambda_s +  \frac{r_{a,\Sigma}}{c_{r_1}\sqrt{s}}}{\lambda_s -  \frac{r_{a,\Sigma}}{c_{r_1}\sqrt{s}} } \leq c_{\mr{RE}}.
					\end{align}
							
	\subsubsection{Confirming \eqref{ine:det:main:0-4}}
	From 		Lastly, we confirm \eqref{ine:det:main:0-4}. From \eqref{eq:b}, the definition of $\lambda_s$  and $c_{\mr{RE}}>1$, we have
					\begin{align}
						&\frac{2}{b} \left(r_{a,\Sigma}+r_{b,\Sigma} + c_{r_1}\sqrt{s}\lambda_s\right)\leq 12c_s\frac{c_{\mr{RE}}+1}{c_{\mr{RE}}-1} r_{a,\Sigma}=  12 c_{r_1}\sqrt{s}\lambda_s,
						\end{align}
						and we see the condition about $r_\Sigma$ is satisfied. From the definition, conditions about $r_1$ and $r_2$ are clearly satisfied.
			
\end{document}